\documentclass{article} % For LaTeX2e
\usepackage{iclr2026_conference,times}

\usepackage{amsmath,amsfonts,bm}

\def\eqref#1{equation~\ref{#1}}
\def\1{\bm{1}}

\DeclareMathAlphabet{\mathsfit}{\encodingdefault}{\sfdefault}{m}{sl}
\SetMathAlphabet{\mathsfit}{bold}{\encodingdefault}{\sfdefault}{bx}{n}

\newcommand{\KL}{D_{\mathrm{KL}}}

\usepackage{hyperref}
\usepackage{url}
\usepackage{graphicx}
\usepackage{color,soul}
\usepackage{amsthm}
\allowdisplaybreaks
\usepackage{algorithmic}
\usepackage{algorithm}

\newcommand{\squishlist}{
 \begin{list}{$\bullet$}
 { \setlength{\itemsep}{0pt}
 \setlength{\parsep}{3pt}
 \setlength{\topsep}{3pt}
 \setlength{\partopsep}{0pt}
 \setlength{\leftmargin}{1.5em}
 \setlength{\labelwidth}{1em}
 \setlength{\labelsep}{0.5em} } }

\newcommand{\squishend}{
 \end{list} }

\newtheorem{theorem}{Theorem}[section]

\newtheorem{corollary}[theorem]{Corollary}

\title{Evaluating GFlowNet from partial episodes for stable and flexible policy-based training}

\author{
Puhua Niu \\
Texas A\&M University \\
College Station, TX 77843, USA \\
\texttt{niupuhua.123@tamu.edu}
\And
Shili Wu \\
Texas A\&M University \\
College Station, TX 77843, USA \\
\texttt{shiliwu@tamu.edu}
\AND
Xiaoning Qian \\
Texas A\&M University \\
College Station, TX 77843, USA \\
Brookhaven National Laboratory \\
Upton, NY 11973, USA \\
\texttt{xqian@tamu.edu}
}

\iclrfinalcopy % Uncomment for camera-ready version, but NOT for submission.
\begin{document}

\maketitle

\begin{abstract}
Generative Flow Networks (GFlowNets) were developed to learn policies for efficiently sampling combinatorial candidates by interpreting their generative processes as trajectories in directed acyclic graphs. In the value-based training workflow, the objective is to enforce the balance over partial episodes between the flows of the learned policy and the estimated flows of the desired policy, implicitly encouraging policy divergence minimization. The policy-based strategy alternates between estimating the policy divergence and updating the policy, but reliable estimation of the divergence under directed acyclic graphs remains a major challenge. This work bridges the two perspectives by showing that flow balance also yields a principled policy evaluator that measures the divergence, and an evaluation balance objective over partial episodes is proposed for learning the evaluator.  As demonstrated on both synthetic and real-world tasks, evaluation balance not only strengthens the reliability of policy-based training but also broadens its flexibility by seamlessly supporting parameterized backward policies and enabling the integration of offline data-collection techniques. Our code is available at~\href{https://github.com/niupuhua1234/Sub-EB}{github.com/niupuhua1234/Sub-EB}.
\end{abstract}

\section{Introduction}
Generative Flow Networks (GFlowNets) are generative models on combinatorial spaces $\mathcal{X}$, such as graphs formed by organizing nodes and edges in a particular way, or strings composed of alphabets in a specific ordering. GFlowNets aim at sampling $x\in \mathcal{X}$ with probability $\propto R(x)$, where $R(x)$ is a non-negative score function. The task is challenging as $|\mathcal{X}|$ can be too large to compute the normalization constant $Z^*:=\sum_{x\in\mathcal{X}} R(x)$ and the distribution modes can be highly isolated due to the combinatorial nature for efficient exploration. In GFlowNets~\citep{bengio2021flow,bengio2023gflownet}, generating or sampling $x\in \mathcal{X}$ is decomposed into incremental trajectories (episodes) that start from a null state, pass through intermediate states,
and end at $x$ as the desired terminating state. These trajectories $\tau\in \mathcal{T}$ can be viewed as the paths along a Directed Acyclic Graph~(DAG) with state $s\in\mathcal{S}$ and edge $(s\rightarrow s')\in\mathcal{E}$. Positive measures (unnormalized probability) of trajectories are viewed as the amount of \emph{flows} along the DAG, and $R(x)$ is the total flow of trajectories ending at $x$, so that sampled trajectories will end at $x$ with the probability $\propto R(x)$.

The core of the GFlowNet training problem can be understood as minimizing the discrepancy of the forward trajectory distribution $P_F(\tau)$ induced by a forward policy $\pi_F(s'|s)$ towards the backward trajectory distribution $P_B(\tau):=P_B(\tau|x)R(x)/Z^\ast$ induced by a given backward policy $\pi_B(s|s')$ and $R(x)$~\citep{bengio2023gflownet,malkin2022gflownets}. This is motivated by the fact that in real-world applications, sequential generation must be done in a forward manner. Besides, the marginalization $ P^\ast(x)=\sum_{\tau|x} P_B(\tau)$ trivially holds for all $x\in \mathcal{X}$, so any backward policy can be used to define a target backward trajectory distribution. Since evaluating $Z^\ast$ and thereby the normalized distribution $P_B(\tau)$ is considered intractable, directly optimizing some distributional divergence of $P_F(\tau)$ and $P_B(\tau)$ is not feasible. To circumvent this, value-based methods reformulate the problem of distributional matching as a flow-matching problem and leverage the balance conditions of flow values to derive training objectives. The basic condition $\mathbb{E}_{P_\mathcal{D}(s)}[\log \pi_F(s'|s)F(s)]=\mathbb{E}_{P_\mathcal{D}(s)}[\log \pi_B(s|s')F(s')]$ means that the edge flow value $F(s\rightarrow s')=\pi_F(s'|s)F(s)$ matches the target edge flow value in the reverse direction, where $P_\mathcal{D}(s)$ is the marginal state distribution induced by the data collection policy $\pi_\mathcal{D}(s'|s)$. This basic condition leads to various training objectives that implicitly encourage the alignment between forward and backward distributions. These objectives range from edge-wise formulations to subtrajectory-wise variants, providing diverse estimations of target flows and quantifications of flow imbalances~\citep{bengio2023gflownet,madan2023learning,malkin2022trajectory}. Alternatively, policy-based methods~\citep{malkin2022gflownets,niu2024gflownet, zimmermann2022variational} introduce an evaluation function $V(s)$ of the forward policy $\pi_F(\cdot|s)$, which approximates the Kullback-Leibler (KL) divergence between the distribution of forward subtrajectories (partial episodes) starting from state $s$ and that of backward subtrajectories ending at $s$. Then $V(s)$ is used to update the forward policy $\pi_F(\cdot|s)$. While policy updating based on $V(s)$ has been well-studied, how to reliably learn $V(s)$ remains an open problem. 

In light of the success of value-based methods that leverage flow balance conditions and the analogy between the state flow function $F(s)$ as the total amount of trajectory flows passing state 
$s$, and the evaluation function $V(s)$ as a measurement of distributional divergence at state $s$, it is worthwhile to investigate the relationship between the two quantities. It is known that the optimal forward policy and state flow function can be uniquely determined by the flow balance condition $\mathbb{E}_{\pi_F}[\log F(s)\pi_F(s'|s)]=\mathbb{E}_{\pi_F}[\log F(s')\pi_B(s|s')]$. We find that, for any fixed $\pi_F$, the solution of the state flow to this condition coincides with the exact KL divergence starting from state $s$, that is, the true evaluation function of $\pi_F$. This paves the way to derive balance-based objectives to learn $V(s)$ reliably. Our contributions are as follows:

\squishlist
     \item We establish the connection between balance conditions with respect to (w.r.t.) the state flow function $F$ and the evaluation function $V$. Fixing $\pi_F$, the expected balancing conditions of $\log F$ directly lead to a sufficient condition for $V$, which we call the Subtrajectory Evaluation Balance~(Sub-EB) condition. 
     \item We introduce the Sub-EB objective for reliably learning the evaluation function $V$, where subtrajectories (partial episodes) serve as the basic unit of flow balance. 
     \item Experimental results on both simulated and real-world datasets, including hypergrid modeling, biological and molecular sequence design, and Bayesian network structure learning, have demonstrated the effectiveness and reliability of Sub-EB for policy evaluation.
\squishend

\begin{figure*}[t!]\vspace{-0mm}
    \centering
\includegraphics[width=0.8\linewidth]{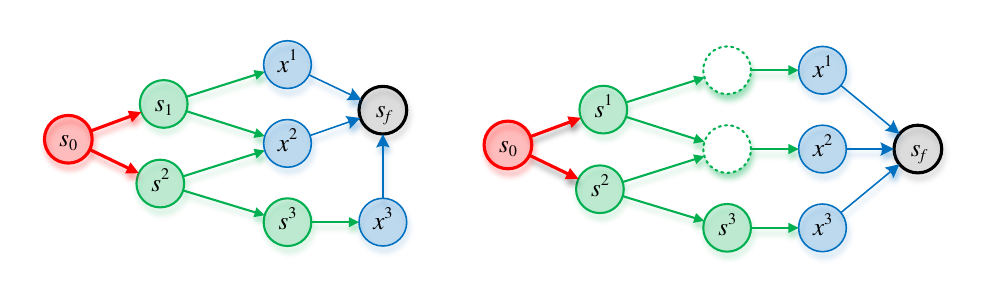}\vspace{-3mm}
    \caption{A graphical illustration of a DAG (left) and its graded version (right). Dotted circles represent dummy states, added during the conversion to a graded DAG.} 
    \label{graded-example}
\end{figure*}

\section{Preliminaries}
We restrict DAGs of GFlowNets to be \emph{graded} as any DAG can be equivalently converted to be graded by adding dummy non-terminating states~\citep{malkin2022gflownets}. In a DAG $\mathcal{G}:=(\mathcal{S}, \mathcal{E})$, element $s\in \mathcal{S}$ denotes a state, and element $(s\!\rightarrow\! s')\in \mathcal{E}(\subseteq \mathcal{S} \times \mathcal{S})$ denotes a directed edge. 
We define the index set of time horizons as $[H]:=\{0,\ldots,H\}$.  Being \emph{acyclic} means the state space $\mathcal{S}$ can be partitioned into disjoint subspaces: $\mathcal{S}_0,\ldots,\mathcal{S}_{H+1}$, where each element of $ \mathcal{S}_h$ is denoted as $s_h$ for $h\in [H+1]$. 
%We use $\{\prec,\succ,\preceq,\succeq\}$ to define the partial orders between states; for example, $\forall t<t': s_t \prec s_{t^\prime}$.
%Furthermore, being \emph{acyclic} means $\forall (s{\rightarrow}s')\in \mathcal{E}$: $s\prec s'$.
Being \emph{graded} further implies that actions are only allowed from $\mathcal{S}_h$ to $\mathcal{S}_{h+1}$. 
%As a result, $\mathcal{E}$ can be decomposed into disjoint edge subspaces: $\mathcal{E}_0,\ldots, \mathcal{E}_{H}$, where $e_h:=(s_h\rightarrow s_{h+1})$ is an element of $\mathcal{E}_h$.  
%For any $s \in \mathcal{S}$, we denote its parent set by $Pa(s):= \{ s' |( s'{\rightarrow}s) \in \mathcal{E} \}$ and its child set by $Ch(s):= \{ s' | (s{\rightarrow}s') \in \mathcal{E}\}$.  
The initial and final states are unique, that is, $S_0=\{s_0\}$ and $S_{H+1}=\{s_f\}$. The terminating state set, $S_H:=\mathcal{X}$ is associated with a score function $R: \mathcal{X} \rightarrow \mathbb{R}^{+}$. 
Furthermore, the (complete) trajectory set is defined as $\mathcal{T}:=\{\tau=(s_0\rightarrow \dots \rightarrow s_f)|\forall (s{\rightarrow}s') \in \tau: (s{\rightarrow}s')\in \mathcal{E} \}$. For $i<j \in [H+1]$, $\tau_{i:j}$, $\tau_{i:}$ and $\tau_{i:}$ denote subtrajectories (partial episodes) from $s_i$ to $s_j$, from $s_i$ to $s_f$, and from $s_0$ to $s_j$, respectively. %, and $\mathcal{T}_{i,j}$ to denote the corresponding subtrajectory set. 
 Denoting the natural $\sigma$-algebra on $\mathcal{T}$ as  $\sigma(\mathcal{T})$,  $P(\Gamma):=\sum_{\tau\in \Gamma} P(\tau)$  for any event $\Gamma\in \sigma(\mathcal{T})$. An exemplary DAG and its \emph{graded} version are shown in Fig.~\ref{graded-example}.

GFlowNet training aims to align the forward trajectory distribution $P_F(\tau)\!:=\prod_{i=0}^{H}\pi_F(s_{h+1}|s_h)$ with the backward trajectory distribution $P_B(\tau)\!:=\prod_{i=0}^{H}\pi_B(s_h|s_{h+1})$, where $\pi_F$ and $\pi_B$ are forward and backward \emph{Markovian} policies (transition distributions), respectively, and are assumed to satisfy $P_F(\tau)>0$ and $P_B(\tau)>0$ for any $\tau\in \mathcal{T}$.  Accordingly, $P_F(\tau_{i,j}|s_i)=\prod_{h=i}^{j-1} \pi_F(s_{h+1}|s_h)$, and $P_B(\tau_{i,j}|s_j)=\prod_{h=i}^{j-1}\pi_B(s_h|s_{h+1})$. Except that $\pi_B(x|s_f):=R(x)/Z^\ast$ for any $x\in \mathcal{X}$,  $\pi_B(s|s')$ can be arbitrary for intermediate edges, as $ P_B(x)=\sum_{\tau|x} P_B(\tau)=R(x)/Z^\ast$ always holds. Therefore, the optimal forward policy $\pi_F^\ast$, satisfying $P_F^\ast(\tau)=P_B(\tau)$ for all $\tau\in \mathcal{T}$, induces its marginal $P_{F^\ast}(x)=R(x)/Z^\ast$, thereby achieving the ultimate goal of GFlowNet. Moreover, under the \emph{Markovian} assumption~\citep{bengio2023gflownet}, the optimal forward policy $\pi_F^\ast$ corresponding to $\pi_B$ is unique.

\subsection{Value-based training} 
Traditional Variational Inference (VI) approaches that minimize the KL divergence $\mathcal{L}_{KL}(\theta)\!:=\KL(P_F(\tau;\theta)\|P_B(\tau))$ is infeasible as $Z^\ast$ is considered intractable~\citep{malkin2022gflownets}.
To circumvent the intractable $Z^\ast$, value-based methods introduce an \emph{unnormalized} and \emph{Markovian} distribution $F(\tau):=P_F(\tau)Z$, called (trajectory) \emph{flow}~\citep{bengio2023gflownet}, together with a total \emph{flow} $Z$. Accordingly, for any event $\Gamma\in \sigma(\mathcal{T})$, its flow $F(\Gamma):=\sum_{\tau\in \Gamma}P(\tau)$ . In particular, the edge flow and state flow are $F(s\rightarrow s'):=\sum_{\tau\ni (s\rightarrow s')}F(\tau)$ and $F(s):=\sum_{s'}F(s\rightarrow s')$. Furthermore, $F(s_0)=F(s_f)=Z$, and $\pi_F(s'|s)=F(s\!\rightarrow\!s')/F(s)$. Analogously, the optimal flow is  $F^\ast(\tau):=P_B(\tau)Z^\ast=P(\tau|x)R(x)$. Therefore, the optimality condition of GFlowNets is converted to $\forall \tau\in \mathcal{T}: F(\tau)=F^\ast(\tau)$. Thanks to the \emph{Markovian} assumption, the condition can be relaxed and reformulated to %a sufficient condition,
 the Sub-Trajectory Balance~(Sub-TB) condition~\citep{madan2023learning}. It requires $\forall i<j\in [H+1]$: 
\begin{align}
 \mathbb{E}_{P_\mathcal{D}(\tau_{i:j})}\left[\log \big(F(s_i)P_F(\tau_{i:j}|s_i)\big)\right]=\mathbb{E}_{P_\mathcal{D}(\tau_{i:j})}\left[\log \big(F(s_j)P_B(\tau_{i:j}|s_j)\big)\right], \label{Sub-TB}
\end{align} where $F(s_f)P_B(x|s_f):=R(x)$ for  $x\in \mathcal{X}$, and $P_\mathcal{D}(\tau_{i,j})=\sum_{\tau\ni\tau_{i,j}}P_\mathcal{D}(\tau)$ is induced by the offline data-collection policy $\pi_\mathcal{D}$. It is assumed that $P_{\mathcal{D}}(\tau)>0$ for any $\tau \in \mathcal{T}$~\citep{malkin2022gflownets}.  Without the assumption, a stricter alternative is to enforce flow balance for each individual subtrajectory, as in~\citet{madan2023learning}. The condition above can be interpreted as the flow value of a subtrajectory should match the target flow value in the reverse direction, which represents an (approximated) desired flow value.

Leveraging the balance condition, value-based methods typically use the Sub-TB objective to optimize the parameterized policy $\pi_F(s'|s;\theta)$ and state flow $F(s;\theta)$ toward their optimal solutions $\pi_F^\ast(s'|s)$ and $F^\ast(s)$. The objective is defined as follows:
\begin{gather}
\mathcal{L}_{F}(\theta):=\mathbb{E}_{P_\mathcal{D}(\tau)}\Big[\sum_{\tau_{i:j}}w_{j-i}\left(\delta_F(\tau_{i:j};\theta)\right)^2\Big],\quad
 \delta_F(\tau_{i:j};\theta):= \log \frac{P_F(\tau_{i:j}|s_i;\theta) F(s_i;\theta)}{P_B (\tau_{i:j}|s_j;\theta)F(s_j;\theta)} ,\label{SubTB-obj}
\end{gather}
where $w_{j-i}$ denotes the non-zero weight coefficient for subtrajectories that consist of $j-i$ edges. For practical gradient-based optimization, $\mathcal{L}_F(\theta)$ are approximated by $\widehat{\mathcal{L}}_F(\theta):=\frac{1}{K}\sum_{\tau \in \mathcal{D}}[\sum_{\tau_{i:j}\in \tau}w_{j-i}\delta_F(\tau_{i:j};\theta)]$, where $\mathcal{D}:=\{\tau^k:\tau^k\sim P_\mathcal{D}(\tau)\}_{k=1}^K$ is the set of samples.

\subsection{policy-based training}
Policy-based training methods for GFlowNets~\citep{niu2024gflownet} resemble the policy gradient algorithm in Reinforcement Learning (RL) and aim to minimize the KL divergence $\mathcal{L}_{KL}(\theta)$ as in traditional VI approaches. The methods follow the \textbf{actor-critic} framework~\citep{sutton2018reinforcement,haarnoja2018soft}.

\paragraph{Critic} In each training iteration, the critic (true evaluation function) $V^\dagger$ of actor $\pi_F$ is first computed to capture the policy gaps in terms of  KL divergences over subtrajectories, thereby facilitating the computation of the overall trajectory divergence. For any $h\in [H]$, $V^\dagger(s_h;\theta)$ is defined as:\footnote{\citet{niu2024gflownet} consider an additional total flow estimator $Z(\theta)$ to scale down the magnitude of $V^\dagger$. For notational compactness, we defer the discussion of integrating $Z$ into our method to Appendix~\ref{eval-Z}.}
\begin{align}
\!\mathbb{E}_{P_F(\tau_{h:}|s_h;\theta)}\Big[\sum_{i=h}^H\! R(s_i\!\rightarrow\!s_{i+1};\theta)\Big]\!=\log\!F^\ast(s_h)\!-\!\KL(P_F(\tau_{h:}|s_h;\theta)\|P_B(\tau_{h:}|s_h))\label{V-def}
\end{align}
where $R(s_i\!\rightarrow\!s_{i+1};\theta):=\log \frac{\widetilde{\pi}_B(s_i|s_{i+1})}{\pi_F(s_{i+1}|s_i;\theta)}$, and $\widetilde{\pi}_B=\pi_B$ expect that $\widetilde{\pi}_B(x|s_f):=R(x)$. The equality can be easily verified~\citep{niu2024gflownet} as shown in~(\ref{v-equi-form}) in the Appendix. Analytical computation of $V^\dagger$ is usually infeasible as $|\mathcal{T}|$ can be enormous, so $V^\dagger$ is modeled by an evaluation function $V$ with learnable parameters. We will discuss the learning objective of $V$ in the next section.  

\paragraph{Actor}\vspace{-3mm} To minimize $\mathcal{L}_{KL}(\theta)$, it is noted that $\nabla_\theta V^\dagger(s_0;\theta)=-\nabla_\theta \mathcal{L}_{KL}(\theta)$  since $F^\ast(s_0)(=Z^\ast)$ is a constant. Consequently, $-V^\dagger(s_0;\theta)$ can be a surrogate to the KL divergence. Further applying the policy gradient theorems in RL~\citep{agarwal2021theory}, $\nabla_\theta V^\dagger(s_0;\theta)$ can be simplified and approximated in terms of $V$ as follows:
\begin{gather}
\mathbb{E}_{P_F(\tau)}\Big[\sum_{h=0}^{H}A^\gamma(s_h\!\rightarrow\!s_{h+1})\nabla_\theta\log \pi_F(s_{h+1}|s_h;\theta)\Big],\label{policy-grad} 
\end{gather} where $A^\gamma(s_h\!\rightarrow\!s_{h+1})=\sum_{i=h}^{H}\gamma^{i-h}\left(R(s_i\!\rightarrow\!s_{i+1})+V(s_{i+1})-V(s_i)\right)$.
When increasing the hyperparameter $\gamma$ from $0$ to $1$, $A^\gamma(s_h\!\rightarrow\!s_{h+1})$ evolves from $R(s_h\!\rightarrow\! s_{h+1})+V(s_{h+1})-V(s_h)$ to  $\sum_{i=h}^{H}R(s_i\!\rightarrow\! s_{i+1})-V(s_h)$. Therefore, $\gamma$ enables the \textbf{variance-bias} trade-off during stochastic gradient descent~\citep{Schulmanetal_ICLR2016,schulman2017equivalence}, analogous to $w$ in the Sub-TB method. 

\section{Subtrajectory Evaluation Balance}\vspace{-3mm}
As discussed in the previous sections, the evaluation function $V$ plays a central role in policy-based methods. In this section, we first introduce the Sub-EB condition that characterizes the relationship between $\pi_F$ and $V$, which closely resembles the Sub-TB conditions between $\pi_F$ and $F$. We then present the Sub-EB objective for learning $V$. Further, we extend these results with corresponding conditions and objectives for $\pi_B$ and the backward evaluation function $W$.
\subsection{The balance condition and objective for forward policies}\vspace{-1mm}
Given a forward policy $\pi_F$ such that $P_F(\tau)>0$ for all $\tau \in \mathcal{T}$, the Subtrajectory Evaluation Balance (Sub-EB) condition for the associated evaluation function $V$ can be written as $\forall i<j\in [H+1]$: 
\begin{align}
\mathbb{E}_{P_F(\tau_{i:j})}\Big[\log \big(P_F(\tau_{i:j}|s_i)\exp V\left(s_i\right)\big)\Big]=\mathbb{E}_{P_F(\tau_{i:j})}\Big[\log \big(P_B(\tau_{i:j}|s_j)\exp V(s_j)\big)\Big],
 \label{Sub-EB}
\end{align} where $P_B(x|s_f)\exp V(s_f):=R(x)$ for $x\in \mathcal{X}$. The condition can be reformulated as: 
$\mathbb{E}_{P_F(s_i,s_j)}[V(s_i)-V(s_j)]=\mathbb{E}_{P_F(\tau_{i:j})}\left[\log \frac{P_B(\tau_{i:j}|s_j)}{P_F(\tau_{i:j}|s_i)}\right]=\mathcal{H}(P_F(s_i))-\mathcal{H}(P_F(s_j))-\mathbb{E}_{P_F(s_j)}[\KL(P_F(\tau_{i,j}|s_j)\|P_B(\tau_{i,j}|s_j))]
$, where $\mathcal{H}(\cdot)$ denotes the entropy of a distribution. Therefore, this condition intuitively requires that the difference of the learned divergences at $s_i$ and $s_j$ matches the true divergence over subtrajectories between the two states.

\begin{theorem}~\label{eb-the} Suppose $V$ is an arbitrary evaluation function over $\mathcal{S}$, and $F^\ast$ is the optimal flow induced by a backward policy $\pi_B$. Given a forward policy $\pi_F$,  
\begin{align}
    \forall h\in [H]: V(s_h)=\log F^\ast(s_h)-\KL(P_F(\tau_{h:}|s_h)\|P_B(\tau_{h:}|s_h)),
\end{align} if and only if $V$ satisfies the Sub-EB condition~(\ref{Sub-EB}).     
\end{theorem}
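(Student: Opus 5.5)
The plan is to reduce everything to the reward-sum representation of the true evaluation function in (\ref{V-def}). By that identity, with $R(s_i\!\rightarrow\!s_{i+1}):=\log \frac{\widetilde{\pi}_B(s_i|s_{i+1})}{\pi_F(s_{i+1}|s_i)}$, the right-hand side of the theorem equals
\begin{align*}
V^\dagger(s_h)=\mathbb{E}_{P_F(\tau_{h:}|s_h)}\Big[\sum_{i=h}^{H} R(s_i\!\rightarrow\!s_{i+1})\Big].
\end{align*}
So the claim becomes: $V=V^\dagger$ on $\mathcal{S}_0\cup\dots\cup\mathcal{S}_H$ if and only if (\ref{Sub-EB}) holds.

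Next I would rewrite (\ref{Sub-EB}) in reward form. For $i<j$, the quantity $\log\frac{P_B(\tau_{i:j}|s_j)}{P_F(\tau_{i:j}|s_i)}$ is exactly $\sum_{h=i}^{j-1}R(s_h\!\rightarrow\!s_{h+1})$. When $j=H+1$, the convention $P_B(x|s_f)\exp V(s_f):=R(x)$ together with $\pi_F(s_f|x)=1$ absorbs the terminal factor, so that $V(s_f)$ effectively becomes $0$ and the last edge contributes $\log R(x)$. The condition then reads
\begin{align*}
V(s_i)=\mathbb{E}\Big[\sum_{h=i}^{j-1}R(s_h\!\rightarrow\!s_{h+1})+V(s_j)\Big],
\end{align*}
a multi-step Bellman-type identity.

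One point needs care here: whether (\ref{Sub-EB}) is meant in expectation over the marginal $P_F(s_i)$ or conditionally on each $s_i$. A purely marginal identity constrains only $\mathbb{E}_{P_F(s_i)}V(s_i)$, so it cannot pin down $V$ statewise. I expect this to be the main obstacle. I would resolve it by reading the condition as holding for every $s_i$ with positive probability, i.e.\ with the inner expectation taken over $P_F(\tau_{i:j}|s_i)$. Since $P_F(\tau)>0$ for all $\tau$, every state has positive marginal probability, and this reading is the one under which the ``only if'' direction can possibly be statewise. I would state this reading explicitly and note that the ``if'' direction below also implies the marginal version by averaging over $P_F(s_i)$.

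For the direction ``$V=V^\dagger$ implies Sub-EB'', I would use the Markov property of $P_F$. I split the reward sum defining $V^\dagger(s_i)$ at time $j$ and condition on $s_j$. The tail expectation is $\mathbb{E}_{P_F(\tau_{j:}|s_j)}[\sum_{h\ge j}R]=V^\dagger(s_j)$, which gives the identity above for every $i<j\le H$. The case $j=H+1$ is the definition of $V^\dagger$ itself.

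For the converse, I would take $j=H+1$ in (\ref{Sub-EB}) for each $i\in[H]$. By the terminal convention this gives $V(s_i)=\mathbb{E}_{P_F(\tau_{i:}|s_i)}[\sum_{h=i}^{H}R(s_h\!\rightarrow\!s_{h+1})]=V^\dagger(s_i)$ directly, and applying (\ref{V-def}) finishes the proof. The remaining conditions with $j\le H$ are then automatically consistent by the first direction. As a sanity check, the same conclusion can be reached by backward induction on $i$ using only the one-step conditions ($j=i+1$).
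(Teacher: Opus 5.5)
Your argument is correct for the statement you actually prove, and the obstacle you flag is a real defect, not a matter of taste. As literally written, (\ref{Sub-EB}) averages over the marginal $P_F(\tau_{i:j})$, so $V$ enters only through $\mathbb{E}_{P_F(s_i)}[V(s_i)]$ and $\mathbb{E}_{P_F(s_j)}[V(s_j)]$. Suppose some layer $\mathcal{S}_h$, $h\in[H]$, has at least two states. Take $\epsilon\not\equiv 0$ supported on $\mathcal{S}_h$ with $\sum_{s_h}P_F(s_h)\epsilon(s_h)=0$. Then $V^\dagger+\epsilon$ satisfies every marginal Sub-EB identity but is not $V^\dagger$.

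The paper's proof crosses exactly this point. It passes from $\mathbb{E}_{P_F(s_h)\pi_F(s_{h+1}|s_h)}[\cdot]=0$ to $\mathbb{E}_{\pi_F(s_{h+1}|s_h)}[\cdot]=0$ for every $s_h$, citing only $P_F(s_h)>0$. Positive weights do not force conditional means of either sign to vanish. So the statewise version of (\ref{Sub-EB}) is the hypothesis under which Sub-EB $\Rightarrow$ formula holds, and you should present it as a correction of the statement rather than merely a reading.

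Your labels are swapped. Sub-EB $\Rightarrow$ formula is the ``if'' direction of the theorem as stated. Formula $\Rightarrow$ Sub-EB is the ``only if'' direction, and it does hold in the marginal form by averaging, as you note.

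Given the statewise hypothesis, your route differs from the paper's. The paper uses only the one-step conditions $j=i+1$. It turns them into the pointwise recursion $V(s_h)=\mathbb{E}_{\pi_F(s_{h+1}|s_h)}[\log\frac{\pi_B(s_h|s_{h+1})}{\pi_F(s_{h+1}|s_h)}+V(s_{h+1})]$ and unrolls it backward from $\mathcal{X}$ to $s_0$. For the other direction it telescopes one-step identities to reach every pair $(i,j)$.

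You instead use the single instance $j=H+1$. Under the statewise reading, that instance is literally the definition of $V^\dagger$, so your converse takes one line. You obtain the intermediate pairs by splitting $V^\dagger(s_i)$ at time $j$ via the Markov property.

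Your version is shorter and shows that the equivalence is essentially definitional. The paper's route, once repaired, proves a slightly stronger fact: the edge-level conditions alone already pin down $V$ and imply all longer ones, the analogue of DB sitting inside Sub-TB. You mention this only as a sanity check.
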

The corresponding proof can be found in Appendix~\ref{eb-proof}.

\begin{theorem}~\label{tb-the}
Suppose $F$ is an arbitrary state flow function over $\mathcal{S}$,  $\pi_F$ is an arbitrary forward policy, and $(F^\ast,\pi_F^\ast)$ is the optimal flow and forward policy induced by a backward policy $\pi_B$. Then,
\begin{align}
    \forall h\in [H]: \log F(s_h)=\log F^\ast(s_h)-\KL(P_F^\ast(\tau_{h:}|s_h)\|P_B(\tau_{h:}|s_h))
\end{align} and $\pi_F=\pi_F^\ast$ if and only if $(F,\pi_F)$ satisfies the Sub-TB condition~(\ref{Sub-TB}). 
\end{theorem}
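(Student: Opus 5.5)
The plan is to treat the two directions separately and to reduce the ``only if'' direction to Theorem~\ref{eb-the} once $\pi_F$ has been identified. First I would record that the formula in the statement simplifies. The optimal flow $F^\ast(\tau)=P_B(\tau)Z^\ast$ is Markovian with forward policy $\pi_F^\ast(s'|s)=F^\ast(s')\pi_B(s|s')/F^\ast(s)$. Hence $P_F^\ast(\tau_{h:}|s_h)=P_B(\tau_{h:}|s_h)$, the KL term vanishes, and the claimed characterization reads $F=F^\ast$ on $\mathcal{S}_0\cup\dots\cup\mathcal{S}_H$ together with $\pi_F=\pi_F^\ast$.

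\emph{``If'' direction.} Suppose $F=F^\ast$ and $\pi_F=\pi_F^\ast$. For each edge we have the pointwise identity $F^\ast(s)\pi_F^\ast(s'|s)=F^\ast(s')\pi_B(s|s')$. At the terminal edge $(x\rightarrow s_f)$ this holds with the convention $F(s_f)P_B(x|s_f):=R(x)$, because $F^\ast(x)=R(x)$. Telescoping the product along any $\tau_{i:j}$ gives $F(s_i)P_F(\tau_{i:j}|s_i)=F(s_j)P_B(\tau_{i:j}|s_j)$ pointwise. Taking $\mathbb{E}_{P_\mathcal{D}(\tau_{i:j})}$ of the logarithm then yields~(\ref{Sub-TB}).

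\emph{``Only if'' direction.} This is the main obstacle. Condition~(\ref{Sub-TB}) is stated only in expectation under $P_\mathcal{D}$, and one scalar equation per pair $(i,j)$ does not by itself pin down $(F,\pi_F)$. I would therefore read the Sub-TB condition as its role in the objective~(\ref{SubTB-obj}) suggests: it holds at the zero of $\mathcal{L}_F$, where every $w_{j-i}>0$ and $P_\mathcal{D}(\tau)>0$ for all $\tau$. Equivalently, it holds for every full-support $P_\mathcal{D}$. Either reading forces $\delta_F(\tau_{i:j})=0$ for every subtrajectory, and in particular the edge-wise balance $F(s)\pi_F(s'|s)=F(s')\pi_B(s|s')$. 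I would make this interpretation explicit, since it is exactly where the statement needs care. From edge balance I proceed by backward induction over the grades $H,H-1,\dots,0$. Summing the balance over the children $s'$ of $s$ and using $\sum_{s'}\pi_F(s'|s)=1$ gives $F(s)=\sum_{s'}F(s')\pi_B(s|s')$. This starts from $F(x)=R(x)$, which comes from the terminal edge. The optimal state flow $F^\ast(s)=\sum_{\tau\ni s}P_B(\tau)Z^\ast$ satisfies the same recursion with the same terminal values, so $F=F^\ast$ grade by grade. Dividing the edge balance by $F(s)=F^\ast(s)>0$ then gives $\pi_F=\pi_F^\ast$.

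As a consistency check, and as an alternative route for the flow part, I would note that once $\pi_F=\pi_F^\ast$ is known, setting $V:=\log F$ turns~(\ref{Sub-TB}) into the Sub-EB condition~(\ref{Sub-EB}) for the policy $\pi_F^\ast$. This is immediate in the pointwise form, and in the expected form it holds once $P_\mathcal{D}$ is replaced by $P_F^\ast$. Theorem~\ref{eb-the} then directly gives $\log F(s_h)=\log F^\ast(s_h)-\KL(P_F^\ast(\tau_{h:}|s_h)\|P_B(\tau_{h:}|s_h))$, which is the form in the statement. This makes transparent that the Sub-TB characterization is the special case of Theorem~\ref{eb-the} in which the actor is already optimal.
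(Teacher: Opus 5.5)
Your proof is correct. Its core for the direction Sub-TB $\Rightarrow$ optimality matches the paper's: read the expected condition as forcing pointwise edge balance, sum that balance over children to get $F(s)=\sum_{s'}\pi_B(s|s')F(s')$, and recurse backward from $F(x)=R(x)$ to obtain $F=F^\ast$ and then $\pi_F=\pi_F^\ast$. The paper justifies the pointwise reading by saying $\pi_{\mathcal D}$ is arbitrary with full support; you make this reading explicit and correctly note that a single fixed $P_{\mathcal D}$ would not suffice.

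The difference is in the framing around that core. The paper keeps the KL form in view throughout:
\begin{itemize}
\item it writes $\log F(s_h)=-\KL(\pi_F(\cdot|s_h)\|\pi^\dagger(\cdot|s_h))+\log\sum_{s'}\pi_B(s_h|s')F(s')$ with $\pi^\dagger(s'|s)\propto\pi_B(s|s')F(s')$;
\item it phrases $\pi_F=\pi^\dagger$ as a maximization;
\item it derives the general identity $\log F(s_h)=\log F^\ast(s_h)-\KL(P_F(\tau_{h:}|s_h)\|P_B(\tau_{h:}|s_h))$ and only then specializes to $\pi_F=\pi_F^\ast$;
\item for the converse, it goes back through the same decomposition and a maximality assumption to recover $\pi_F=\pi^\dagger$, hence edge balance.
\end{itemize}
You instead note upfront that the KL term in the statement vanishes identically, because $P_F^\ast=P_B$ on trajectories. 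The theorem is then simply a characterization of $(F^\ast,\pi_F^\ast)$, and the converse is a direct telescoping of $F^\ast(s)\pi_F^\ast(s'|s)=F^\ast(s')\pi_B(s|s')$.

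Your route is shorter, and every step is an identity rather than an optimization argument. The paper's detour buys the interpretation of $\log F$ as the evaluation function of $\pi_F$, with $-\KL(\pi_F\|\pi^\dagger)$ as the local policy gap; that is what motivates Sub-EB. Your closing reduction to Theorem~\ref{eb-the} with $V:=\log F$ recovers that link at $\pi_F=\pi_F^\ast$. Treat it as a consistency check rather than an independent route, though, since you only know $\pi_F=\pi_F^\ast$ after having shown $F=F^\ast$.
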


The corresponding proofs can be found in Appendix~\ref{tb-proof}. While the sufficiency and necessity of the Sub-TB condition have been studied in prior work~\citep{bengio2023gflownet,malkin2022trajectory}, Theorem~\ref{tb-the} offers an alternative perspective that more clearly elucidates the connection between the flow function and the evaluation function. It should be noted that the minimum of the KL term above is zero, as both $P_F(\tau)$ and $P_B(\tau)$ are \emph{Markovian}. Moreover, by Proposition 23 in~\citet{bengio2023gflownet}, the trajectory flow $F(\tau)=P_F(\tau)Z$ that achieves the zero KL term is unique.

Leveraging the balancing condition, we define the Sub-EB objective for optimizing a parameterized evaluation function $V(\cdot\,;\phi)$ as:
\begin{gather}
\mathcal{L}_V(\phi)\,:=\,\mathbb{E}_{P_F(\tau)}\Big[\sum_{\tau_{i:j}}w_{j-i}\left(\delta_V(\tau_{i:j};\phi)\right)^2\Big],\,\,\,
 \delta_V(\tau_{i:j};\phi)\!=\!\log\! \frac{P_F(\tau_{i:j}|s_i)\exp V(s_i;\phi)}{P_B (\tau_{i:j}|s_j;\phi)\exp V(s_j;\phi)} ,\label{SubEB-obj}
\end{gather}
where $w_{j-i}$ is the weight for sub-trajectories of $j-i$ edges, as in the Sub-TB objective. While the traditional $\lambda$-Temporal-Difference (TD) objective~(\ref{trad-V-obj}) detailed in Appendix~\ref{compare-trad-subeb}~\citep{niu2024gflownet,Schulmanetal_ICLR2016} focuses on learning $V(s_h;\phi)$ only from events starting at step $h$ and edge-wise mismatches $\delta_V(s\rightarrow s')$, the Sub-EB objective incorporates events both before and after $h$ and leverages subtrajectory-wise mismatches, yielding more balanced learning of $V(s_h;\phi)$. 
Moreover, Sub-EB allows freely weighting schemes, whereas the scheme of $\lambda$-TD is restricted to the $\lambda$-decay form. A detailed comparison between our Sub-EB and $\lambda$-TD objectives is provided in Appendix~\ref{compare-trad-subeb}. 

It should be noted that the Sub-EB objective is specifically designed for estimating $V^\dagger$ of $\pi_F$. In each optimization iteration, its gradient w.r.t.~$\phi$ is computed to update $V(\cdot\,;\phi)$, while  $\theta$ is frozen. In contrast, the Sub-TB objective~\ref{SubTB-obj} is used to jointly update $\pi_F(\cdot\,|\cdot\,;\theta)$ and $\log F(\cdot\,;\theta)$.\footnote{Here, $\theta$ and $\phi$ are introduced as separate parameter sets. $\theta$ corresponds to functions updated jointly with $\pi_F$, while 
$\phi$ corresponds to functions that are not.}
We summarize the workflow of our policy-based method for GFlowNet training in Algorithm~\ref{algo-1}, where  $\widehat{\mathcal{L}}_V(\phi):=\frac{1}{K}\sum_{\tau \in \mathcal{D}}[\sum_{\tau_{i:j}\in \tau}w_{j-i}\delta_V(\tau_{i:j};\phi)]$ and $\widehat{\nabla}_\theta V^\dagger(s_0;\theta):=
\frac{1}{K}\sum_{\tau \in \mathcal{D}}[\sum_{h=0}^{H}A^\gamma(s_h\!\rightarrow\!s_{h+1})\nabla_\theta\log \pi_F(s_{h+1}|s_h;\theta)]$.

\subsection{Parameterized backward policy}
In this and the following sections, we further discuss two key advantages introduced by the Sub-EB condition, which provide the improved flexibility of policy-based training. 
The $\lambda$-TD objective~(\ref{trad-V-obj}) requires $\pi_B$ to remain fixed throughout optimization, as $V^\lambda$ is treated as constant w.r.t. $\phi$. To address this limitation, \citet{niu2024gflownet} proposed a two-phase algorithm, where each training iteration includes a forward phase and a backward phase. In the forward phase, we sample $\mathcal{D}\sim P_F(\tau)$, update $V$ based on $\mathcal{D}$, and update $\pi_F$ based on $V$ and $\mathcal{D}$. In the backward phase, we sample  $\mathcal{D}'\sim P_B(\tau|x)P_F(x)$, update the backward evaluation function $W$ that approximates the true backward evaluation function $W^\dagger$ of $\pi_B$, and update $\pi_B$ based on $W$. The definitions of $W^\dagger$ and $W$ will be detailed in Section~\ref{off-sec}. In parallel, \citet{gritsaevoptimizing,jang2024pessimistic} introduced an additional objective for $\pi_B$ within the forward phase. When applied to the policy-based framework, their approach first samples a batch $\mathcal{D}\sim P_F(\tau)$ and updates $\pi_B$ by maximizing its log-likelihood, $\sum_{\tau\in\mathcal{D}} \log P_B(\tau)$. Then, with $\pi_B$ held fixed, $V$ and $\pi_F$ are updated using the same batch $\mathcal{D}$, following the procedure described above. 

In contrast, both the Sub-TB~(\ref{SubTB-obj}) and Sub-EB~(\ref{SubEB-obj}) objectives allow for updating parameterized $\pi_B$ without introducing a separate backward phase or an additional objective. To be more specific, $\pi_B$ is jointly updated with $V$ under the Sub-EB objective, and with $(\pi_F,\log F)$ under the Sub-TB objective. This leads to a more streamlined and efficient training process while enabling the backward policy to adapt dynamically during optimization. Moreover, beyond basic policy-gradient methods, the Sub-EB objective also enables a parameterized $\pi_B$ within more advanced policy-based methods, such as the Trust-Region Policy Optimization (TRPO) method~\citep{niu2024gflownet}, where $\pi_B$ was previously required to remain fixed under the $\lambda$-TD objective.

     \begin{figure*}\vspace{-0mm}
       \begin{minipage}{0.48\textwidth}
       \begin{algorithm}[H]\caption{Online Policy-based  Workflow}
 \begin{algorithmic}
 \REQUIRE $\pi_F(\cdot\,|\cdot\,;\theta)$, $\pi_B(\cdot\,|\cdot\,;\phi)$, $V(\cdot\,;\phi)$, batch size $K$, number of total iterations  $N$
\FOR{$n=1,\ldots,N$}
\STATE $\mathcal{D}\gets \{\tau^{k}:\tau^k\sim P_F(\tau)\}_{k=1}^K$
\STATE Based on $\mathcal{D}$, update $\phi$ by  $\nabla_{\phi}\widehat{\mathcal{L}}_V(\phi)$
\STATE Based on $\mathcal{D}$ and $V$, update $\theta$ by $-\widehat{\nabla}_{\theta}V^\dagger(s_0;\theta)$
\ENDFOR
\RETURN $\pi_F(\cdot\,|\cdot\,;\theta)$,  $\pi_B(\cdot\,|\cdot\,;\phi)$, $V(\cdot\,;\phi)$
\end{algorithmic}\label{algo-1}
\end{algorithm}    
    \end{minipage}\hspace{5mm}
\begin{minipage}{0.48\textwidth}

\begin{algorithm}[H]\caption{Offline Policy-based  Workflow}
 \begin{algorithmic}
 \REQUIRE $\pi_B(\cdot\,|\cdot\,;\phi)$, $\pi_F(\cdot\,|\cdot\,;\theta)$, $W(\cdot\,;\theta)$, offline policy $\pi_{\mathcal{D}}$, $K$, $N$
\FOR{$n=1,\ldots,N$}
\STATE $\mathcal{D}^\top\gets \{x^k: x^k\in \tau^k,\tau^k\sim P_\mathcal{D}(\tau)\}_{k=1}^K$
\STATE $\mathcal{D}\gets\! \{\tau^{k}\!:\!x^k\in \mathcal{D}^\top, \tau^k\sim P_B(\tau|x^k)\}_{k=1}^K$
\STATE Based on $\mathcal{D}$, update $\theta$ by  $\nabla_{\theta}\widehat{\mathcal{L}}_W(\theta)$
\STATE Based on $\mathcal{D}$ and $W$, update $\phi$ by $-\widehat{\nabla}_{\phi}\mathbb{E}_{P_\mathcal{D}(x)}[W^\dagger(x;\phi)]$
\ENDFOR
\RETURN $\pi_B(\cdot\,|\cdot\,;\phi)$, $\pi_F(\cdot\,|\cdot\,;\theta)$, $W(\cdot\,;\theta)$
\end{algorithmic}\label{algo-2}
\end{algorithm}    
\end{minipage}\vspace{-3mm}
    \end{figure*}
\subsection{Offline policy-based training}\label{off-sec} 
Both the single-phase and two-phase algorithms by \citet{niu2024gflownet} operate in an \textbf{online} manner, where we can not use a data-collection policy $\pi_\mathcal{D}\neq \pi_F$ during training. To overcome this limitation, we introduce an \textbf{offline} policy-based method made possible by the flexibility of the Sub-EB objective.  We define $W^\dagger(s_0):=\log F(s_0)$. For any $ h\in [H]\!\setminus\!\{0\}$, $W^\dagger(s_h;\phi)$ is defined as:
\begin{align}
\!\mathbb{E}_{P_B(\tau_{:h}|s_h;\phi)}\Big[\sum_{i=0}^{h-1} R(s_i\!\leftarrow\!s_{i+1};\phi)\Big]\!=\!\log F(s_h)-\!\KL(P_B(\tau_{:h}|s_{h};\phi)\|P_F(\tau_{:h}|s_h)),\!\label{W-def}
\end{align} 
with $R(s_i\!\leftarrow\!s_{i+1};\phi)\!:=\!\log \frac{\pi_F(s_{i+1}|s_i)}{\pi_B(s_i|s_{i+1};\phi)}$. The equality can be easily verified, as shown in~(\ref{w-equi-form}) in the Appendix. It is known that minimizing $\mathbb{E}_{P_\mathcal{D}(x)}[\mathcal{L}_{KL}(\phi|x)]:=\mathbb{E}_{P_\mathcal{D}(x)}[\KL(P_B(\tau|x;\phi)\|P_F(\tau|x))]$ also reduces the discrepancy between $P_B(\tau)$ and $P_F(\tau)$~\citep{niu2024gflownet}. Since $\nabla_\phi \log F(x)=0$ and $-\nabla_{\phi}\mathbb{E}_{P_\mathcal{D}(x)}[W^\dagger(x;\phi)]=\nabla_\phi \mathbb{E}_{P_{\mathcal{D}(x)}}\left[\mathcal{L}_{KL}(\phi|x)\right]$, it follows that $-\mathbb{E}_{P_\mathcal{D}(x)}[W^\dagger(x;\phi)]$ can be a surrogate to the expected KL divergence. In analogy to the forward case, we use a parameterized evaluation function $W$ to approximate $W^\dagger$, and $\nabla_\phi\mathbb{E}_{P_\mathcal{D}(x)}[W^\dagger(x;\phi)]$ is computed as:
\begin{gather}
\mathbb{E}_{P_B^\mathcal{D}(\tau)}\Big[\sum_{h=0}^{H-1}A^\gamma(s_h\!\leftarrow\!s_{h+1})\nabla_\phi\log \pi_B(s_h|s_{h+1};\phi)\Big]\label{policy-grad} 
\end{gather} with $A^\gamma(s_h\!\leftarrow\!s_{h+1})\!=\!\sum_{i=0}^h\gamma^{h-i}(R(s_i\!\leftarrow\!s_{i+1})\!+\!W(s_i)\!-\!W(s_{i+1}))$ and $P_B^\mathcal{D}(\tau)\!:=\!P_B(\tau|x)P_{\mathcal{D}}(x)$. 

Given a backward policy $\pi_B$, the Sub-EB condition for $W$ can be written as $\forall i<j\in [H+1]$: 
\begin{align}
 \mathbb{E}_{P_B^\mathcal{D}(\tau_{i:j})}\Big[\log \big(P_F(\tau_{i:j}|s_i)\exp W(s_i)\big)\Big]\!=\!\mathbb{E}_{P_B^\mathcal{D}(\tau_{i:j})}\Big[\log \big(P_B(\tau_{i:j}|s_j)\exp W(s_j)\big)\Big],\label{back-Sub-EB}
\end{align} 
where $P_B(x|s_f)\exp W(s_f):=R(x)$ for $x\in \mathcal{X}$. The condition can be reformulated as  $ \mathbb{E}_{P_B^\mathcal{D}(s_i,s_j)}\left[W(s_i)- W(s_j)\right]=\mathbb{E}_{P_B^\mathcal{D}(\tau_{i:j})}\left[\log \frac{P_B(\tau_{i:j}|s_j)}{P_F(\tau_{i:j}|s_i)}\right]$, encouraging learned divergence difference at $s_i$ and $s_j$ to match the true divergence between them.
\begin{theorem}~\label{back-eb-the} Suppose $W$ is an arbitrary evaluation function over $\mathcal{S}\backslash\{s_f\}$, and $F$ is the flow associated with a forward policy $\pi_F$. Given a backward policy $\pi_B$,  
\begin{align}
    \forall h\in [H-1]: W(s_{h+1})=\log F(s_{h+1})-\KL(P_B(\tau_{:h+1}|s_{h+1})\|P_F(\tau_{:h+1}|s_{h+1})),~\label{W-def-2}
\end{align} and $W(x)=\log R(x)$ if and only if  $W$ satisfies the backward Sub-EB condition~(\ref{back-Sub-EB}). 
\end{theorem}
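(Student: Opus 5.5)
The plan mirrors the argument for Theorem~\ref{eb-the}, run in reverse time. We use the fact that $P_B(\tau_{:h+1}|s_{h+1})$ and $P_F(\tau_{:h+1}|s_{h+1})$ are both Markovian distributions over backward paths ending at $s_{h+1}$. Write $W^\dagger(s_h):=\log F(s_h)-\KL(P_B(\tau_{:h}|s_h)\|P_F(\tau_{:h}|s_h))$ for $h\ge 1$, and $W^\dagger(s_0)=\log F(s_0)$. Since $P_F(\tau_{:h}|s_h)=P_F(\tau_{:h})/P_F(s_h)$ and $F(s_h)=Z P_F(s_h)$, the identity~(\ref{W-def}) gives
\begin{align*}
W^\dagger(s_h)=\mathbb{E}_{P_B(\tau_{:h}|s_h)}\Big[\log Z+\log P_F(\tau_{:h})-\log P_B(\tau_{:h}|s_h)\Big],
\end{align*}
with $\log Z=\log F(s_0)$. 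Conditioning on $s_i$ and using the Markov factorization $P_B(\tau_{:j}|s_j)=P_B(\tau_{:i}|s_i)P_B(\tau_{i:j}|s_j)$ yields the one-directional recursion
\begin{align*}
W^\dagger(s_j)=\mathbb{E}_{P_B(\tau_{i:j}|s_j)}\Big[W^\dagger(s_i)+\log P_F(\tau_{i:j}|s_i)-\log P_B(\tau_{i:j}|s_j)\Big].
\end{align*}
This is the pointwise (in $s_j$) backward analogue of the forward Bellman identity used for $V^\dagger$. At the terminal end, with the convention $P_B(x|s_f)\exp W(s_f):=R(x)$, the edge $x\to s_f$ contributes $\log\pi_F(s_f|x)+W(x)-\log R(x)$. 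Since $\pi_F(s_f|x)=1$ in the graded DAG, this term forces $W(x)=\log R(x)$ as the boundary value.

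\textbf{Sufficiency.} Suppose $W=W^\dagger$ on nonterminal states and $W(x)=\log R(x)$. Take the pointwise recursion for each $s_j$ and average it over $P_B^\mathcal{D}(s_j)$. Because $P_B^\mathcal{D}(\tau_{i:j})=P_B^\mathcal{D}(s_j)P_B(\tau_{i:j}|s_j)$ for $j<H+1$ (backward sampling is Markov from $x$), this gives~(\ref{back-Sub-EB}). The cases with $j=H+1$ follow by combining the recursion up to $x$ with the boundary identity.

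\textbf{Necessity.} I would use induction in $h$, exactly as in the proof of Theorem~\ref{eb-the}, using only the one-edge conditions $j=i+1$.
\begin{itemize}
\item The pair $(H,H+1)$ forces $\mathbb{E}_{P_\mathcal{D}(x)}[W(x)-\log R(x)]=0$. That is only an average, so pointwise determination needs care.
\item Combining edge conditions over longer spans with the full-support assumption $P_\mathcal{D}(\tau)>0$, together with the freedom of all pairs $i<j$, should pin things down.
\end{itemize}

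\textbf{Main obstacle.} The difficulty is that~(\ref{back-Sub-EB}) is an equality of expectations, not a pointwise equality, so it does not obviously determine $W$ statewise. I would try first to mimic whatever device Appendix~\ref{eb-proof} uses for the forward case. Presumably the expectation is read as holding under the conditional law given the endpoint $s_j$, i.e.\ for each $s_j$, which matches the stated reformulation. Then induction from $s_0$ upward gives $W(s_h)=W^\dagger(s_h)$ pointwise. One caveat: $W(s_0)$ itself is unconstrained by differences alone. It is fixed by the definition $W^\dagger(s_0)=\log F(s_0)$, or equivalently by the terminal boundary condition propagated back through the full-trajectory pair $(0,H+1)$. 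That pair gives $W(s_0)=\log Z$ because $\mathbb{E}[\log P_F(\tau)-\log P_B(\tau|x)]$ under $P_B(\tau|x)$ relates to $\log P_F(x)=\log F(x)-\log Z$. I expect this anchoring step, making the argument pointwise, to be the delicate part.
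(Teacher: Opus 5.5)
Your direction from the formula to the backward Sub-EB condition is sound and is essentially the paper's ``necessity'' argument. The paper derives the one-edge recursion $W(s_{h+1})=\mathbb{E}_{\pi_B(s_h|s_{h+1})}[\log\frac{\pi_F(s_{h+1}|s_h)}{\pi_B(s_h|s_{h+1})}+W(s_h)]$, averages it and telescopes; you average the multi-step version directly. Two remarks on that direction. First, you rightly include $W(s_0)=\log F(s_0)$: the theorem's hypothesis omits it, but the pair $(0,1)$ requires it. Second, for pairs ending at $s_j=x$ you need $W(x)=W^\dagger(x)$ (supplied by the hypothesis at $h=H-1$) in addition to $W(x)=\log R(x)$. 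So ``$W=W^\dagger$ on nonterminal states'' is too weak as you stated it.

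The genuine gap is the converse, which is the substance of the theorem and which you leave at ``should pin things down'' and ``presumably.'' Two of your concrete suggestions fail.

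\emph{Terminal identity.} Reading each equation conditionally on the endpoint $s_j$ cannot give $W(x)=\log R(x)$ for each $x$, because the endpoint of the pair $(H,H+1)$ is $s_f$. You still get only $\mathbb{E}_{P_{\mathcal{D}}(x)}[W(x)-\log R(x)]=0$. The paper reaches the pointwise identity by appealing to full support of $P_{\mathcal{D}}(x)$ and the arbitrariness of the data-collection policy.

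\emph{Anchoring $W(s_0)$.} Using the pair $(0,H+1)$ is circular. That pair gives $W(s_0)=\log Z+\mathbb{E}_{P_{\mathcal{D}}(x)}[\log R(x)-\log F(x)+\KL(P_B(\tau|x)\|P_F(\tau|x))]$. This equals $\log Z$ only if the conclusion at $x$ already holds on average. The paper simply sets $W(s_0)=W^\dagger(s_0):=\log F(s_0)$ and unrolls upward, as you planned.

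\emph{Intermediate levels.} The paper drops the outer expectation in $\mathbb{E}_{P_B(s_{h+1})\pi_B(s_h|s_{h+1})}[\cdot]=0$ by citing strict positivity of $P_B(s_{h+1})$. Your suspicion here is justified: positive weights alone do not make each conditional average vanish. Concretely, add to $W$ any function supported on a single level $k\ge 1$ with zero $P_B^{\mathcal{D}}(s_k)$-mean. Every equation of the backward Sub-EB condition is unchanged, so combining longer spans cannot help.

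To close the argument, take the pointwise form of the condition as the hypothesis. That means per endpoint $s_j$ for $j\le H$, and per $x$ (equivalently, for every full-support $P_{\mathcal{D}}$) for $j=H+1$, together with $W(s_0)=\log F(s_0)$. Under that reading the one-edge pairs give the recursion for every $s_{h+1}$, and your upward induction from $s_0$ goes through exactly as in the paper.
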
 The corresponding proof can be found in Appendix~\ref{b-eb-proof}. Based on the backward Sub-EB condition, we present the backward Sub-EB objective for $W(\cdot\,;\theta)$ as follows:
\begin{gather}
\mathcal{L}_W(\theta)\!:=\!\mathbb{E}_{P_B^\mathcal{D}(\tau)}\Big[\sum_{\tau_{i:j}}w_{j-i}\left(\delta_W(\tau_{i:j};\theta)\right)^2\Big],\,\,\,
 \delta_W(\tau_{i:j};\theta)\!=\!\log \frac{P_F(\tau_{i:j}|s_i;\theta)\exp W(s_i;\theta)}{P_B (\tau_{i:j}|s_j)\exp W(s_j;\theta)}\label{B-SubEB-obj}
\end{gather}

When $\pi_B$ and $\pi_F$ are at their optima, the KL term in the expression of $W$ is zero.  Consequently, $F(x)=R(x)$ and $P_F(x)=F(x)/\sum_x F(x)=R(x)/Z^\ast$, thereby fulfilling the goal of GFlowNet training. The workflow of our offline policy-based method for GFlowNet training is presented in Algorithm~\ref{algo-2}, where $\widehat{\mathcal{L}}_W(\theta):=\frac{1}{K}\sum_{\tau \in \mathcal{D}}[\sum_{\tau_{i:j}\in \tau}w_{j-i}\delta_W(\tau_{i:j};\theta)]$ and $\widehat{\nabla}_\phi \mathbb{E}_{P_\mathcal{D}(x)}[W^\dagger(x;\phi)]:=
\frac{1}{K}\sum_{\tau \in \mathcal{D}}[\sum_{h=0}^{H-1}A^\gamma(s_h\!\leftarrow\!s_{h+1})\nabla_\phi\log \pi_B(s_h|s_{h+1}|\phi)]$.

\section{Related works}\label{Related-Work}
\paragraph{Value-based GFlowNet Training}\vspace{-1mm}
Existing works on value-based GFlowNet training can be categorized into two directions. The first one focuses on designing training objectives to characterize target flow values, thereby improving the estimation of flow imbalance. The Detailed Balance (DB) objective~\citep{bengio2023gflownet} aims to reduce the logarithmic mismatch between the forward edge flow $\pi_F(s'|s)F(s)$ and the backward target edge flow $\pi_B(s|s')F(s')$, respectively. The target term $F(s')P_B(s|s')$ provides biased but low-variance feedback, as $F$ encodes the learned partial knowledge about the task. The Trajectory Balance (TB) objective~\citep{malkin2022trajectory} optimizes the logarithmic mismatch between the forward trajectory flow $P_F(\tau)Z$ and the true backward trajectory flow $P_B(\tau|x)R(x)=F^\ast(\tau)$, which yields unbiased but high-variance feedback. The Sub-TB objective~\citep{madan2023learning} generalizes both DB and TB by minimizing the flow logarithm mismatch of subtrajectories of varying lengths. By reweighting subtrajectories according to their lengths, it enables a \textbf{bias-variance} trade-off, thereby achieving better performance.  Building on Sub-TB objectives, various improvements are also proposed.~\citet{kim2023learning} introduced temperature-conditional objectives, whose goal turns to make $P_F(x)\propto R^\beta(x)$ with positive scalar $\beta<1$. Taking $R$ to the exponent $\beta$ reduces its sharpness, making it easier to match. As $P_F(x)$ does not match $R(x)$ in this case, this representation is specifically useful when the focus is solely on mode seeking. For all the objectives mentioned above,  $\pi_B(s|s')$ can be chosen freely for any intermediate edges $(s\rightarrow s')$ and only $\pi(x|s_f)$ is fixed to $R(x)$. Therefore, target flow values of intermediate edges or subtrajectories do not directly reflect the ground-truth knowledge about the reward function $\mathcal{R}$.  However, under the special cases where the covered object space of $\mathcal{R}$ can be extended from $\mathcal{X}$ to $\mathcal{S}$, ~\citet{pan2023better} and \citet{jang2023learning} improved the formulation of the DB and Sub-TB objectives, propagating partial knowledge of $\mathcal{R}$ directly to intermediate edges.  Due to the similarity of the Sub-TB objective and our Sub-EB objective, these improvements can also be easily adapted to Sub-EB to facilitate learning $V$.

A key characteristic of value-based methods is that $\pi_{\mathcal{D}}$ can be off-policy, that is, different from $\pi_F$, leading to numerous efforts at designing $\pi_{\mathcal{D}}$~\citep{kim2023local,rector2023thompson}. The goal is to effectively identify edges that precede highly-rewarded terminating states (exploration) while allowing revisiting the edges already found to yield high reward (exploitation). The most widely used approach is $\alpha$-greedy design that mixes $\pi_F$ with a uniform policy by a factor $\alpha$. These efforts, however, fail to achieve \textbf{deep exploration}, which requires considering not only immediate information gain but also the long-term consequences of a transition (edge) in future learning~\citep{osband2019deep}. Although there have been many theoretical advances in efficient exploration design from an RL perspective~\citep{azar2017minimax,jin2018q,menard2021fast}, their expensive computational cost limits their applicability in practical problems.

\paragraph{Policy-based GFlowNet Training}\vspace{-1mm}
As mentioned above, designing a data-collection policy that is both computationally efficient and capable of \textbf{deep exploration} remains a significant challenge. Moreover, what is truly needed for training efficiency is identifying the edges of high flow-imbalance rather than the edges that lead to high terminating rewards. However, the flow imbalance is closely related to $\pi_F$ and changes whenever $\pi_F$ is updated. Empirical evidence shows that on-policy training, meaning $\pi_\mathcal{D}=\pi_F$, can result in faster convergence under many conditions~\citep{atanackovic2024investigating}. Accordingly, policy-based GFlowNet methods~\citep{malkin2022gflownets,niu2024gflownet,zimmermann2022variational} conduct on-policy training, which typically corresponds to optimizing the KL divergence between $P_F(\tau)$ and the unnormalized distribution $P_B(\tau|x)R(x)(=P_B(\tau)Z^*)$, which has gradient equivalence to the divergence between $P_F(\tau)$ and $P_B(\tau)$. Removing the need to design a data-collection policy, the main challenge of policy-based methods shifts to robust estimation of the divergence and its gradients, that is, balancing the trade-off between variance and bias of the estimators. ~\citet{malkin2022gflownets} and \citet{silva2024divergence} construct estimators empirically and require fixed $\pi_B$. During gradient-based optimization, these estimators are computed solely from the training data sampled in the current iteration. These estimators typically exhibit low bias but high variance. From the theory of policy gradients in RL, \citet{niu2024gflownet} proposed estimators on a parameterized evaluation function $V$, which enables leveraging sampled data from all previous iterations. These estimators generally have high bias but low variance. Combining empirical and parameterized approaches, \citet{niu2024gflownet} introduced a hyperparameter to control \textbf{variance-bias} trade-off explicitly, resulting in significant performance gains. As the effectiveness of these policy-based methods critically depends on how $V$ is learned, our paper is a subsequent work towards this challenge.

\paragraph{GFlowNet training and RL}\vspace{-0mm} Value-based methods such as DB and Sub-TB, as well as RL-inspired variants of DB, including Munchausen DQN (denoted as Q-Much)~\citep{tiapkin2024generative} and Rectified Flow Iteration (RFI)~\citep{he2025random}, follow the Q-learning framework~\citep{haarnoja2017reinforcement}. In contrast, policy-based methods, based on policy gradients, operate within the actor–critic framework~\citep{sutton2018reinforcement,haarnoja2018soft}.  A detailed discussion of the theoretical distinctions between these two frameworks is provided in Appendix~\ref{review-rl}. To further illustrate how the Sub-EB objective and policy gradients operate within the actor-critic framework, we derive the basic soft actor–critic algorithm (Alg.~\ref{algo-3}) for GFlowNet, together with a modified variant (Alg.~\ref{algo-4}), in Appendix~\ref{sac}.

\section{Experiments}\vspace{-0mm}

We consider tree policy-based methods: the empirical method of \citet{silva2024divergence}, which uses the Control Variate (CV) technique for variance reduction during gradient estimation, and two policy-gradient methods~\citep{niu2024gflownet} that learn $V$ by either the $\lambda$-TD objective or the proposed Sub-EB objective. We refer to these methods as CV, RL, and Sub-EB, respectively. Since Sub-TB~\citep{madan2023learning}~\citep{tiapkin2024generative} is closely related to Sub-EB, we also include it as representative baselines for value-based methods. Among RL-inspired value-based variants, we consider Q-Much~\citep{tiapkin2024generative} and RFI~\citep{he2025random}. However, due to potential computational scalability issues with RFI (see Appendix~\ref{review-rl}), we include only Q-Much in our benchmarking experiments.  To design the data collection policy $\pi_{\mathcal{D}}$ in the Sub-TB method, we follow a common choice, where $\pi_\mathcal{D}$ is equal to $\pi_F$ with probability $(1-\alpha)$ and a uniform policy with probability $\alpha$~\citep{shen2023towards,rector2023thompson}. The common settings of $\pi_B$ can be either a uniform policy or a parameterized policy. By default, we adopt the uniform policy as the parameterized policy does not carry ground-truth information about the reward function. We also augmented Sub-TB and the offline Sub-EB (Algorithm~\ref {algo-2}) with the local search technique~\citep{kim2023local} for designing $P_{\mathcal{D}}$, to explicitly promote the exploration of high-reward terminating states during trajectory sampling. These variants are denoted as Sub-TB-B and Sub-EB-B.  As explained in Section~\ref{Related-Work} and confirmed by the following experiment results, off-policy techniques that explicitly encourage exploration may not benefit distribution modeling. However, when the focus is on discovering modes of terminating states, these techniques can be valuable. For Sub-EB, Sub-TB and their variants, the weight $w_{j-i}$ for $i<j\in [H+1]$ is set to $\lambda^{j-i}/\sum_{i<j\in [H+1]}\lambda^{j-i}$ following~\citet{madan2023learning}.

Instead of the average $l_1$-distance adopted in the literature, we chose total variation $D_{\mathrm{TV}}$ and Jensen–Shannon divergence $D_{\mathrm{JSD}}$ between $P_F(x)$ and $P^\ast(x)$ as the metrics for performance comparison between competing methods. The rationale and formal definitions are detailed in Appendix~\ref{experment-detail}. We have conducted three sets of experiments. The first set is conducted in simulated environments, referred to as `Hypergrids'. The second set focuses on biological and molecular sequence design tasks using real-world datasets. The third set involves real-world applications of GFlowNet in Bayesian Network (BN) structure learning. More experimental details, such as model configurations and hyperparameter choices, can be found in Appendix~\ref{experment-detail}. Parts of our implementation code are adapted from the \emph{torchgfn} package~\citep{lahlou2023torchgfn}. 

\paragraph{Hypergrids}
 Hypergrid experiments are widely used for testing GFlowNet performance~\citep{malkin2022gflownets,niu2024gflownet}.  Here, states are the coordinate tuples of an $D$-dimensional hyper-cubic grid of height $H$. %Starting from  $s_0=(0,\ldots,0)$, actions correspond to increasing one of $D$ coordinates by $1$ for the current state or stopping the process at the current state and designating it as the terminating state $x$. 
 The detailed description of the generative process is provided in Appendix~\ref{experment-detail-hyper}. We perform experiments on $256 \times 256$, $128 \times 128 \times 128$ and $64 \times 64 \times 64$ grids. We use dynamic programming~\citep{malkin2022trajectory} to explicitly compute $P_F(x)$ of learned $\pi_F$, and compute the exact $D_{\mathrm{TV}}$ and $D_{\mathrm{JSD}}$ between $P_F(x)$ and $P^\ast(x)$. Experimental results are depicted in Fig.~\ref{hyper-128} for $D_{\mathrm{TV}}$ and Fig.~\ref{hyper-64} in Appendix~\ref{experment-detail-hyper} for $D_{\mathrm{JSD}}$ respectively. 
 
 On the $256\times 256 $ grid, it can be observed that replacing the $\lambda$-TD objective for learning $V$ with the proposed Sub-EB objective significantly improves the stability and convergence rate of the policy-gradient method. While the final performances of the two policy-gradient methods (RL and Sub-EB) are close, they both outperform Sub-TB and CV. These experimental results strongly support the effectiveness of the Sub-EB objective in enabling more reliable learning of the evaluation function $V$, leading to improved stability and faster convergence during policy-based GFlowNet training. The empirical gradient estimator constructed solely from the current training batch is not adequate for reliably guiding policy-based training. On the $128 \times 128 \times 128$ grid, the stability and convergence rate of Sub-EB are again much better than those of RL. While CV, RL, Sub-EB and Sub-TB achieve similar final performance, both RL and Sub-EB outperform Sub-TB and CV in terms of convergence rate. These findings further validate the effectiveness of our Sub-EB objective. Finally, on the $64 \times 64 \times 64$ grid, both RL and Sub-EB outperform Sub-TB and CV, but the behavior of RL and Sub-EB is very close. This can be ascribed to the fact that the stability of RL is good enough for this experiment, so the advantages brought by the Sub-TB objectives is not obvious. Besides,  the grid height $N$ can have more effect on the modeling difficulty than grid dimension $D$ since hypergrids are homogeneous w.r.t. each dimension, and the minimum distance between modes only depends on grid height $N$~\citep{niu2024gflownet}.  The training batch size is set to 128 by default, whereas a batch size of 16 and a grid height of 20 are used for hypergrid experiments in~\citet{tiapkin2024generative}. To ensure a fair comparison and assess the impact of these differences, we additionally evaluate the performance of Sub-TB and Q-Much on $128\times 128$ and $20\times 20$ grids. Detailed discussions and
results are provided in Appendix~\ref{experment-detail-hyper}.

\paragraph{Ablation study on $\pi_B$ and $\lambda$}\vspace{-0mm}
To demonstrate that the Sub-EB objective naturally accommodates a parameterized $\pi_B$. We compare the performance of the different methods with parameterized and uniform $\pi_B$ on $256 \times 256$, $128 \times 128$, and $64\times 64 \times 64 $ grids. We use Sub-TB-P and Sub-EB-P to denote the corresponding methods with parameterized $\pi_B$. Furthermore, RL-P and RL-MLE denote two variants of RL that adopt the two-phase algorithm by~\citet{niu2024gflownet} and the approach by~\citet{gritsaevoptimizing} for parameterized $\pi_B$, respectively. As CV requires fixed $\pi_B$, it does not support a parameterized $\pi_B$ and is therefore excluded from this ablation study. As shown in Fig.~\ref{hyper-128} and Fig.~\ref{hyper-64} in Appendix~\ref{experment-detail-hyper}, Sub-EB-P achieves the best performance and training stability among all the considered methods. This confirms that the Sub-EB objective well accommodates backward policies, which are parameterized and updated jointly with evaluation functions.   For the $128\times 128$ grid,  we also conduct an ablation study on the hyperparameter $\lambda$ of the Sub-EB weights $w_{j-i}(=\lambda^{j-i}/\sum_{i<j\in [H+1]}\lambda^{j-i})$. Detailed discussions and results are provided in Appendix~\ref{experment-detail-hyper}.

\paragraph{Sequence design}\vspace{-0mm}
In this set of experiments, we use GFlowNets to generate biological and molecular sequences of length $D$, which are composed of $M$ building blocks~\citep{shen2023towards}. 
%The initial state is a sequence with $D$ unfilled slots. The generative process proceeds by sequentially selecting an empty slot and assigning it one of the $M$ building blocks. Once all slots are filled, the resulting sequence returns as the terminating state $x$.  
We use nucleotide sequence datasets (SIX6 and PHO4), and molecular sequence datasets (QM9 and sEH ) from~\citet{shen2023towards}. The detailed description of the generative process and experimental results are provided in Appendix~\ref{experment-detail-seq}. Experimental results of RL, Sub-TB, Sub-EB, Sub-TB-B and Sub-EB-B on mode discovery and distribution modeling tasks validate
that the proposed Sub-EB objective enables the integration of offline sampling techniques into policy-based methods.

 \begin{figure*}[t]
  \centering
  \begin{minipage}[t]{0.33\linewidth}
  \centering
\includegraphics[width=1.0\textwidth]{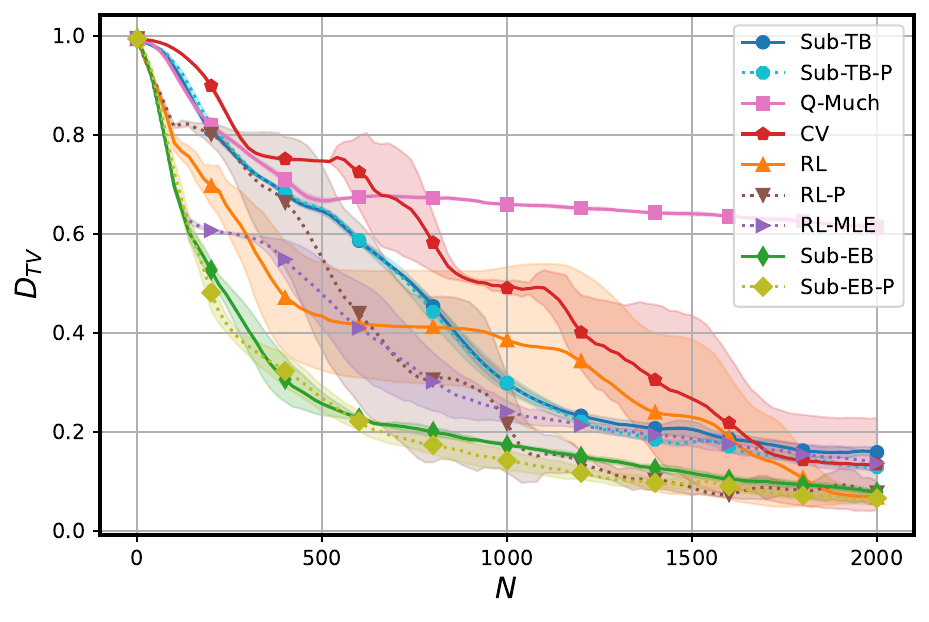}
 \end{minipage}\hspace{-1mm}
   \begin{minipage}[t]{0.33\linewidth}
  \centering
\includegraphics[width=1.0\textwidth]{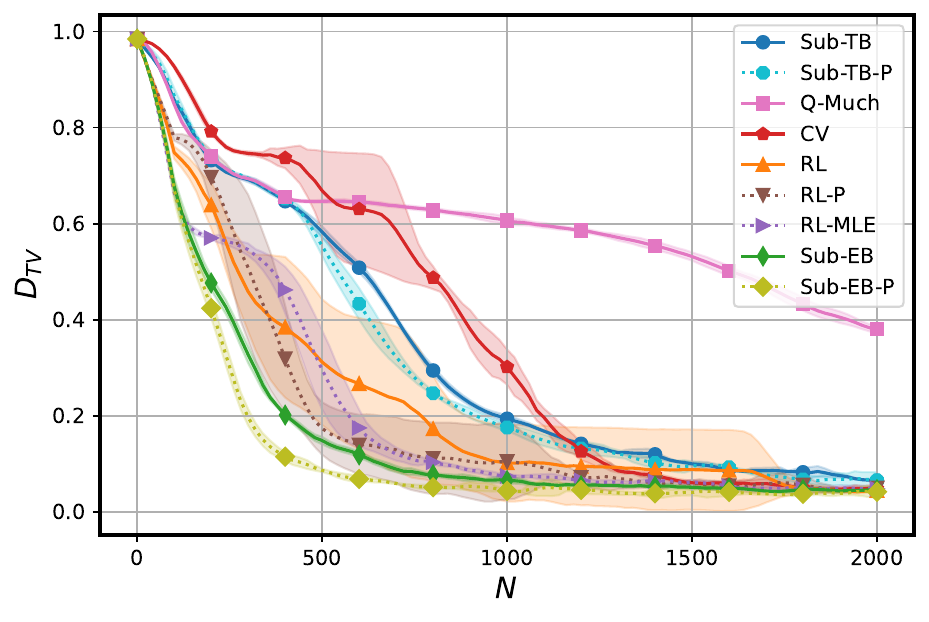}
 \end{minipage}\hspace{-1mm}
  \centering
  \begin{minipage}[t]{0.33\linewidth}
  \centering
\includegraphics[width=1.0\textwidth]{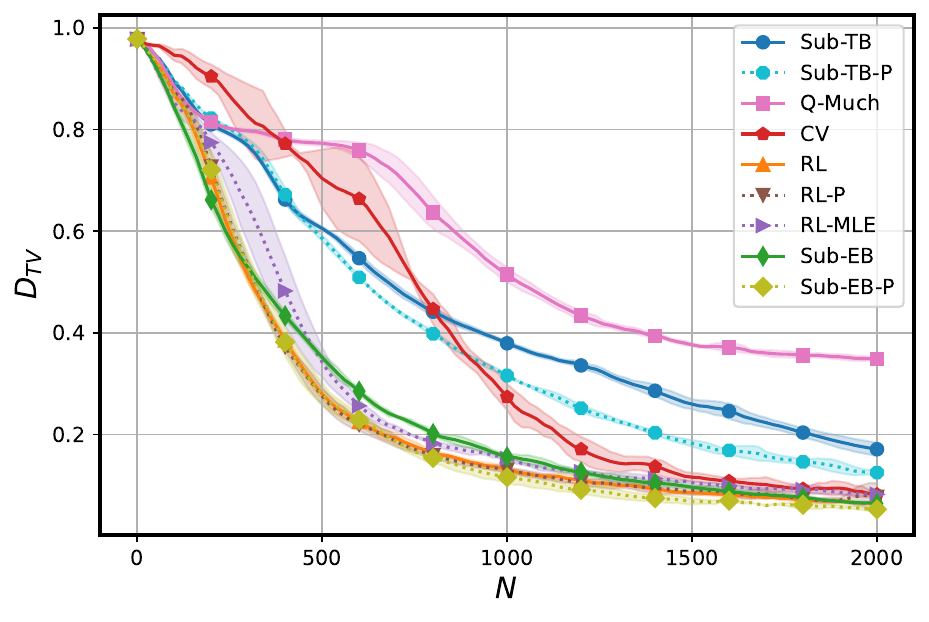}
 \end{minipage}\vspace{-3mm}
 \caption{Plots of the means and standard deviations (represented by the shaded area) of $D_{\mathrm{TV}}$ for different training methods with parameterized $\pi_B$ and uniform $\pi_B$ on the $256 \times 256 $ (left) and $128 \times 128  $ (middle) and $64\times 64 \times 64$ (right) grids, based on five randomly started runs for each method. \textbf{By default}, metric values are recorded every 20 iterations over $N = 2000$ training iterations and smoothed by a sliding window of length 5 for all plotted curves in this paper.}\label{hyper-128}
 \end{figure*}

  \begin{figure*}[t]
  \centering
  \begin{minipage}[t]{0.32\linewidth}
  \centering
\includegraphics[width=1.0\textwidth]{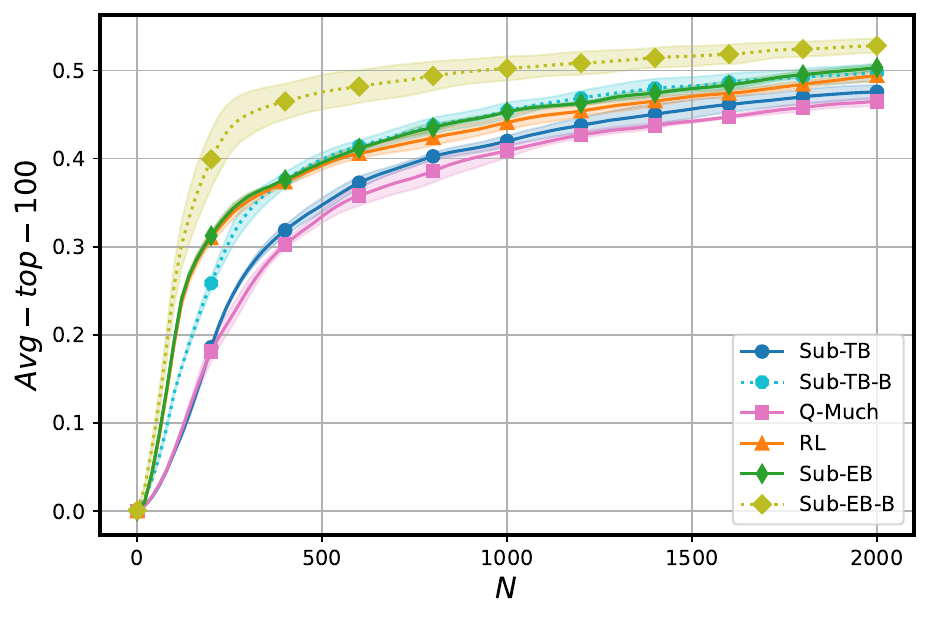}
 \end{minipage}\hspace{0mm}
   \begin{minipage}[t]{0.32\linewidth}
  \centering
\includegraphics[width=1.0\textwidth]{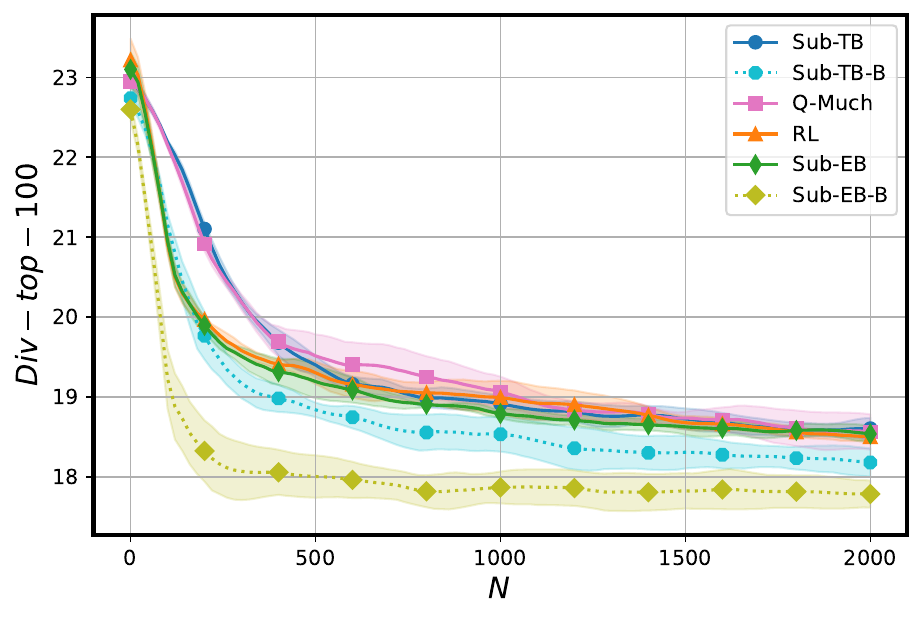}
 \end{minipage}   \begin{minipage}[t]{0.32\linewidth}
  \centering
\includegraphics[width=1.0\textwidth]{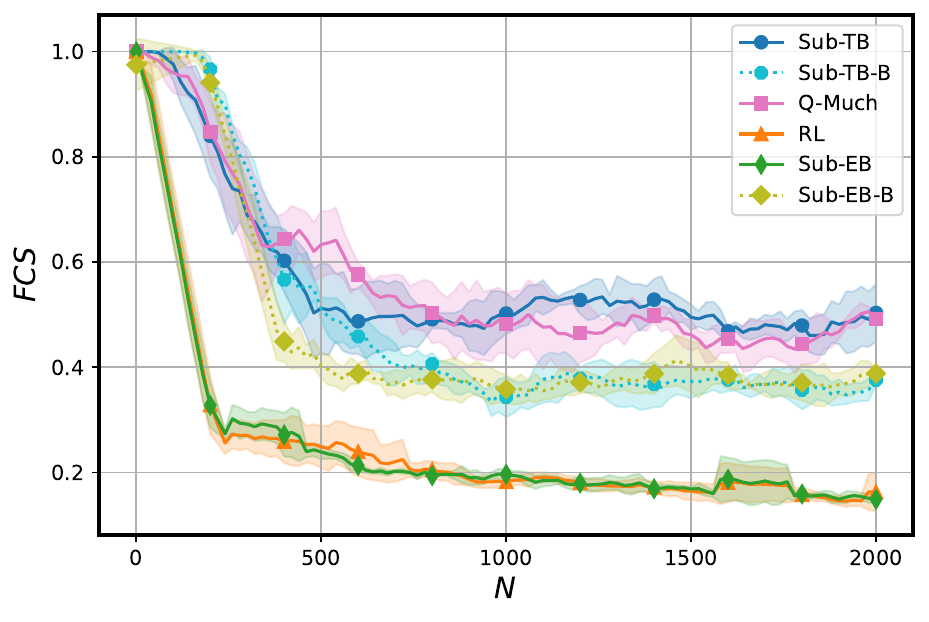}
 \end{minipage}\vspace{-3mm} 
 \caption{Plots of the mean and standard deviation values (represented by the shaded area) of average reward (left), diversity (right) and FCS (right) of the top 100 unique candidate graphs over 10 nodes, based on five randomly started runs for each method.}
 \label{dag-10-mean-div}\vspace{-0mm}
 \end{figure*}

\paragraph{BN structure learning} In this experiment set, we focus on real-world studies of Bayesian Network (BN) structure learning~\citep{malkin2022gflownets,niu2024gflownet}. Here, the object space $\mathcal{X}$ corresponds to the space of BN structures. %The generative process begins with an empty graph, and at each step, a valid edge is added to the graph or terminates the process, yielding the current graph as a generated BN structure. 
The detailed description of the generative process is provided in Appendix~\ref{experment-detail-bn}. We consider three cases with 5, 10, and 15 nodes, where the sizes of $\mathcal{X}$ are approximately $2.95\times 10^4$, $4.18\times 10^{18}$, and $ 2.38\times 10^{35}$, respectively.  For the large-scale cases, either $P_F(x)$ or $P^\ast(x)$ is computationally infeasible. Instead, we report the average reward of the top 100 unique graphs that are discovered during the training process. Since effective distribution modeling performance implies not only optimality but also diversity of generated candidates, we also compute the mean pairwise Hamming distance among these 100 graphs as a measure of diversity. It should be noted that \textbf{neither excessively high nor excessively low diversity is desirable}: the former corresponds to near-random generation, while the latter indicates that generation gets stuck in a limited set of structures. As exact computation of $D_{\mathrm{TV}}$ is infeasible, we use Flow Consistency in Sub-graphs (FCS)~\citep{silva2025gflownets} as the unbiased estimation of $D_{\mathrm{TV}}$. For FCS, we randomly sampled 32 batches of terminal states, each of size 128, to construct the Monte Carlo estimator.

We present the discussion for the two large-scale cases below and defer the small-scale case to Appendix~\ref{experment-detail-bn}. The corresponding experimental results are presented in Fig.~\ref{dag-10-mean-div} and Fig.~\ref{dag-15-mean-div} in the Appendix~\ref{experment-detail-bn}. Among RL, Sub-EB, Sub-TB, and Q-Much, the results indicate that Sub-EB achieves the highest average reward. The four methods attain similar diversity. Notably, RL and Sub-EB converge faster than Sub-TB and Q-Much, and only RL and Sub-EB obtain strong distribution-modeling performance as measured by the FCS metric. Taking together all these findings, we can conclude that all methods achieve appropriate distribution modeling, and Sub-EB performs the best. This supports that Sub-EB not only enables reliable policy-based training but also scales effectively to large combinatorial spaces, providing both high-quality and diverse solutions. For the two variants, Sub-TB-B and Sub-EB-B, it can be observed that Sub-EB-B achieves the highest average reward among all six methods, accompanied by a more noticeable drop in diversity.
    Given that the local search component explicitly prioritizes high-reward regions of the solution space, such a trade-off—significant reward improvement at the expense of reduced diversity—is expected. In contrast, Sub-TB-B does achieve a higher average reward compared to its non-augmented counterpart (Sub-TB) with a moderate decrease in diversity. However, the trade-off becomes much less pronounced. Without the local search technique, Sub-EB already achieves a comparable average reward and higher diversity than Sub-TB-B.  Overall, Sub-EB-B proves to be more effective than Sub-TB-B, aligning well with our expectations of the optimality-diversity trade-off. This finding further supports the superiority of the policy-based methods and validates that the Sub-EB objective enables the integration of offline techniques within policy-based frameworks.

\paragraph{Molecular graph design}\vspace{-0mm} In this set of experiments, we consider the molecular graph design task (with $|\mathcal{X}| \approx 10^{16}$) described in~\citet{bengio2021flow}. The sequence design task based on the sEH dataset (with $|\mathcal{X}| \approx 3.4 \times 10^7$)~\citep{shen2023towards} is simplified from these tasks. A detailed description of the generative process and the corresponding experimental results for these graph design tasks is provided in the Appendix~\ref{experment-detail-mol}. The experimental results demonstrate that Sub-EB achieves strong overall performance on large-scale molecular graph design, providing higher average rewards, faster convergence, and competitive diversity compared to RL, Sub-TB and Q-Much.

\section{Discussion and conclusion}\label{conclusion}\vspace{-0mm}
In this work, we have established the connection between the state flow function $F(s)$ and the evaluation function $V(s)$. Built upon that, a new objective, called Sub-EB, is proposed for learning the evaluation function $V(s)$. Through four sets of experiments, we provide empirical evidence that the new Sub-EB objective enables more stable and flexible learning of $V$ than the $\lambda$-TD objective, thereby improving the performance of the policy-based methods for GFlowNet training. In principle, the Sub-EB objective allows flexible choices of weight coefficients and goes beyond policy-gradient methods. Further investigation into designing optimal weight coefficients and integrating the Sub-EB objective into more advanced policy-based methods is left for future work.

\newpage
\section*{Acknowledgements}
This work was supported in part by the U.S. National Science Foundation~(NSF) grants SHF-2215573 and IIS-2212419. Portions of this research were conducted with the advanced computing resources provided by Texas A\&M High Performance Research Computing.
\bibliography{iclr2026_conference}
\bibliographystyle{iclr2026_conference}

\newpage
\appendix

\section*{LLM Usage Disclosure}
LLMs were used only for text refinement (grammar and style). All scientific content was developed and verified by the authors.

\section{Theoretical analyses}\label{theretical-proof}

\subsection{Proof of Theorem~\ref{eb-the}}\label{eb-proof}
\begin{proof}
We first prove the \textbf{sufficiency} of the Sub-EB condition~(\ref{Sub-EB}). Assume that $V$ is an evaluation function over $\mathcal{S}$ that satisfies the Sub-EB condition for a given forward policy $\pi_F$.  

Then, for any $h\in [H]$:
\begin{gather}
    \mathbb{E}_{P_F(s_h)\pi_F( s_{h+1}|s_h)}\left[\log \pi_B(s_h|s_{h+1})+V(s_{h+1})-\log \pi_F(s_{h+1}|s_h)-V(s_h)\right]=0\\
    \Downarrow\nonumber\\
        \mathbb{E}_{\pi_F( s_{h+1}|s_h)}\left[\log \pi_B(s_h|s_{h+1})+V(s_{h+1})-\log \pi_F(s_{h+1}|s_h)-V(s_h)\right]=0\label{condition11}
        \\
    \Downarrow\nonumber\\
    V(s_h)= \mathbb{E}_{\pi_F( s_{h+1}|s_h)}\left[\log\frac{\pi_B(s_h|s_{h+1})}{\pi_F(s_{h+1}|s_h)}+V(s_{h+1})\right].\label{V-eq}
\end{gather} Here, the second equation holds by our assumption that any valid $\pi_F$ introduces a trajectory distribution $P_F(\tau)$ that is strictly positive for any $\tau \in \mathcal{T}$. Consequently, the corresponding state probability $P_F(s)$ is strictly positive for any $s\in \mathcal{S}$.

Based on~(\ref{V-eq}), the previous definition $\pi_B(x|s_f)\exp V(s_f):=R(x)$, we have:
\begin{align}
    V(x)&=\mathbb{E}_{\pi_F(s_f|x)}\left[\log \frac{R(x)}{\pi_F(s_f|x)}\right]\nonumber\\&=\mathbb{E}_{\pi_F(s_f|x)}\left[\log \frac{F^\ast(x\rightarrow s_f)}{\pi_F(s_f|x)}\right]\nonumber\\&=\mathbb{E}_{P_F(x\rightarrow s_f|x)}\left[\log \frac{P_B(x\rightarrow s_f)Z^\ast}{ P_F(x\rightarrow s_f|x)}\right]\nonumber\\&=\mathbb{E}_{P_F(x\rightarrow s_f|x)}\left[\log\frac{P_B(x\rightarrow s_f|x)}{P_F(x\rightarrow s_f|x)}\right]+\log P_B(x) Z^\ast\nonumber\\&=\log F^\ast(x)-\KL(P_F(\tau_{H:}|s_H)\|P_B(\tau_{H:}|s_H)),\nonumber\\
    V(s_{H-1})&=\mathbb{E}_{\pi_F(x|s_{H-1})}\left[\log \frac{\pi_B(s_{H-1}|x)}{\pi_F(x|s_{H-1})}+\mathbb{E}_{P_F(x\rightarrow s_f|x)}\left[\log \frac{P_B(x\rightarrow s_f)Z^\ast}{  P_F(x\rightarrow s_f|x)}\right]\right]\nonumber\\&=\mathbb{E}_{P_F(\tau_{H-1:}|s_{H-1})}\left[\log \frac{P_B(\tau_{H-1:})Z^\ast}{P_F(\tau_{H-1:}|s_{H-1})}\right]\nonumber\\&=\mathbb{E}_{P_F(\tau_{H-1:}|s_{H-1})}\left[\log \frac{P_B(\tau_{H-1:}|s_{H-1})}{P_F(\tau_{H-1:}|s_{H-1})}\right]+\log P_B(s_{H-1})Z^\ast\nonumber\\&=\log F^\ast(s_{H-1})-\KL(P_F(\tau_{H-1:}|s_{H-1})\|P_B(\tau_{H-1:}|s_{H-1})),\\
       &\vdots\nonumber      \\   
V(s_h)&=\mathbb{E}_{\pi_F(s_{h+1}|s_h)}\left[\log \frac{\pi_B(s_{h}|s_{h+1})}{\pi_F(s_{h+1}|s_h)}+\mathbb{E}_{P_F(\tau_{h+1:}|s_{h+1})}\left[\log \frac{P_B(\tau_{h+1:})Z^\ast}{ P_F( \tau_{h+1:}|s_{h+1})}\right]\right]\nonumber\\&=\mathbb{E}_{P_F(\tau_{h:}|s_h)}\left[\log \frac{P_B(\tau_{h:})Z^\ast}{P_F(\tau_{h:}|s_h)}\right]\\&=\mathbb{E}_{P_F(\tau_{h:}|s_h)}\left[\log \frac{P_B(\tau_{h:}|s_h)}{P_F(\tau_{h:}|s_h)}\right]+\log P_B(s_h)Z^\ast\nonumber\\&=\log F^\ast(s_h)-\KL(P_F(\tau_{h:}|s_h)\|P_B(\tau_{h:}|s_h)),\label{h-v-kl}\\
         &\vdots\nonumber\\
V(s_0)&=\mathbb{E}_{P_F(\tau|s_h)}\left[\log \frac{P_B(\tau)Z^\ast}{P_F(\tau)}\right]\nonumber\\&=\mathbb{E}_{P_F(\tau)}\left[\log \frac{P_B(\tau)}{P_F(\tau)}\right]+\log Z^\ast\nonumber\\&=\log F^\ast(s_0)-\KL(P_F(\tau)\|P_B(\tau)).
\end{align} It should be noted that the definition of the evaluation function~(\ref{V-def}) coincides with the equation~(\ref{h-v-kl}) in that  
\begin{align}
    V(s_h)&=\mathbb{E}_{P_F(\tau_{h:}|s_h)}\left[\sum_{i=h}^H \log \frac{\widetilde{\pi}_B(s_i|s_{i+1})}{\pi_F(s_{i+1}|s_i)}\right]=\mathbb{E}_{P_F(\tau_{h:}|s_h)}\left[\sum_{i=h}^H \log \frac{\pi_B(s_i|s_{i+1})}{\pi_F(s_{i+1}|s_i)}\right]+\log Z^\ast\nonumber\\
   &=\mathbb{E}_{P_F(\tau_{h:}|s_h)}\left[\log \frac{P_B(\tau_{h:})Z^\ast}{P_F(\tau_{h:}|s_h)}\right].\label{v-equi-form}
\end{align}\\
Now, we prove the \textbf{necessity} of the Sub-EB condition~(\ref{Sub-EB}). Assume that $V$ is the evaluation function associated with a forward policy $\pi_F$. Then, $\forall h\in [H]$:
\begin{align}
    V(s_h)&=\mathbb{E}_{P_F(\tau_{h:}|s_h)}\left[\sum_{i=h}^H \log \frac{\pi_B(s_i|s_{i+1})}{\pi_F(s_{i+1}|s_i)}\right]+\log Z^\ast\nonumber\\
   &=\mathbb{E}_{\pi_F(s_{h+1}|s_h)}\left[\log \frac{\pi_B(s_h|s_{h+1})}{\pi_F(s_{h+1}|s_h)}+\mathbb{E}_{P_F(\tau_{h+1:}|s_{h+1})}\left[\log \frac{P_B(\tau_{h+1:})Z^\ast}{ P_F( \tau_{h+1:}|s_{h+1})}\right]\right]\nonumber\\&=\mathbb{E}_{\pi_F(s_{h+1}|s_h)}\left[\log \frac{\pi_B(s_h|s_{h+1})}{\pi_F(s_{h+1}|s_h)}+V(s_{h+1})\right].
\end{align}
Therefore, 
\begin{gather}
 \mathbb{E}_{P_F(s_h)\pi_F( s_{h+1}|s_h)}\left[\log \pi_B(s_h|s_{h+1})+V(s_{h+1})-\log \pi_F(s_{h+1}|s_h)-V(s_h)\right]=0\\
 \Downarrow\nonumber\\
         \sum_{l=i}^{j-1}\mathbb{E}_{P_F(s_l\rightarrow s_{l+1})}\left[\log \pi_B(s_l|s_{l+1})+V(s_{l+1})-V(s_l)-\log \pi_F(s_{l+1}|s_l)\right]=0\\
         \Downarrow\nonumber\\
          \mathbb{E}_{P_F(\tau_{i:j})}\left[ \sum_{l=i}^{j-1}\log\pi_B(s_l|s_{l+1})+V(s_j)-V(s_i)-\sum_{l=i}^{j-1}\log \pi_F(s_{l+1}|s_l)\right]=0\\
          \Downarrow\nonumber\\
          \mathbb{E}_{P_F(\tau_{i:j})}\left[\log P_B(\tau_{i:j}|s_j)+V(s_j)-\log P_F(\tau_{i:j}|s_i)-V(s_i)\right]=0
\end{gather} for any $i<j \in [H+1]$.
\end{proof}
\subsection{Proof of Theorem~\ref{tb-the}}\label{tb-proof}
\begin{proof}
    We first prove the \textbf{sufficiency} of the Sub-TB condition~(\ref{Sub-TB}). Assume that $F$ is a state flow function over $\mathcal{S}$, and $\pi_F$ is a forward policy, such that they satisfy the Sub-TB condition. Then, for any $h\in [H]$:
\begin{gather}
    \mathbb{E}_{P_\mathcal{D}(s_h\rightarrow s_{h+1})}\left[\log \pi_B(s_h|s_{h+1})F(s_{h+1})-\log \pi_F(s_{h+1}|s_h)F(s_h)\right]=0\\
    \Downarrow\nonumber\\
       \log \pi_B(s_h|s_{h+1})F(s_{h+1})-\log \pi_F(s_{h+1}|s_h)F(s_h)=0\label{condition2}
        \\
    \Downarrow\nonumber\\
    \log F(s_h)=\log\frac{\pi_B(s_h|s_{h+1})}{\pi_F(s_{h+1}|s_h)}+\log F(s_{h+1}).\label{F-eq}
\end{gather} Here, the second equation holds due to the following two reasons. First, by our assumption that the trajectory distribution $P_{\mathcal{D}}$, which is induced by $\pi_{\mathcal{D}}$, assigns positive probability to all trajectories in $\mathcal{T}$. Thus, the marginal probability $P_\mathcal{D}(s\rightarrow s')$ is strictly positive for any $(s\rightarrow s')\in \mathcal{E}$. Second, $\pi_{\mathcal{D}}$ is arbitrarily constructed and may differ from $\pi_F$. Let
\begin{align}
    \pi^\dagger(s'|s):=\frac{ \pi_B(s|s')F(s')}{\sum_{s'}\pi_B(s|s')F(s')}.\end{align} Then, taking the expectation of~(\ref{F-eq}) w.r.t. $\pi_F$, we have for any $h\in [H]$:
\begin{align}
    \log F(s_h)&= \mathbb{E}_{\pi_F( s_{h+1}|s_h)}\left[\log\frac{\pi_B(s_h|s_{h+1})}{\pi_F(s_{h+1}|s_h)}+\log F(s_{h+1})\right]\label{F-eq2}\\&=\mathbb{E}_{\pi_F( s_{h+1}|s_h)}\left[\log\frac{\pi^\dagger(s_{h+1}|s_h)}{\pi_F(s_{h+1}|s_h)}\right]+\log \sum_{s_{h+1}}\pi_B(s_h|s_{h+1})F(s_{h+1})\nonumber\\&=-\KL(\pi_F(\cdot|s_h)\|\pi^\dagger(\cdot|s_h))+\log \sum_{s_{h+1}}\pi_B(s_h|s_{h+1})F(s_{h+1}).
\end{align} 
 Note that the second term in the last equality is independent of $\pi_F(\cdot|s_h)$. Then, given $\pi_B(s_h|\cdot)$ and $F(s_{h+1})$, $F(s_h)$ is maximized at $s_h$ when $\pi_F(\cdot|s_h)=\pi^\dagger(\cdot|s_h)$. Accordingly, given $\pi_B(s_{h+1}|\cdot)$ and $F(s_{h+2})$, $F(s_{h+1})$ is maximized at $s_{h+1}$ when $\pi_F(\cdot|s_{h+1})=\pi^\dagger(\cdot|s_{h+1})$. Keeping doing this recursion from $h=0$ to $h=H$, we get the conclusion that $F$ is maximized when $\pi_F=\pi^\dagger$ for any $(s\rightarrow s')\in \mathcal{E}$. 
 
 Taking the exponential of (\ref{condition2}) and summing over $s_{h+1}$ yields $\sum_{s_{h+1}}\log \pi_B(s_h|s_{h+1})F(s_{h+1})=\sum_{s_{h+1}}\log F(s_h)\pi_F(s_{h+1}|s_h)=F(s_{h})$. Inserting this back into~(\ref{condition2}), we arrive at $\pi_F=\pi^\dagger$. Achieving the maximum, we have $\log F(x)=\log \sum_{s_f}\pi(x|s_f)F(s_f):=\log R(x)=\log F^\ast(x)$, $\log F(s_{H-1})=\log \sum_{x}\pi_B(s_{H-1}|x)F^\ast(x)=\log F^\ast(s_{H-1}),\ldots, \log F(s_0)=\log \sum_{s_1}\pi_B(s_0|s_1)F^\ast(s_1)=F^\ast(s_0)$. Therefore, $F(s)=F^\ast(s)$ for any $s\in \mathcal{S}$. Combining this with ~(\ref{condition2}), we have $\pi_F=\pi^\ast_F$, where $\pi^\ast_F(s'|s)=\frac{\pi_B(s|s')F^\ast(s')}{F^\ast(s)}$ is the optimal forward policy induced by $\pi_B$. Moreover, based on~(\ref{F-eq2}) 
 and a derivation analogous to that of $V$, we have, for Markovian $P_B$ and any $h\in [H]$:
\begin{align}
    \log F(s_h):=\log F^\ast(s_h)-\KL(P_F(\tau_{h:}|s_h)\|P_B(\tau_{h:}|s_h)).\label{F-eq3}
\end{align}
Therefore, when $\pi_F=\pi_F^\ast$, we have:
\begin{align}
    \log F(s_h)=\log F^\ast(s_h)- \KL(P_F^\ast(\tau_{h:}|s_h)\|P_B(\tau_{h:}|s_h)).~\label{tb-conclusion}
\end{align}
\\
Now we prove the \textbf{necessity} of the Sub-TB condition~(\ref{Sub-TB}). Assume that $F$ and $\pi_F$ satisfy the equation~(\ref{tb-conclusion}) above. Then, the equation~(\ref{F-eq3}) must also hold for $F$ and $\pi_F$, and we have, for any $h\in[H]$: 
\begin{align}
    \log F(s_h)&=\mathbb{E}_{P_F(\tau_{h:}|s_h)}\left[\log \frac{P_B(\tau_{h:})Z^\ast}{P_F(\tau_{h:}|s_h)}\right]\nonumber\\&=\mathbb{E}_{P_F(\tau_{h:}|s_h)}\left[\sum_{i=h}^H \log \frac{\pi_B(s_i|s_{i+1})}{\pi_F(s_{i+1}|s_i)}\right]+\log Z^\ast\nonumber\\
   &=\mathbb{E}_{\pi_F(s_{h+1}|s_h)}\left[\log \frac{\pi_B(s_h|s_{h+1})}{\pi_F(s_{h+1}|s_h)}+\mathbb{E}_{P_F(\tau_{h+1:}|s_{h+1})}\left[\log \frac{P_B(\tau_{h+1:})Z^\ast}{ P_F( \tau_{h+1:}|s_{h+1})}\right]\right]\nonumber\\&=\mathbb{E}_{\pi_F(s_{h+1}|s_h)}\left[\log \frac{\pi_B(s_h|s_{h+1})}{\pi_F(s_{h+1}|s_h)}+\log F(s_{h+1})\right].
\end{align}
We can rewrite the above equation as follows:
\begin{align}
    \log F(s_h):=-\KL(\pi_F(\cdot|s_h)\|\pi^\dagger(\cdot|s_h))+\log \sum_{s_{h+1}}\pi_B(s_h|s_{h+1})F(s_{h+1}), \quad \forall h\in [H].
\end{align} 
Since $\pi_F(\cdot|s_h)$ is independent of $\pi_B(s_h|\cdot)$ and $F(s_{h+1})$, the assumption that $\log F(s_h)$ is maximized at $s_h$ indicates $\pi_F(\cdot|s_h)=\pi^\dagger(\cdot|s_h)$. Therefore, we recover $\forall h\in [H]$:
\begin{align}
    \log \pi_F(s_{h+1}|s_h)F(s_h)=\log \pi_B(s_h|s_{h+1})F(s_{h+1}),
\end{align} and the Sub-TB condition can be easily derived from it.
\end{proof}
 
\subsection{Proof of Theorem~\ref{back-eb-the}}\label{b-eb-proof}
\begin{proof}
We first prove the \textbf{sufficiency} of the backward Sub-EB condition~(\ref{back-Sub-EB}). Assume that $W$ is an evaluation function over $\mathcal{S}\backslash \{s_f\}$ that satisfies the backward Sub-EB condition for a given backward policy $\pi_B$. 
Then, for any $h\in [H-1]$:
\begin{gather}
    \mathbb{E}_{\pi_B( s_h|s_{h+1})P_B(s_{h+1})}\left[\log \pi_B(s_h|s_{h+1})+W(s_{h+1})-\log \pi_F(s_{h+1}|s_h)-W(s_h)\right]=0\\
    \Downarrow\nonumber\\
        \mathbb{E}_{\pi_B(s_h| s_{h+1})}\left[\log \pi_B(s_h|s_{h+1})+W(s_{h+1})-\log \pi_F(s_{h+1}|s_h)-W(s_h)\right]=0\label{back-condition11}
        \\
    \Downarrow\nonumber\\
    W(s_{h+1})= \mathbb{E}_{\pi_B( s_h|s_{h+1})}\left[\log\frac{\pi_F(s_{h+1}|s_h)}{\pi_B(s_h|s_{h+1})}+W(s_h)\right].\label{W-eq}
\end{gather} Here, the second equation holds by our assumption that any valid $\pi_B$ and $R$ should introduce a a trajectory distribution $P_B(\tau)$ that is strictly positive for any $\tau\in \mathcal{T}$. Consequently, the corresponding state probability $P_B(s)$ is strictly positive for any $s\in \mathcal{S}$.

Based on~(\ref{W-eq}), we have:
\begin{align}
    W(s_1)&=\mathbb{E}_{\pi_B(s_0|s_1)}\left[\log \frac{\pi_F(s_1|s_0)}{\pi_B(s_0|s_1)}\right]+W(s_0)\nonumber\\&=\mathbb{E}_{P_B(s_0\rightarrow s_1|s_1)}\left[\log \frac{P_F(s_0\rightarrow s_1)}{P_B(s_0\rightarrow s_1|s_1)}\right]+\log F(s_0)\nonumber\\&=\mathbb{E}_{P_B(s_0\rightarrow s_1|s_1)}\left[\log\frac{P_F(s_0\rightarrow s_1|s_1)}{P_B(s_0\rightarrow s_1|s_1)}\right]+\log F(s_0)P_F(s_1) \nonumber\\&=\log F(s_1)-\KL(P_B(\tau_{:1}|s_1)\|P_F(\tau_{:1}|s_1)), \\  
       &\vdots\nonumber\\         
W(s_{h+1})&=\mathbb{E}_{\pi_B(s_h|s_{h+1})}\left[\log \frac{\pi_F(s_{h+1}|s_h)}{\pi_B(s_{h}|s_{h+1})}+\mathbb{E}_{P_B(\tau_{:h}|s_h)}\left[\log \frac{ P_F( \tau_{:h})}{P_B(\tau_{:h}|s_h)}\right]\right]+W(s_0)\nonumber\\&=\mathbb{E}_{P_B(\tau_{:h+1}|s_{h+1})}\left[\log \frac{P_F(\tau_{:h+1})}{P_B(\tau_{:h+1}|s_{h+1})}\right]+\log F(s_0)\\&=\mathbb{E}_{P_B(\tau_{:h+1}|s_{h+1})}\left[\log \frac{P_F(\tau_{:h+1}|s_{h+1})}{P_B(\tau_{:h+1}|s_{h+1})}\right]+\log F(s_0)P_F(s_{h+1})\nonumber\\&=\log F(s_{h+1})-\KL(P_F(\tau_{:h+1}|s_{h+1})\|P_B(\tau_{:h}|s_{h+1})), \label{h-w-kl}\\
         &\vdots\nonumber\\W(x)&=\mathbb{E}_{\pi_B(s_{H-1}|x)}\left[\log \frac{\pi_F(x|s_{H-1})}{\pi_B(s_{H-1}|x)}+\mathbb{E}_{P_B(\tau_{:H-1}|s_{H-1})}\left[\log \frac{P_F(\tau_{:H-1})}{P_B(\tau_{:H-1}|s_{H-1})}\right]\right]+W(s_0)\nonumber\\&=\mathbb{E}_{P_B(\tau|x)}\left[\log\frac{P_F(\tau)}{P_B(\tau|x)}\right]+\log F(s_0)\nonumber\\&=\mathbb{E}_{P_B(\tau|x)}\left[\log\frac{P_F(\tau|x)}{P_B(\tau|x)}\right]+\log F(s_0)P_F(x)\nonumber\\&=\log F(x)-\KL(P_B(\tau|x)\|P_F(\tau|x)).
\end{align} It should be noted that the definition of evaluation function~(\ref{W-def}) coincides with the equation~(\ref{h-w-kl}) in that  
\begin{align}
    W(s_{h+1})&=\mathbb{E}_{P_B(\tau|x)}\left[\sum_{i=0}^h \log \frac{\widetilde{\pi}_F(s_{i+1}|s_i)}{\pi_B(s_i|s_{i+1})}\right]=\mathbb{E}_{P_B(\tau|x)}\left[\sum_{i=0}^h \log \frac{\pi_F(s_{i+1}|s_i)}{\pi_B(s_i|s_{i+1})}\right]+\log F(s_0)\nonumber\\&=\mathbb{E}_{P_F(\tau_{h:}|s_h)}\left[\log \frac{P_F(\tau_{:h+1})}{P_B(\tau_{:h+1}|s_{h+1})}\right]+\log F(s_0).\label{w-equi-form}
\end{align}
In particular, when the backward Sub-EB condition is satisfied for $h=H$, we have 
\begin{gather}
    \mathbb{E}_{P_\mathcal{D}(x)}[\log \pi_F(s_f|x)+W(x)-\log \pi_B(x|s_f)- W(s_f)]=0\\
    \Downarrow\nonumber\\
      \mathbb{E}_{P_\mathcal{D}(x)}[W(x)-\log R(x)]=0.
\end{gather} Since $\pi_F$ is arbitrarily chosen provided that $P_\mathcal{D}(\tau)>0$ for any $\tau\in \mathcal{T}$, it follows that $P_\mathcal{D}(x)>0$ for any $x\in \mathcal{X}$ and  $W(x)=\log R(x)$.
\\
\\
Now, we prove the \textbf{necessity} of backward the Sub-EB condition~(\ref{back-Sub-EB}). Assume that $W$ is the evaluation function of a given forward policy $\pi_B$. Then, $\forall h\in [H-1]$:
\begin{align}
    W(s_{h+1})&=\mathbb{E}_{P_B(\tau_{:h+1}|s_{h+1})}\left[\sum_{i=0}^h \log \frac{\pi_F(s_{i+1}|s_i)}{\pi_B(s_i|s_{i+1})}\right]+W(s_0)\nonumber\\
   &=\mathbb{E}_{\pi_B(s_h|s_{h+1})}\left[\log \frac{\pi_F(s_{h+1}|s_h)}{\pi_B(s_h|s_{h+1})}+\mathbb{E}_{P_B(\tau_{:h}|s_h)}\left[\log \frac{ P_F( \tau_{:h})F(s_0)}{P_B(\tau_{:h}|s_h)}\right]\right]\nonumber\\&=\mathbb{E}_{\pi_B(s_h|s_{h+1})}\left[\log \frac{\pi_F(s_{h+1}|s_h)}{\pi_B(s_h|s_{h+1})}+W(s_h)\right].
\end{align} and $W(x)=\log R(x)$.
Therefore, 
\begin{gather}
 \mathbb{E}_{P_B^\mathcal{D}(s_{h+1})\pi_B(s_h| s_{h+1})}\left[\log \pi_B(s_h|s_{h+1})+W(s_{h+1})-\log \pi_F(s_{h+1}|s_h)-W(s_h)\right]=0\\
 \Downarrow\nonumber\\
         \sum_{l=i}^{j-1}\mathbb{E}_{P_B^\mathcal{D}(s_l\rightarrow s_{l+1})}\left[\log \pi_B(s_l|s_{l+1})+W(s_{l+1})-W(s_l)-\log \pi_F(s_{l+1}|s_l)\right]=0\\
         \Downarrow\nonumber\\
          \mathbb{E}_{P_B^\mathcal{D}(\tau_{i:j})}\left[ \sum_{l=i}^{j-1}\log\pi_B(s_l|s_{l+1})+W(s_j)-W(s_i)-\sum_{l=i}^{j-1}\log \pi_F(s_{l+1}|s_l)\right]=0\\
          \Downarrow\nonumber\\
          \mathbb{E}_{P_B^\mathcal{D}(\tau_{i:j})}\left[\log P_B(\tau_{i:j}|s_j)+W(s_j)-\log P_F(\tau_{i:j}|s_i)-W(s_i)\right]=0
\end{gather} for any $i<j \in [H+1]$
\end{proof}

\subsection{Evaluation Function with Total Flow Estimator}\label{eval-Z}
When incorporating the total-flow estimator $Z$, the true evaluation function $V^\dagger$ still takes the form of~(\ref{V-def}), only differing at $\widetilde{\pi}_B(x|s_f)$, which is redefined as $R(x)/Z$. In this case, it can be verified that $V^\dagger(s_0;\theta)=\log \frac{Z^\ast}{Z(\theta)}-\KL(P_F(\tau)\|P_B(\tau))$. Then, the gradient of $-\mathbb{E}_{\mu(s_0;\theta)}[V^\dagger(s_0;\theta)]$ is equal to that of $\KL(P_F(\tau;\theta)\|P_B(\tau))+\frac{1}{2} (\log Z(\theta)-\log Z^\ast)^2$, where $\mu(s_0;\theta):=Z(\theta)/Z$ as $s_0$ is unique~\citep{niu2024gflownet}. 

The Sub-EB condition and objective remain unchanged, except that $P_B(x|s_f)\exp V(s_f)$ is refined as $R(x)/Z$.
\begin{corollary}[Corollary to Theorem~\ref{eb-the}]~\label{eb-the-coll} Suppose $V$ is an arbitrary evaluation function over $\mathcal{S}$, and $F^\ast$ is the optimal flow induced by a backward policy $\pi_B$. Given a forward policy $\pi_F$,  
\begin{align}
    \forall h\in [H]: V(s_h)=\log \frac{F^\ast(s_h)}{Z}-\KL(P_F(\tau_{h:}|s_h)\|P_B(\tau_{h:}|s_h)),
\end{align} if and only if $V$ satisfies the Sub-EB condition~(\ref{Sub-EB}).     
\end{corollary}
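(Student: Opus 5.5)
The plan is to repeat the proof of Theorem~\ref{eb-the} almost verbatim, with one change: the terminal boundary convention $P_B(x|s_f)\exp V(s_f):=R(x)$ becomes $P_B(x|s_f)\exp V(s_f):=R(x)/Z$. Every constant $\log Z^\ast$ then turns into $\log(Z^\ast/Z)$. Equivalently, $\widetilde{\pi}_B(x|s_f)=R(x)/Z$ is $Z^\ast/Z$ times the normalized $\pi_B(x|s_f)=R(x)/Z^\ast$. Since $Z$ is a fixed constant (it is not optimized during the $V$-update), this only shifts the additive constant in the backward recursion.

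For sufficiency, I would first take the Sub-EB condition with $j=i+1=h+1$. Strict positivity of $P_F(s_h)$ removes the outer expectation and gives the one-step Bellman identity
\[
V(s_h)=\mathbb{E}_{\pi_F(s_{h+1}|s_h)}\Big[\log\frac{\pi_B(s_h|s_{h+1})}{\pi_F(s_{h+1}|s_h)}+V(s_{h+1})\Big],
\]
exactly as in~(\ref{V-eq}). At $h=H$ the new boundary gives
\[
V(x)=\mathbb{E}_{\pi_F(s_f|x)}\big[\log(R(x)/Z)-\log\pi_F(s_f|x)\big]=\log\frac{F^\ast(x)}{Z}-\KL(P_F(\tau_{H:}|x)\|P_B(\tau_{H:}|x)),
\]
using $R(x)=F^\ast(x)=P_B(x)Z^\ast$.

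Next I would unroll by backward induction on $h$. The claim is
\[
V(s_h)=\mathbb{E}_{P_F(\tau_{h:}|s_h)}\Big[\log\frac{P_B(\tau_{h:})Z^\ast}{P_F(\tau_{h:}|s_h)}\Big]-\log Z .
\]
Splitting $P_B(\tau_{h:})=P_B(\tau_{h:}|s_h)P_B(s_h)$ and using $P_B(s_h)Z^\ast=F^\ast(s_h)$ then gives $\log\frac{F^\ast(s_h)}{Z}-\KL(\cdot\|\cdot)$. The step I would check most carefully is that the $-\log Z$ term propagates unchanged through the recursion. It does, because the per-step rewards $\log(\pi_B/\pi_F)$ carry no $Z$, and conditional expectations preserve constants.

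For necessity, I would take $V$ of the stated form and rewrite it as the expected cumulative reward $\mathbb{E}_{P_F(\tau_{h:}|s_h)}[\sum_{i\ge h}\log(\widetilde{\pi}_B/\pi_F)]$ with the $Z$-scaled terminal term. Peeling off one step recovers the one-step identity. Summing that identity over $l=i,\dots,j-1$ and taking expectations under $P_F(\tau_{i:j})$ telescopes the $V$ terms and yields~(\ref{Sub-EB}) with the modified boundary, as in the original proof.

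I expect no genuine obstacle. The only care needed is bookkeeping at the terminal edge, so that $R(x)/Z$ is not confused with the normalized $\pi_B(x|s_f)$ and the constant is $\log Z$ rather than $\log Z^\ast$.
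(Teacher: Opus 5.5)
Your route is the same as the paper's: its proof of the corollary just says ``replace $R(x)$ by $R(x)/Z$ in the proof of Theorem~\ref{eb-the}''. Your $\log Z$ bookkeeping and your necessity direction are fine. The gap is the first step of sufficiency: strict positivity of $P_F(s_h)$ does \emph{not} let you remove the outer expectation. For $j=i+1=h+1$ the Sub-EB condition is a single scalar equation $\sum_{s_h\in\mathcal{S}_h}P_F(s_h)\,g(s_h)=0$, where $g(s_h):=\mathbb{E}_{\pi_F(s_{h+1}|s_h)}\bigl[\log\frac{\pi_B(s_h|s_{h+1})}{\pi_F(s_{h+1}|s_h)}+V(s_{h+1})\bigr]-V(s_h)$. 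A positively weighted sum can vanish with terms of both signs. Likewise, at $h=H$ you only get $\mathbb{E}_{P_F(x)}[V(x)-\log(R(x)/Z)]=0$, not $V(x)=\log(R(x)/Z)$. By telescoping, the whole family of conditions over all $i<j$ reduces to just the $H+1$ scalar constraints $\mathbb{E}_{P_F(s_h)}[g(s_h)]=0$, $h\in[H]$. These cannot pin down $V$ on any layer with more than one state. Unlike the Sub-TB proof, you cannot appeal to an arbitrary sampling distribution here, because the expectation is tied to $P_F$.

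Here is a concrete counterexample. Let $V^\dagger$ be the right-hand side of the corollary and set $V=V^\dagger+c$, with $c\neq0$ but $\sum_{s\in\mathcal{S}_h}P_F(s)c(s)=0$ for every $h\in[H]$. For instance, take $H=1$, $\mathcal{X}=\{x_1,x_2\}$, $c(s_0)=0$, $c(x_1)=\epsilon/P_F(x_1)$, $c(x_2)=-\epsilon/P_F(x_2)$. Each edge-layer residual changes in expectation by $\mathbb{E}_{P_F(s_l)}[c]-\mathbb{E}_{P_F(s_{l+1})}[c]=0$; the boundary convention contributes no $c$ term at $s_f$. So $V$ satisfies every Sub-EB condition while $V\neq V^\dagger$. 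Hence the ``if'' direction is false as literally stated, and the paper's proof contains the same invalid step at (\ref{condition11}).

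To repair it, impose the condition conditionally on the start state: require $\mathbb{E}_{P_F(\tau_{i:j}|s_i)}[\delta_V(\tau_{i:j})]=0$ for every $s_i\in\mathcal{S}_i$. Then $j=i+1$ gives exactly your per-state Bellman identity, and your backward induction goes through unchanged. Necessity still holds for this stronger condition, since your telescoping argument works conditionally on $s_i$.
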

\begin{proof}
    The proof can be done by replacing $R(x)$ in the proof of Theorem~\ref{eb-the} by $R(x)/Z$ 
\end{proof}

Finally, in Algorithm~\ref{algo-1}, $\nabla_\theta \mathbb{E}_{\mu(s_0;\theta)}[V^\dagger(s_0;\theta)]=\mathbb{E}_{\mu(s_0)}[V^\dagger(s_0)\nabla_\theta \log Z(\theta)+\nabla_\theta V^\dagger(s_0;\theta)]$ is approximated by $\widehat{\nabla}_\theta \mathbb{E}_{\mu(s_0;\theta)}[V^\dagger(s_0;\theta)]:=
\frac{1}{K}\sum_{\tau \in \mathcal{D}}[\sum_{h=0}^HR(s_h\rightarrow s_{h+1})\nabla_\theta \log Z(\theta)+\sum_{h=0}^{H}A^\gamma(s_h\!\rightarrow\!s_{h+1})\nabla_\theta\log \pi_F(s_{h+1}|s_h;\theta)]$.
\subsection{Comparison between $\lambda$-TD and Sub-EB objectives}\label{compare-trad-subeb}
The traditional $\lambda$-TD objective for $V(\cdot\,;\phi)$ can be expressed as follows:
\begin{gather}
\mathbb{E}_{P_F(\tau)}\left[\sum_{h=0}^{H}\left(V^\lambda(s_h)-V(s_h;\phi)\right)^2\right],\nonumber\\ V^\lambda(s_h):=V(s_h)+\sum_{i=h}^{H} \lambda^{i-h}\delta_V(s_i\rightarrow s_{i+1}),\label{trad-V-obj}
\end{gather} where $V^\lambda$ is considered as constant when computing gradients w.r.t. $\phi$, and $\delta_V(s_i\rightarrow s_{i+1})$ is equal to $\delta_V(\tau_{i,i+1})$ as defined in~(\ref{SubEB-obj}). Without consideration of gradient computation, the expression of the objective value can be simplified as:
\begin{gather}
\mathbb{E}_{P_F(\tau)}\left[\sum_{h=0}^H\sum_{i=h}^{H} \lambda^{i-h}(\delta_V(s_i\rightarrow s_{i+1}))^2\right].
\end{gather} In comparison, the expression of the Sub-EB objective value is:
\begin{gather}
\mathbb{E}_{P_F(\tau)}\left[\sum_{\tau_{i,j}\,:\,i<j\in[H+1]}w_{j-i}(\delta_V(\tau_{i:j}))^2\right].
\end{gather}  It can be observed that the $\lambda$-TD objective only considers the events that start at step $h$ for learning $V(s_h)$, and edges-wise mismatch $\delta_V(s_i\rightarrow s_{i+1})$. In contrast, the Sub-EB objective incorporates information from both events that start at 
$h$ and those that end at $h$ by considering subtrajectory-wise mismatches $\delta_V(\tau_{i:(\cdot)}) $ that start at $i(\geq h)$ and $\delta_V(\tau_{(\cdot):j})$ that end at $j(\leq h)$. This results in a more balanced and reliable learning of $V(s_h)$. Besides, the form of $w$ can be freely chosen, while it must be $\lambda^{i-h}$ in the $\lambda$-TD objective~\citep{Schulmanetal_ICLR2016}.

\subsection{GFlowNet and RL}~\label{review-rl}
RL methods can be roughly categorized into two main frameworks \citep{sutton2018reinforcement}: the first is the Q-learning framework, and the second is the actor–critic framework.
\paragraph{Q-learning} Under the Q-learning framework, the core idea, exemplified by the soft Q-learning algorithm, is to learn a parameterized function $Q$ that minimizes the \textbf{offline} Bellman objective for the transition environment $\mathcal{G}$ with edge reward $\log \pi_B(s|s')$ and $\log\pi_B(x|s_f)+V(s_f):=\log R(x)$~\citep{haarnoja2017reinforcement}. This objective function can be written as 
\begin{align}
    \mathbb{E}_{P_\mathcal{D}(s\rightarrow s')}[(\delta_{Q}(s\rightarrow s'))^2],\quad \delta_{Q}(s\rightarrow s'):=\log \pi_B(s|s')+V(s')-Q(s, s'),\label{off-bell}
\end{align} with  $V(s):=\log \sum_{s'}\exp Q(s,s')$ and $\pi_F(s'|s):= \frac{\exp Q(s,s')}{\exp V(s)}$.  Any function $Q$ that achieves zero mismatch is guaranteed to equal the optimal soft Q-function $Q^\ast$ with the corresponding $V=V^\ast$ and $\pi_F=\pi_F^\ast$. \citet{tiapkin2024generative} showed that if we treat $Q(s,s')$ as $\log F(s\rightarrow s')$ such that $V(s)=\log \sum_{s'} F(s\rightarrow s')=\log F(s)$, then the Bellman objective transforms into the DB objective. The distinction lies only in parameterization: instead of parameterizing $Q(s,s')$ directly and deriving $V$ and $\pi_F$ from it, one may parameterize $(\pi_F, V)$ jointly and define $Q(s,s'):=\pi_F(s'|s)+\log V(s)$\footnote{To be specific,  we have $\delta_Q(s'\rightarrow s')=\log \pi_B(s|s')\exp V(s')-\log \pi_F(s'|s)\exp V(s)=\log \pi_B(s|s')F(s')-\log \pi_F(s'|s)F(s)$, coinciding with the formulation in the DB objective.}. The authors further showed that the optimal solutions of the Bellman objective coincide with those of the DB objective from the perspective of Soft Q-learning. However,
as acknowledged by the authors, this result applies only to the DB objective with a fixed $\pi_B$. To address this limitation, ~\citet{deleu2024discrete} established an equivalence between path-consistency learning (which incorporates soft Q-learning as a special case) and the Sub-TB objective (which incorporates DB as a special case) from a gradient-based perspective. Compared to~\citet{deleu2024discrete}, our Theorem 3.2 offers a more direct and explicit connection along this RL direction. 

In analogy to soft Q-learning, we may instead parameterize $V$, define $Q(s,s'):=\log \pi_F(s'|s)+V(s)$ and $\pi_F(s'|s):= \frac{\pi_B(s|s')\exp V(s')}{\sum_{s'}\pi_B(s|s')\exp V(s') }$, and learn $V$ via the state-level Bellman objective:
\begin{align}
    \mathbb{E}_{P_\mathcal{D}(s)}[(\delta_{E}(s))^2],\quad \delta_{E}(s):=\log \sum_{s'} \pi_B(s|s')\exp V(s')-\log \sum_{s'}\exp Q(s, s'),
\end{align} 
where $\log \sum_{s'}\exp Q(s, s')=\log \sum_{s'}\pi_F(s'|s)\exp V(s)=V(s)$. Notably, RFI~\citep{he2025random} follows this formulation, differing in that it parameterizes $F(s)$ as $\exp V(s)$ and multiplies $F(s)$ by a rectified scalar $g(s)$. However, this definition of $\pi_F$ can lead to computational scalability issues when sampling trajectories. Let $Ch(s):\{s'\in \mathcal{S} \,|\, {(s\rightarrow s') \in \mathcal{E}}\}$ denote the set of child states of state $s$.  As $\pi_\mathcal{D}$ is equal to or design based on $\pi_F$, making a transition from $\pi_\mathcal{D}(\cdot\,|\,s)$ at state $s$ requires $|Ch(s)|$ separate passes through $V$ (a neural network with scalar outputs) to obtain the value of $V(s')$ for every $s'\in Ch(s)$. In contrast, 
$\pi_F(\cdot\,|\,s)$ can be readily obtained from $Q$ (a neural network with vector outputs), where a single pass at $s$ yields $Q(s,s')$ for every $s'\in Ch(s)$.  In tasks like BN structure learning, where $|Ch(s)|$ can be larger than $10^2$, methods that only parameterize $V$ can be much slower than those that jointly parameterize $(V,\pi_F)$ or directly parameterize $Q$.

\paragraph{Actor-Critic} Our works presented here and in \citet{niu2024gflownet} are based on the theory of policy-gradient~\citep{agarwal2021theory}, which operates under the actor–critic framework~\citep{sutton2018reinforcement,haarnoja2018soft}. In this framework, the central idea of the soft actor-critic algorithm is to learn a parameterized evaluation function $V$ that minimizes (but not necessarily) the \textbf{online} Bellman objective:
\begin{align}
    \mathbb{E}_{P_F(s\rightarrow s')}[(\delta_{S}(s\rightarrow s'))^2],\quad \delta_{S}(s\rightarrow s'):=\log \pi_F(s'|s)+V(s)-Q(s, s'),
\end{align} with $Q(s,s'):=\log \pi_B(s|s')+V(s')$ and $Q(x,s_f):=\log R(x)$, where $ \mathbb{E}_{P_F(s\rightarrow s')}[\ldots]$ above can be generalized into $ \mathbb{E}_{P_\mathcal{D}(s)}[\mathbb{E}_{\pi_F(s'|s)}[\ldots]]$, but the inner online expectation must still be maintained. Notably, the online Bellman objective can be viewed as a special case of the proposed Sub-EB objective. \textbf{At this point, the two RL directions begin to diverge}~\citep{schulman2017equivalence}. When $V$ achieves the optimal solution of the objective (denoted by $V^\dagger$), we have:
\begin{align}
V(s)&=V^\dagger(s)=\mathbb{E}_{\pi_F(s'|s)}[Q(s,s')-\log \pi_F(s'|s)]\nonumber\\&=\mathbb{E}_{\pi_F(s'|s)}\left[\frac{\exp Q(s,s')}{ \log \sum_{s'}\exp Q(s,s')    }-\log \pi_F(s'|s)+\log \sum_{s'}\exp Q(s,s')  \right  ]\nonumber\\&=- D_{KL}\left(\pi_F(\cdot|s)\|\pi_Q(\cdot|s) \right) +\log \sum_{s'}\exp Q(s,s'),  \quad \pi_Q(s'|s):= \frac{\exp Q(s,s')}{\sum_{s'} \exp Q(s,s')}.\nonumber
\end{align}
 Defining $F(s\rightarrow s'):=\exp Q(s,s')$ yields expressions that coincide with those presented in the main text. This clarifies why $V^\dagger$ (and its approximation $V$) serves as a critic: it evaluates how far the actor $\pi_F$ is from the local optimal policy $\pi_Q$. Here $\pi_Q$ is only locally optimal as $Q(s,s')$ may still deviate from 
 the globally optimal point $Q^\ast(s,s')$. The divergence is then minimized w.r.t. $\pi_F$ using $V$, so that $V^\dagger$ of $\pi_F$ moves closer to the optimal one $V^\ast$. This can be achieved either by simply setting $\pi_F(\cdot\,|s)$ to $\pi_Q(\cdot\,|s)$ for all sampled states or by applying a policy-gradient update along sampled trajectories.

\subsection{Soft actor-critic for GFlowNet}\label{sac}

The soft actor-critic algorithm tailored to GFlowNet training is presented in Alg.~\ref{algo-3}. When all states are visited through sampling in Alg.~\ref{algo-3} and the online Bellman objective of $V$ reaches zero, meaning $\forall s\in \mathcal{S}: D_{KL}(\pi_F(\cdot\,|s),\pi_Q(\cdot\,|s))=0$, and $V=V^\dagger$, we show below that policy $\pi_F$ and $V$ will converge to optimal quantities $\pi_F^\ast$ and $V^\ast$. Starting at terminating states, we have:
         \begin{align}
         V(x)&=-D_{KL}(\pi_F(\cdot|x)\|\pi_{Q}(\cdot|x)) +Q(x, s_f)\nonumber\\&=Q(x,s_f):=\log R(x)\\
           Q(s_{H-1},x)&:=\log \pi_B(s_{H-1}|x)+V(x)\nonumber\\&=\log \frac{F^\ast(s_{H-1}\rightarrow x)}{F^\ast(x)}+\log R(x)=\log F^\ast(s_{H-1}\rightarrow x),
           \end{align} where we use the definition of $\pi_B$. Next, we have: \begin{align}
    V(s_{H-1})&:=- D_{KL}\left(\pi_F(\cdot|s_{H-1})\|\pi_{Q}(\cdot|s_{H-1})\right)+\log\sum_x\exp  Q(s_{H-1},x)\nonumber\\&=\log\sum_x F^\ast(s_{H-1},x)=\log F^\ast(s_{H-1}),\\
    \pi_F(x|s_{H-1})&=\pi_{Q}(x|s_{H-1}):=\frac{\exp Q(s_{H-1},x)}{\sum_x \exp Q(s_{H-1},x)}\nonumber\\&=\frac{F^\ast(s_{H-1}\rightarrow x)}{F^\ast(s_{H-1})}:=\pi_F^\ast(x|s_{H-1}).
         \end{align} Continuing this recursion, we will arrive at $V(s)=\log F^\ast(s)$, $Q(s\rightarrow s')=\log F^\ast(s\rightarrow s')$, and $\pi_F(s'|s)=\pi_F^\ast(s'|s)$ for all states and edges.

The online expectation over $\pi_F$ makes Alg.~\ref{algo-3} computationally expensive and precludes improvements to the edge-wise formulation $\delta_S(s\rightarrow s')$. To address this, one may modify it into Alg.~\ref{algo-4}, where the key difference is online trajectories sampled from $P_F(\tau)$. The definition of $\delta_S(\tau_{i,j})$ is identical to that of $\delta_V(\tau_{i,j})$. 
   % \begin{figure*}
   %     \begin{minipage}{0.48\textwidth}
       \begin{algorithm}[H]\caption{Soft Actor-Critic  Workflow}
 \begin{algorithmic}
 \REQUIRE $\pi_F(\cdot\,|\cdot\,;\theta)$, $\pi_B(\cdot\,|\cdot)$, $V(\cdot\,;\phi)$, batch size $K$, number of total iterations $N$
\FOR{$n=1,\ldots,N$}
\STATE $\mathcal{D}\gets \{\tau^{k}|\tau^k\sim P_\mathcal{D}(\tau)\}_{k=1}^K$
\STATE Based on $\mathcal{D}$, update $\phi$ by its gradients w.r.t. $\frac{1}{K}\sum_{\tau\in\mathcal{D}}[\sum_{s\in \tau}\mathbb{E}_{\pi_F(s'|s)}[\delta_S(s\!\rightarrow\!s';\phi)^2]]$.
\STATE Based on $\mathcal{D}$ and $V$, setting $\pi_F(\cdot\,|s)$ to $\pi_Q(\cdot\,|s)$ for any $s(\neq s_f) \in \mathcal{D}$.
\ENDFOR
\RETURN $\pi_F(\cdot\,|\cdot\,;\theta)$, $V(\cdot\,;\phi)$
\end{algorithmic}\label{algo-3}
\end{algorithm}    
    % \end{minipage}
    % \hspace{5mm}
% \begin{minipage}{0.48\textwidth}
\begin{algorithm}[H]\caption{Modified Soft Actor-Critic Workflow}
 \begin{algorithmic}
 \REQUIRE $\pi_F(\cdot\,|\cdot\,;\theta)$, $\pi_B(\cdot\,|\cdot)$, $V(\cdot\,;\phi)$, batch size $K$, number of total iterations $N$
\FOR{$n=1,\ldots,N$}
\STATE $\mathcal{D}\gets \{\tau^{k}|\tau^k\sim P_F(\tau)\}_{k=1}^K$
\STATE Based on $\mathcal{D}$, update $\phi$ by its gradients w.r.t. $\frac{1}{K}\sum_{\tau\in\mathcal{D}}[\sum_{(s\rightarrow s')\in \tau}\delta_S(s\rightarrow s';\phi)^2]$ ( or $\frac{1}{K}\sum_{\tau\in\mathcal{D}}[\sum_{\tau_{i,j}\in \tau}w_{j-i}\delta_S(\tau_{i,j};\phi)^2]$).
\STATE Based on $\mathcal{D}$ and $V$, setting $\pi_F(\cdot\,|s)$ to $\pi_Q(\cdot\,|s)$ for any $s(\neq s_f) \in \mathcal{D}$.
\ENDFOR
\RETURN $\pi_F(\cdot\,|\cdot\,;\theta)$, $V(\cdot\,;\phi)$
\end{algorithmic}\label{algo-4}
\end{algorithm}    
    % \end{minipage}
    % \end{figure*}

Compared to Alg.~\ref{algo-4}, Alg.~\ref{algo-1} improves the basic policy-optimization operator using policy-gradient theory for practical implementation. Since original policy-gradient methods operate strictly in an on-policy manner, Alg.~\ref{algo-2} is introduced to enable the use of an offline sampler $P_\mathcal{D}$.

\section{Experimental settings and results}\label{experment-detail}
\paragraph{Hyperparameters} For both the original policy-based method and the proposed one with the Sub-TB objective (RL and Sub-EB), we set the hyperparameter $\gamma$ to $0.99$ based on the ablation study results reported in~\citet{niu2024gflownet}. For the data collection policy $\pi_\mathcal{D}$ of Sub-TB, the hyperparameter $\alpha$ starts at $1.0$ and decays exponentially at a rate of $0.99$, where the decay rate is also determined based on the results of the ablation study in \citet{niu2024gflownet}. In the Sub-TB objective, the hyperparameter $\lambda$ is set to $0.9$ following the ablation study by~\citet{madan2023learning}. For the Sub-EB objective, $\lambda$ is set to be $0.9$ selected from $\{0.1,0.2,\ldots,0.9,0.99\}$ based on the ablation study results shown in Fig.~\ref{ablation-study-fig}. All experiments are conducted on the Grace cluster, one of the main Texas A\&M High Performance Research Computing systems. Each node is equipped with an Intel Xeon Cascade Lake CPU. For our experiments, we requested a single node, 20 GB of RAM, and no GPU was used. The running times of the considered methods across experiments are reported in Appendix~\ref{time-table}.

\paragraph{Optimization} The Adam optimizer is used for optimization. The sample batch size is set to 128 for each optimization iteration following~\citet{niu2024gflownet}. The learning rates of $\pi_F(\cdot\,|\cdot\,;\theta)$ and $\log F(\cdot\,;\theta)$ are equal to $1\times10^{-3}$, which is selected from $\{5\times 10^{-3},1\times10^{-3},5\times 10^{-4},1\times 10^{-4}\}$ based on the performance of Sub-TB on the $256\times 256$ grid. The learning rate of $V(\cdot\,;\phi)$ is set to $5\times10^{-3}$, which is selected from $\{10^{-2},5\times 10^{-3},10^{-3},5\times 10^{-4}, 10^{-4}\}$ based on the performance of RL on the $256\times 256$ grid. In all experiments, each training method is run five times, initialized from five different random seeds.

\paragraph{Model architecture} 
The forward policy $\pi_F(\cdot\,|\,\cdot;\theta)$ and evaluation function $V(\cdot\,;\phi)$ are each parameterized by a neural network with four hidden layers, with a hidden dimension of 256 in each layer. The backward policy $\pi_B(\cdot\,|\,\cdot)$ is a uniform distribution over valid transitions (edges). In hypergrid, sequence design, and molecular graph design experiments,  coordinate tuples, integer sequences, and integer matrices are transformed using K-hot encoding before entering the neural networks. In BN structure learning, adjacency matrices are used directly as input into neural networks without encoding.

\paragraph{Peformance metrics}

The first one is 
the total variation $D_{\mathrm{TV}}$  between $P_F(x)$ and $P^\ast(x)$, which is defined as: $
    D_{\mathrm{TV}}(P_F(x), P^\ast(x))=\frac{1}{2}\sum_{x\in \mathcal{X}} |P_F(x)-P^\ast(x)|$. An alternative performance metric adopted in literature is the average $l_1$-distance, which is defined as $|\mathcal{X}|^{-1}\sum_x |P^\ast(x)-P_F(x)|$. The reason that we use $D_{\mathrm{TV}}$ instead is as follows. The design space $|\mathcal{X}|$ is usually large ($>10^4$) and $\sum_x |P^\ast(x)-P_F(x)|\leq 2$,  resulting in the average $l_1$-distance being heavily scaled down by $|\mathcal{X}|$. We also evaluate different methods using the Jensen–Shannon divergence $D_{\mathrm{JSD}}$ as the second metric, which can be written as: $
    D_{\mathrm{JSD}}(P_F(x),P^\ast(x))=\frac{1}{2}\KL(P_F(x),P_M(x))+ \frac{1}{2}\KL(P^\ast(x),P_M(x))$, where $P_M:=\frac{1}{2}(P_F+P^\ast)$.

 \begin{figure*}[t!]
  \centering
  \begin{minipage}[t]{0.33\linewidth}
  \centering
\includegraphics[width=1.0\textwidth]{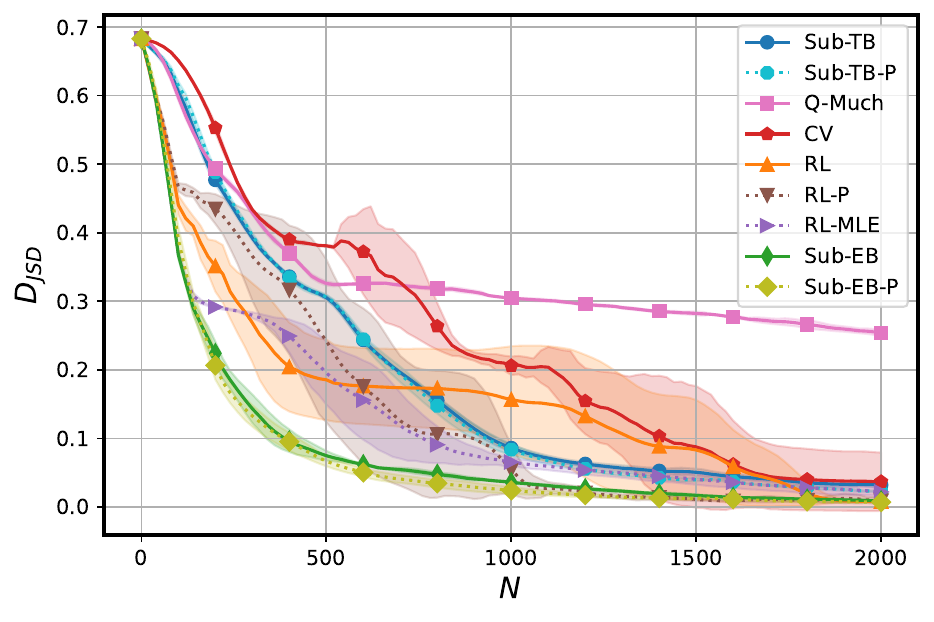}
 \end{minipage}\hspace{-1mm}
   \begin{minipage}[t]{0.33\linewidth}
  \centering
\includegraphics[width=1.0\textwidth]{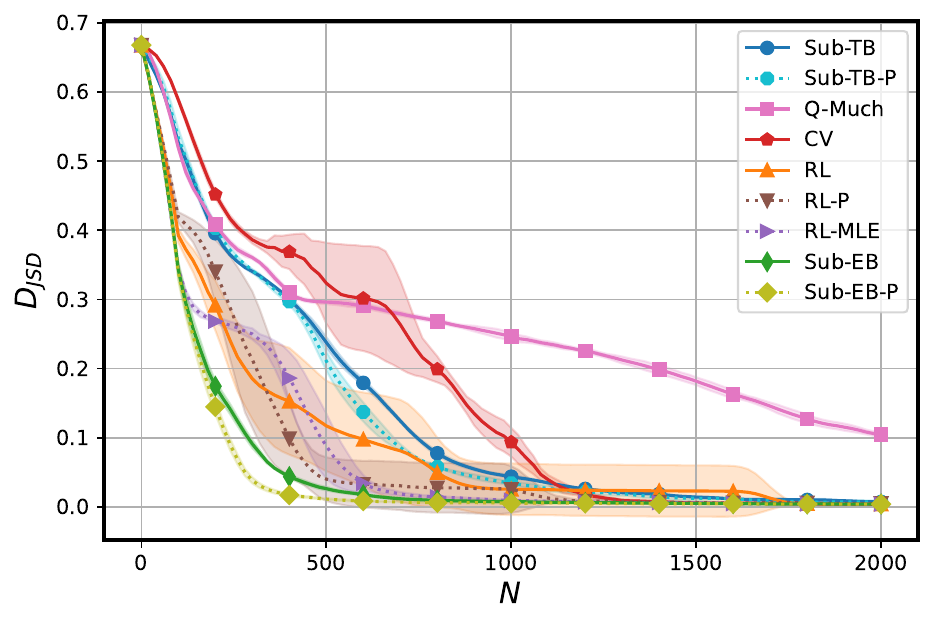}
 \end{minipage}\hspace{-1mm}
    \begin{minipage}[t]{0.33\linewidth}
  \centering
\includegraphics[width=1.0\textwidth]{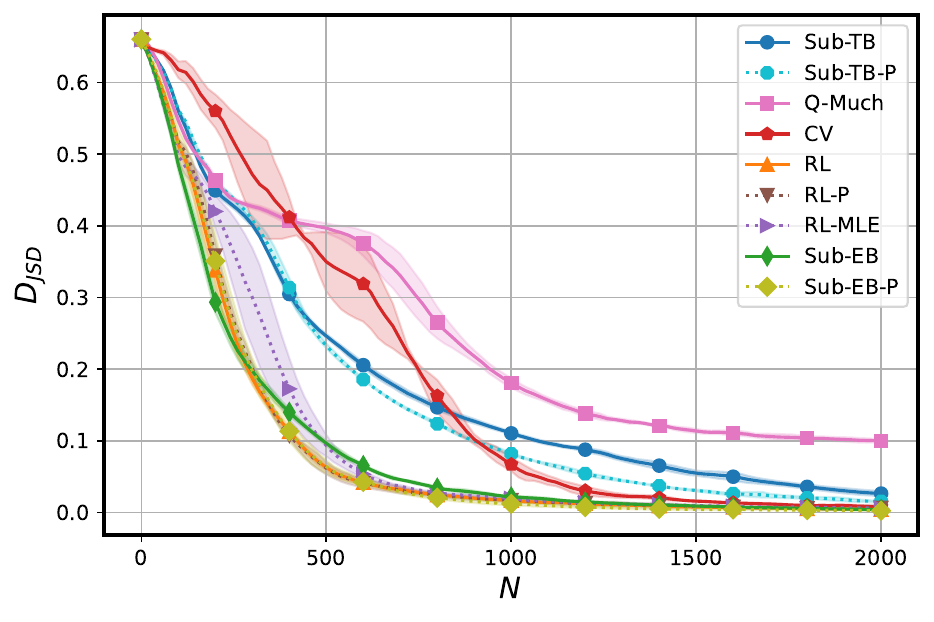}
 \end{minipage}
 \caption{Plots of the means and standard deviations (represented by the shaded area) of $D_{\mathrm{JSD}}$ for different training methods on the $256\times 256$ (left) $128\times 128$ (middle) and $64 \times 64 \times 64 $ (right) grids, based on five randomly started runs for each method. }\label{hyper-64}
 \end{figure*}
  \begin{figure*}[t!]
  \centering
  \begin{minipage}[t]{0.33\linewidth}
  \centering
\includegraphics[width=1.0\textwidth]{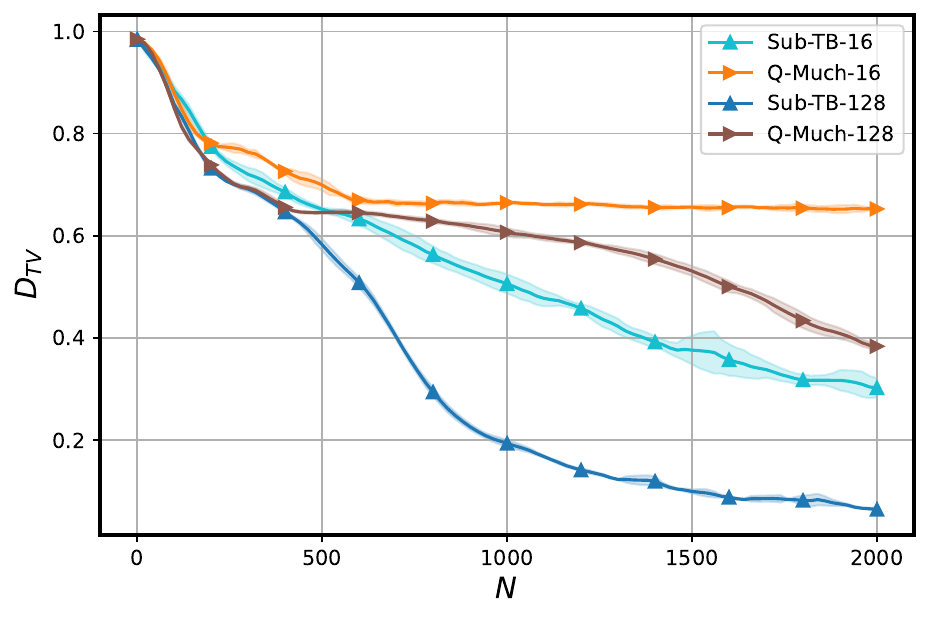}
 \end{minipage}\hspace{8mm}
    \begin{minipage}[t]{0.33\linewidth}
  \centering
\includegraphics[width=1.0\textwidth]{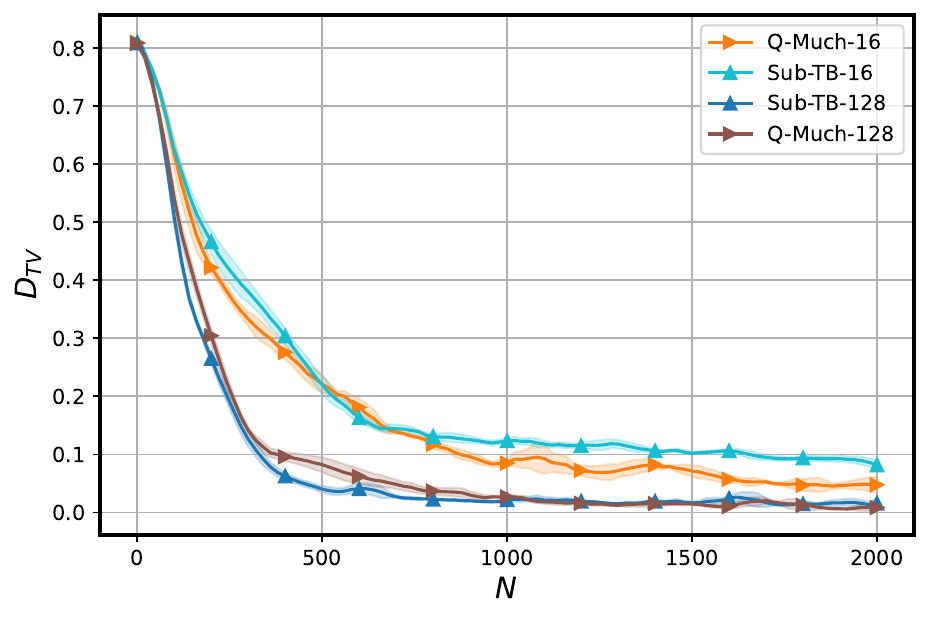}
 \end{minipage}
 \caption{Plots of the means and standard deviations (represented by the shaded area) of $D_{\mathrm{TV}}$ (right) for different training methods on the $128\times 128$ (left) $20\times 20$ (right) grids, based on five randomly started runs for Sub-TB-16 Sub-TB-128, Q-much-16 and Q-much 128.  Here, “16’’ and “128’’ denote the training batch sizes.}\label{evidence-128-20-grid}
 \end{figure*}
 \subsection{Hypergrids}\label{experment-detail-hyper}
The generative process of hypergrid experiments is defined as follows. For a grid with height $H$ and width $D$, the state space excluding the final state, $\mathcal{S} \backslash \{s_f\}$, consists of all $D$-dimensional coordinate vectors of the form
$\left\{s = ([s]_1, \dots, [s]_d, \dots, [s]_D) \mid [s]_d \in \{0, \dots, H-1\}\right\}$. The generating process begins at the initial state $s_0 = (0, \dots, 0)$ and ends in the final state $s_f$, which we denote as $(-1, \dots, -1)$. From any state $s \in \mathcal{S} \backslash \{s_f\}$, there are $D + 1$ valid transitions (edges): (1) for each $d \in \{0, \dots, D\}$, the $d$-th coordinate can be incremented by one, leading to a new state $s' = ([s]_1, \dots, [s]_d + 1, \dots,[s]_D)$; (2) if $s\in \mathcal{X}$, the process can be stopped by taking transition $(s{\rightarrow}s_f)$, returning $s$ as the terminating coordinate tuple $x$. By the definitions above, the GFlowNet DAG $\mathcal{G}$ is \textbf{not} \emph{graded}, meaning every state (excluding $s_f$)  can be returned as a terminating state. The reward function associated with terminating states is defined as:
\begin{align}
R(x)=R_0+R_1\prod_{d=1}^D\mathbb{I}\left[\left|\frac{[s]_d}{N-1}-0.5\right|\in (0.25,0.5]\right]+R_2\prod_{d=1}^D\mathbb{I}\left[\left|\frac{[s]_d}{N-1}-0.5\right|\in (0.3,0.4]\right], \nonumber
\end{align}
where $R_0=10^{-2}$, $R_1=0.5$ and $R_2=2$ in our experiments.

\paragraph{Additional experiment results}

    As discussed in our Appendix~\ref{review-rl}, Q-much may prioritize exploitation compared to the (Sub-)trajectory level methods and perform less effectively on challenging tasks with well-separated modes.
   To verify this, we conducted hypergrid tasks with batch sizes of 16 and 128 (our default setting) on both a $20\times20$ grid and a $128\times 128$ grid. The rationale is as follows. For a grid with height $N$ and dimension $D$, the minimum distance between reward modes depends only on the height $N$. With moderate $D$ and small $N$, even a uniform policy can still reach the modes with considerable probability. Therefore, $N$ plays a more significant role in determining task difficulty. The experimental results are presented in Fig.~\ref{evidence-128-20-grid}.
    On the $20\times20$ grid with batch size $16$, Q-much performs better than Sub-TB. However, this advantage disappears when using batch size 128. For $128 \times 128$ grid with batch size 128,  the performance of Q-much is considerably worse than Sub-TB.  When the batch size is reduced to 16, neither method converges to a satisfactory solution; nevertheless, Q-Much still underperforms Sub-TB.

\paragraph{Ablation study on $\lambda$}
For the $128\times 128$ grid,  we conduct an ablation study on the hyperparameter $\lambda$ of the Sub-EB weights $w_{j-i}(=\lambda^{j-i}/\sum_{i<j\in [H+1]}\lambda^{j-i})$ to investigate its effect on policy-based GFlowNet training. We run Sub-EB methods with $\lambda$ equal to $0.1,0.2,\ldots, 0.90$ and $0.99$. The experimental results are depicted in Fig.~\ref{ablation-study-fig}. It can be observed that the Sub-EB method with $\lambda=0.9$ achieves the best performance. Although convergence rates and final performances differ, Sub-EB methods under all configurations exhibit good stability, demonstrating that the proposed Sub-EB objective enables reliable learning of the evaluation function $V$.

  \begin{figure*}[t]
  \centering
  \begin{minipage}[t]{0.33\linewidth}
  \centering
\includegraphics[width=1.0\textwidth]{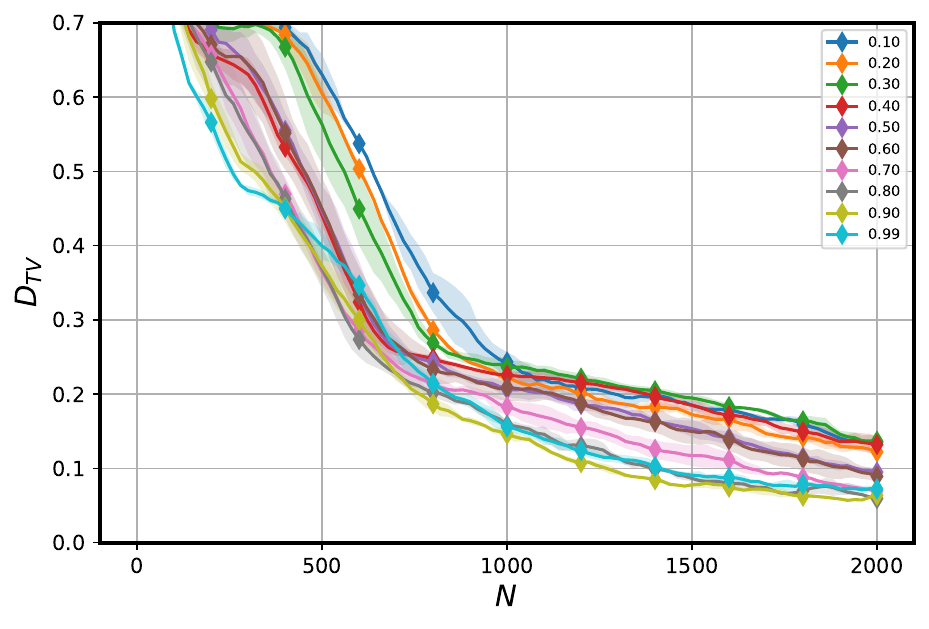}
 \end{minipage}\hspace{8mm}
   \begin{minipage}[t]{0.33\linewidth}
  \centering
\includegraphics[width=1.0\textwidth]{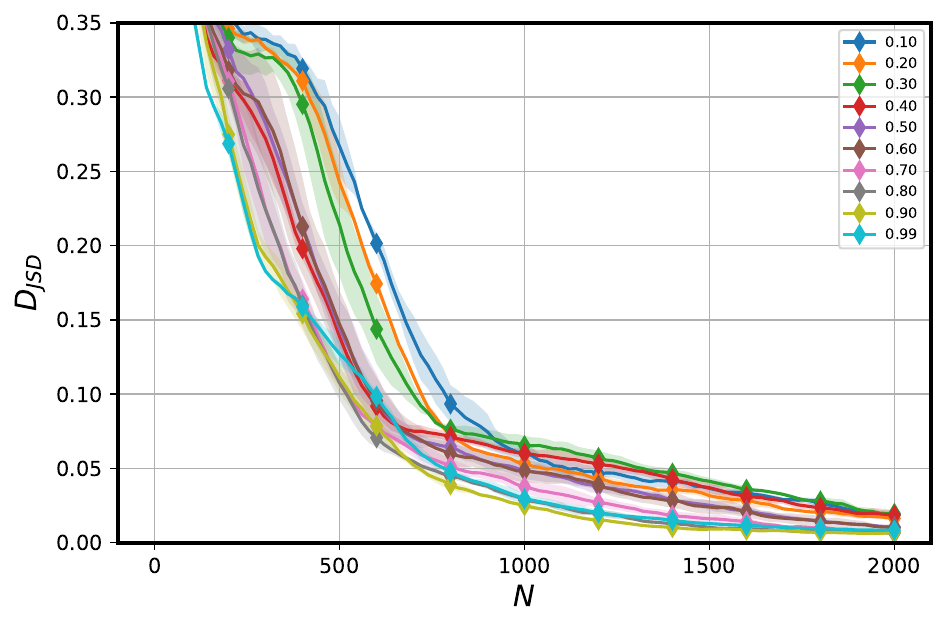}
 \end{minipage}\caption{Plots of the means and standard deviations (represented by the shaded area) of $D_{\mathrm{TV}}$ (left) and $D_{\mathrm{JSD}}$ (right) for Sub-EB runs on the $128 \times 128$ grid. The hyperparameter $\lambda$ of Sub-EB ranges from $0.1$ to $0.99$. The results are based on five Sub-EB runs for each setup. }\label{ablation-study-fig}
 \end{figure*}

\subsection{Sequence design}\label{experment-detail-seq}
 The generative process of this set of experiments is defined as follows. The state space excluding the final state $\mathcal{S}\backslash\{s_0,s_f\}$ is equal to $\bigcup_{t=1}^H \{0,\ldots, N-1\}^t$, where each state is a sequence composed of integers ranging from $0$ to $N-1$. The set $\{0,\ldots, N-1\}$ corresponds to the $N$ types of building blocks. The process begins at the initial state $s_0=\emptyset$, that is, the empty sequence. For any intermediate state $s_h\in \mathcal{S}_h$, it contains $h$ elements drawn from $\{0, \ldots, N-1\}$. There are $N\times 2$ possible transitions for $s_{t}$ with $t<D$, corresponding to either appending or prepending one element from $\{0, \ldots, N-1\}$ to the current sequence. The procedure continues until sequences reach length $D$, yielding terminal states in $\mathcal{X}=\{0,\ldots,N-1\}^D$, and transitioning to $s_f:=(N,\ldots,N)$. According to the
definition of the generative process, the GFlowNet DAG $\mathcal{G}$ is \emph{graded} and $\mathcal{S}_D=\mathcal{X}$. In practice, each state $s_t$ is represented as a sequence of fixed length $D$, where the first $t$ elements are integers from $\{0,\ldots,N-1\}$, and the others are equal to $-1$. In particular, the initial state $s_0$ is represented as $(-1,\ldots,-1)$. We use nucleotide sequence datasets (SIX6 and PHO4) and molecular sequence datasets (QM9 and sEH ) from~\citet{shen2023towards}. Rewards are defined as the exponents of raw scores from the datasets, $R(x) = \mathrm{score}^{\beta}(x)$. The hyperparameter $\beta$ is set to $3$, $5$, $3$ and $6$ for SIX6, QM9, PHO4 and sEH, respectively,  and the rewards are normalized to $[10^{-3},10]$, $[10^{-3},10]$, $[0,10]$ and $[10^{-3},10]$, respectively. In this experimental setting, we consider an additional metric introduced by~\citet{shen2023towards}. It is Mode Accuracy~(MA) of $P_F(x)$ w.r.t. $P^\ast(x)$, defined as:
\begin{align}
\text{MA}(P_F(x),P^\ast(x))=\min\left(\frac{\mathbb{E}_{P_F(x)}[R(x)]}{\mathbb{E}_{P^\ast(x)}[R(x)]}, 1\right). 
\end{align} We use dynamic programming~\citep{malkin2022trajectory} to compute $P_F(x)$ and the exact MA, $D_{\mathrm{TV}}$ and $D_{\mathrm{JSD}}$ between $P_F(x)$ and $P^\ast(x)$.

  \begin{figure*}[t]
   \centering
  \begin{minipage}[t]{0.33\linewidth}
  \centering
\includegraphics[width=1.0\textwidth]{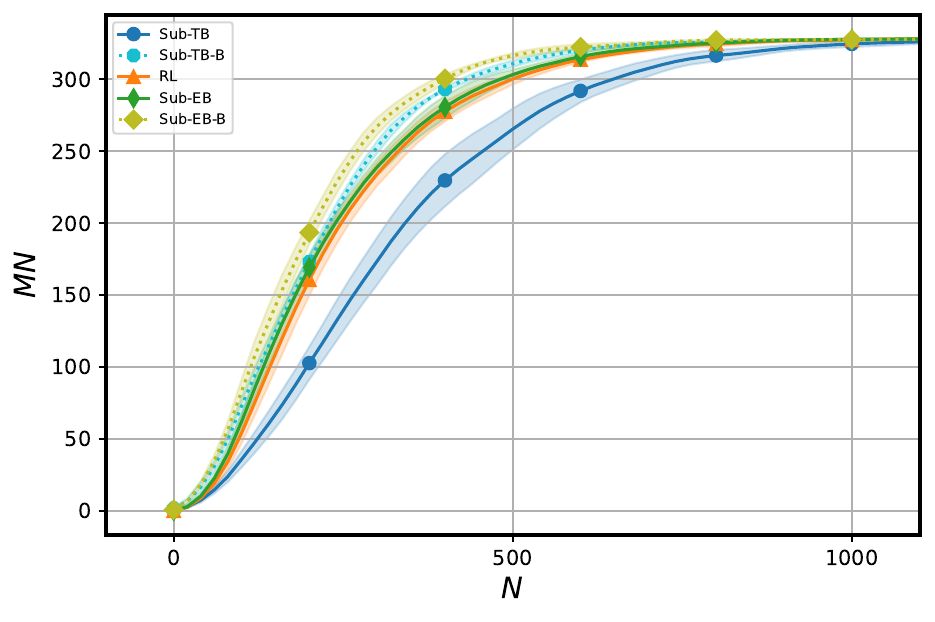}
 \end{minipage}\hspace{-0mm}
  \begin{minipage}[t]{0.33\linewidth}
  \centering
\includegraphics[width=1.0\textwidth]{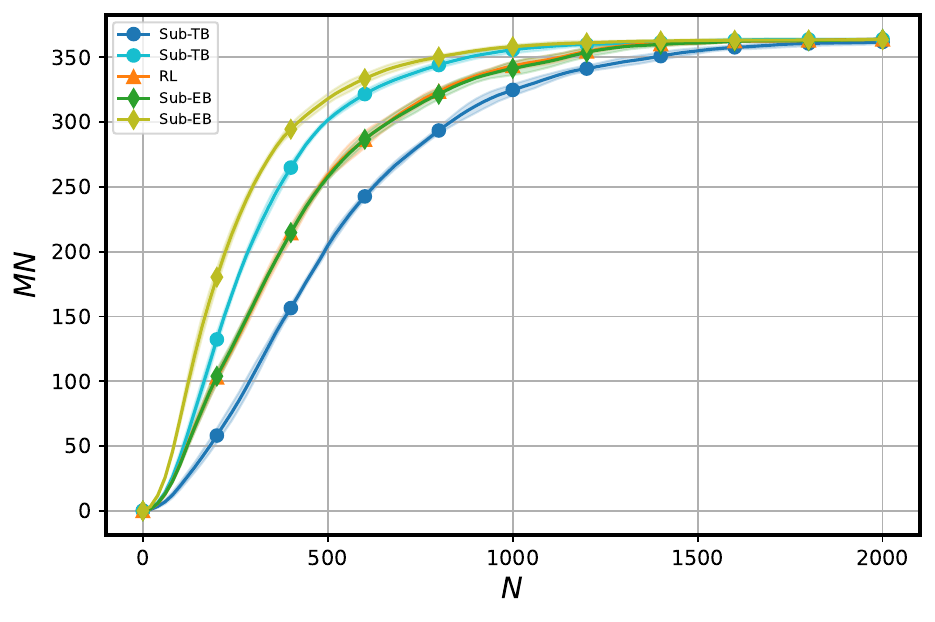}
 \end{minipage}\\
   \begin{minipage}[t]{0.33\linewidth}
  \centering
\includegraphics[width=1.0\textwidth]{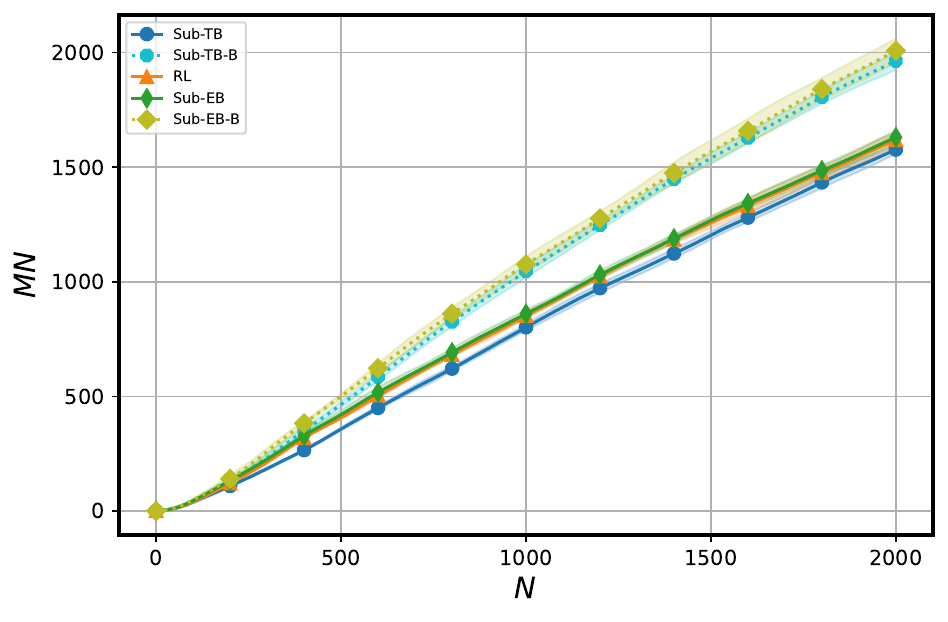}
 \end{minipage} \hspace{-0mm} \begin{minipage}[t]{0.33\linewidth}
  \centering
\includegraphics[width=1.0\textwidth]{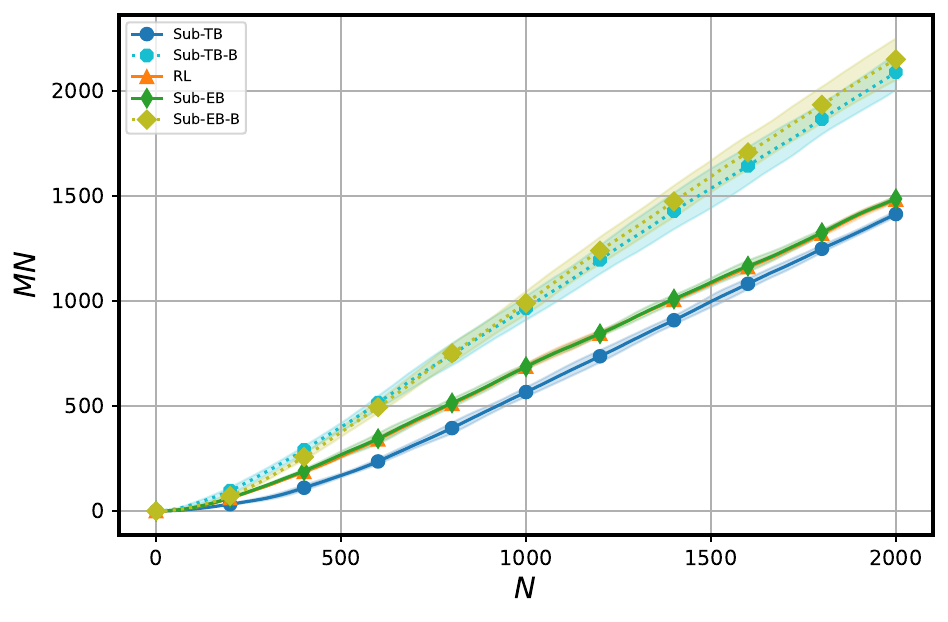}
 \end{minipage} \vspace{-0mm}
\caption{Plots of the mean and standard deviation values (represented by the shaded area) of  MN for the SIX6 (top left), QM9 (top right), PHO4 (bottom left), and sEH (bottom right) dataset, based on five randomly started runs for each method.}\label{tf-ph-qm-seh-mode}\vspace{-0mm}
 \end{figure*}

 \begin{figure*}[t]
  \centering
  \begin{minipage}[t]{0.33\linewidth}
  \centering
\includegraphics[width=1.0\textwidth]{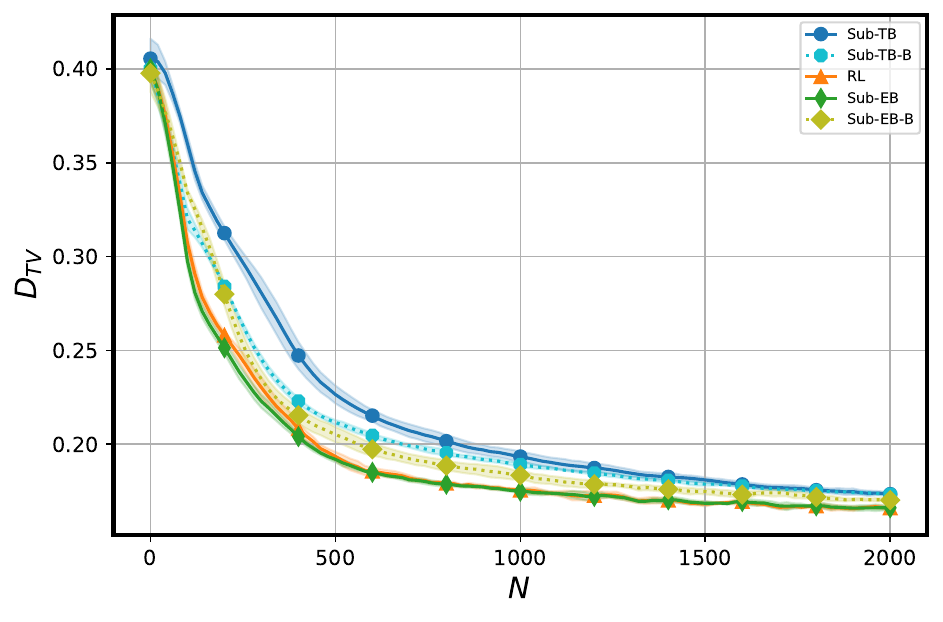}
 \end{minipage}\hspace{-1mm}
   \begin{minipage}[t]{0.33\linewidth}
  \centering
\includegraphics[width=1.0\textwidth]{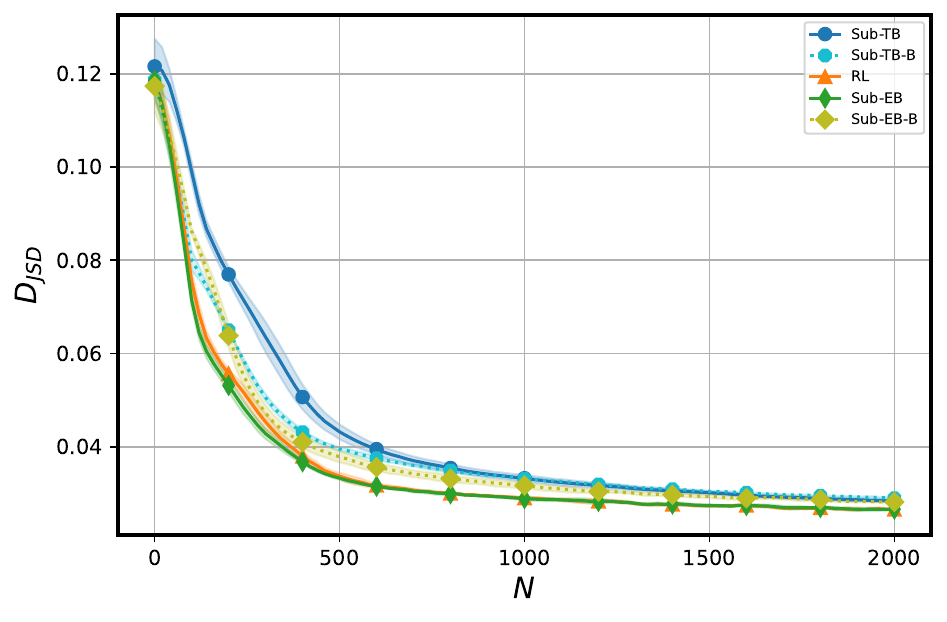}
 \end{minipage}\hspace{-1mm}
 \begin{minipage}[t]{0.33\linewidth}
  \centering
\includegraphics[width=1.0\textwidth]{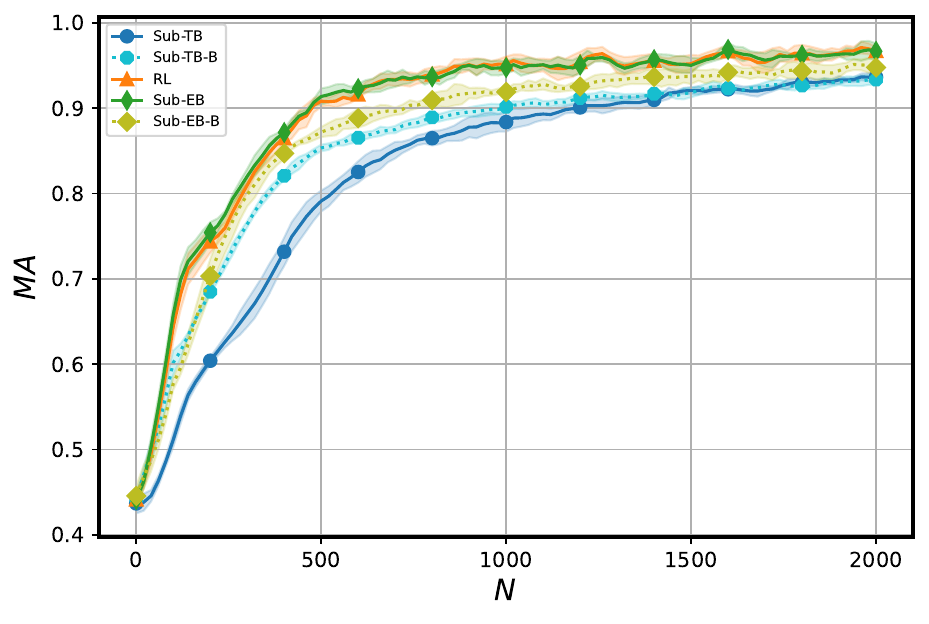} 
 \end{minipage}
 \caption{Plots of the mean and standard deviation values (represented by the shaded area) of $D_{\mathrm{TV}}$ (left), $D_{\mathrm{JSD}}$ (middle), and MA (right) for the SIX6 dataset, based on five randomly started runs for each method. }\label{tf8-tv-jsd}
 \end{figure*}
 
\begin{figure*}[t]
  \centering
  \begin{minipage}[t]{0.33\linewidth}
  \centering
\includegraphics[width=1.0\textwidth]{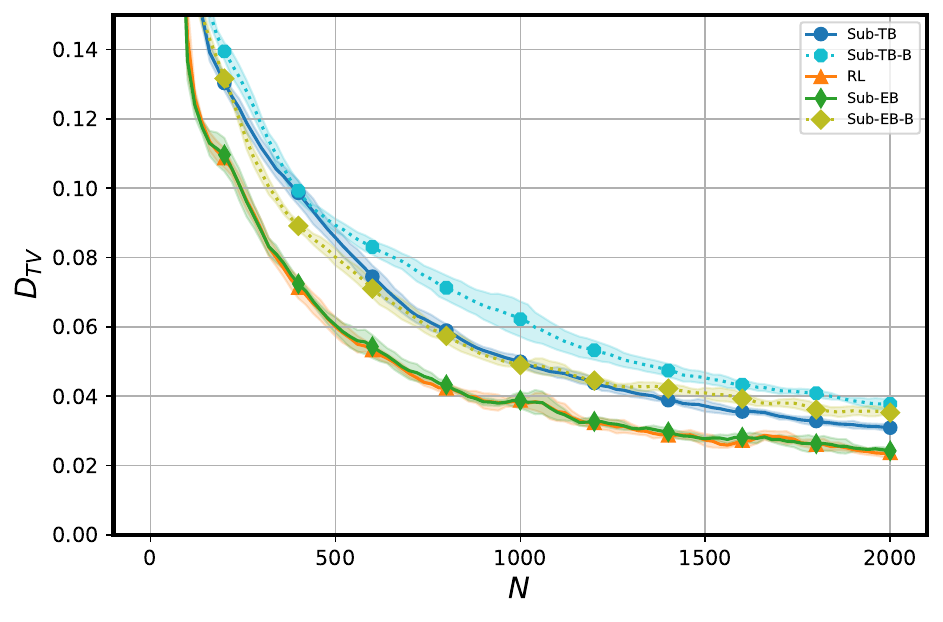}
 \end{minipage}\hspace{-1mm}
   \begin{minipage}[t]{0.33\linewidth}
  \centering
\includegraphics[width=1.0\textwidth]{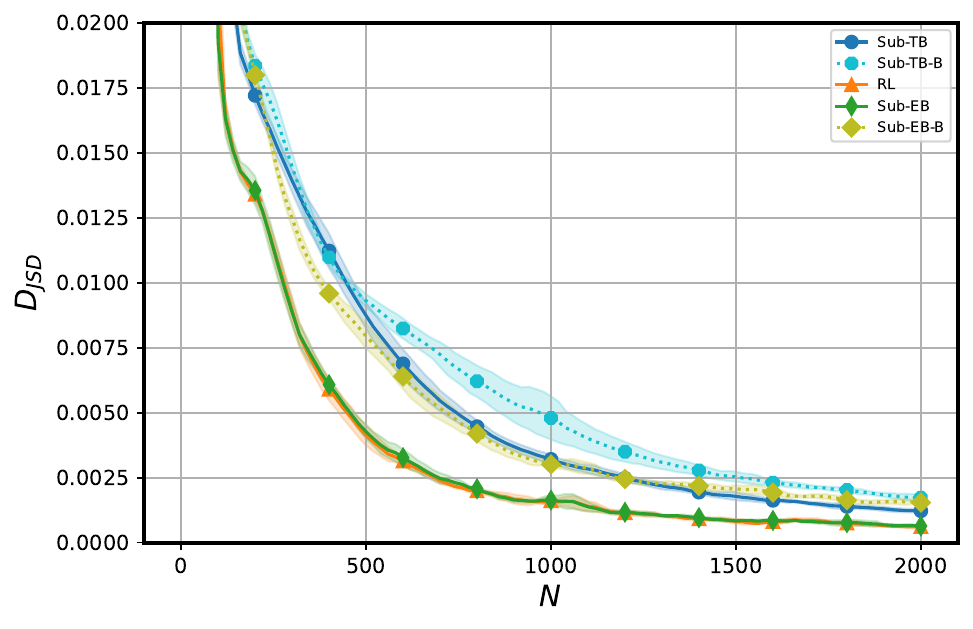}
 \end{minipage}\hspace{-1mm}
  \begin{minipage}[t]{0.33\linewidth}
  \centering
\includegraphics[width=1.0\textwidth]{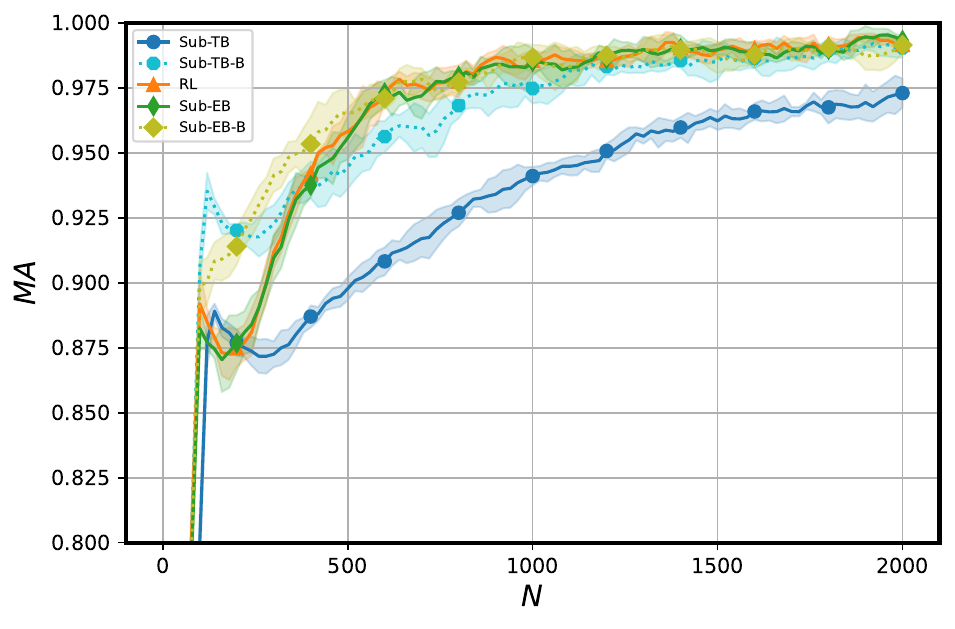}
 \end{minipage}\caption{Plots of the mean and standard deviation values (represented by the shaded area) of $D_{\mathrm{TV}}$ (left), $D_{\mathrm{JSD}}$ (middle), and MA (right) for the QM9 dataset, based on five randomly started runs for each method.}\label{QM9-tv-jsd}
\end{figure*}
 \begin{figure*}[t]
  \centering
  \begin{minipage}[t]{0.33\linewidth}
  \centering
\includegraphics[width=1.0\textwidth]{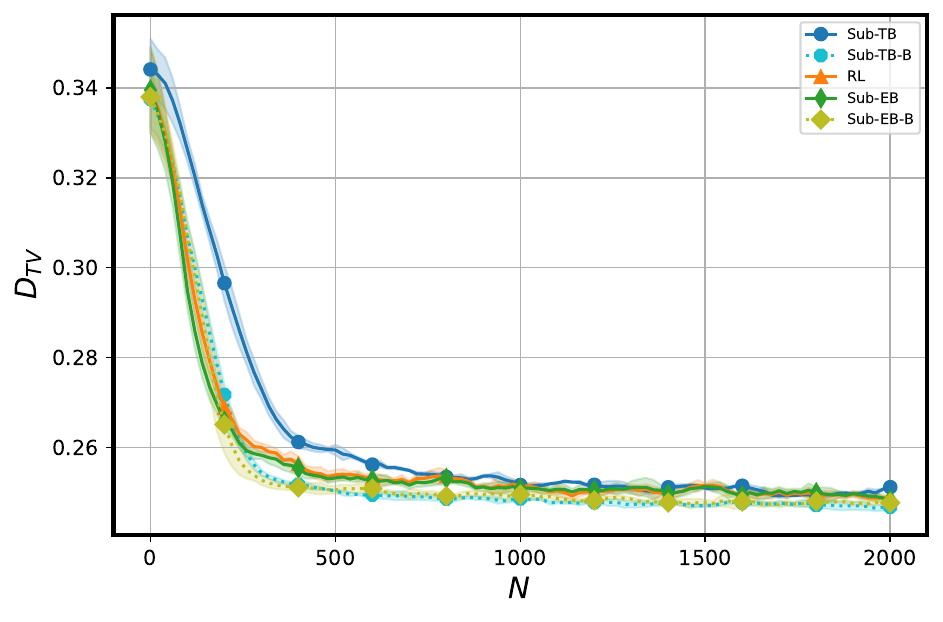}
 \end{minipage}\hspace{-1mm}
  \begin{minipage}[t]{0.33\linewidth}
  \centering
\includegraphics[width=1.0\textwidth]{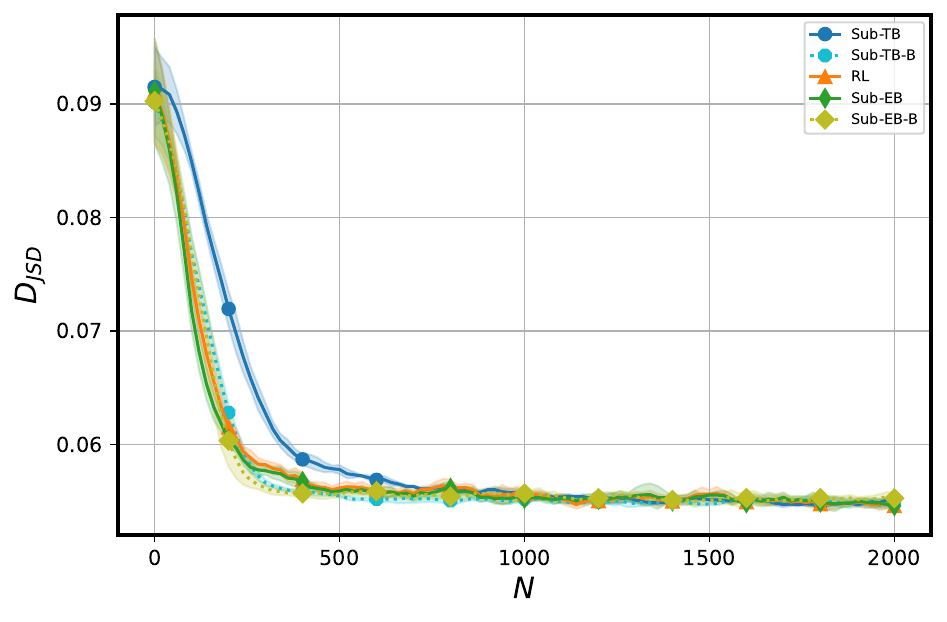}
 \end{minipage}\hspace{-1mm}
   \begin{minipage}[t]{0.33\linewidth}
  \centering
\includegraphics[width=1.0\textwidth]{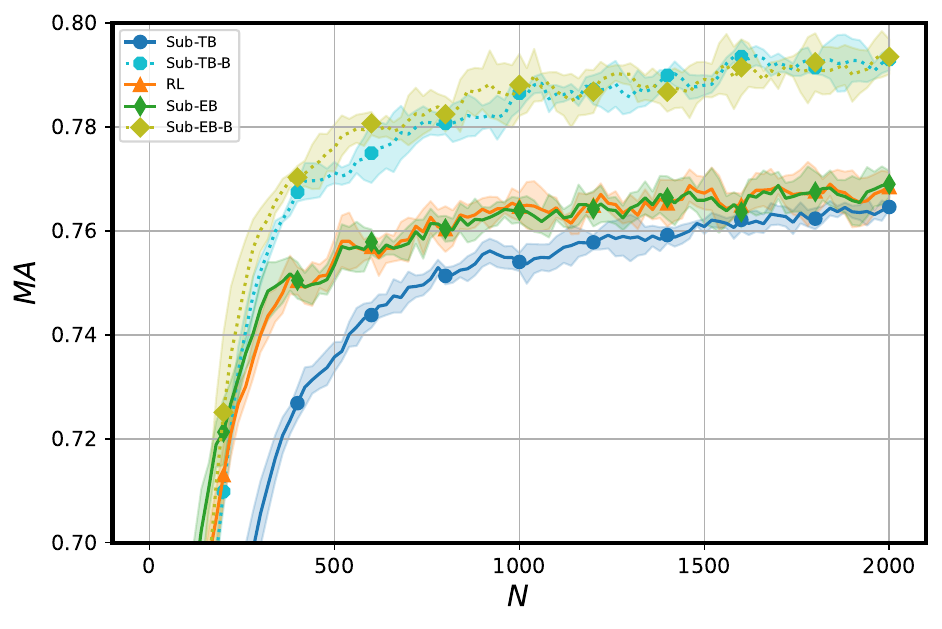}
 \end{minipage}
 \caption{Plots of the mean and standard deviation values (represented by the shaded area) of $D_{\mathrm{TV}}$ (left), $D_{\mathrm{JSD}}$ (middle), and MA (right) for the PHO4 dataset, based on five randomly started runs for each method.}\label{pho4-tv-jsd}
 \end{figure*}

  \begin{figure*}[h!]
   \centering
  \begin{minipage}[t]{0.33\linewidth}
  \centering
\includegraphics[width=1.0\textwidth]{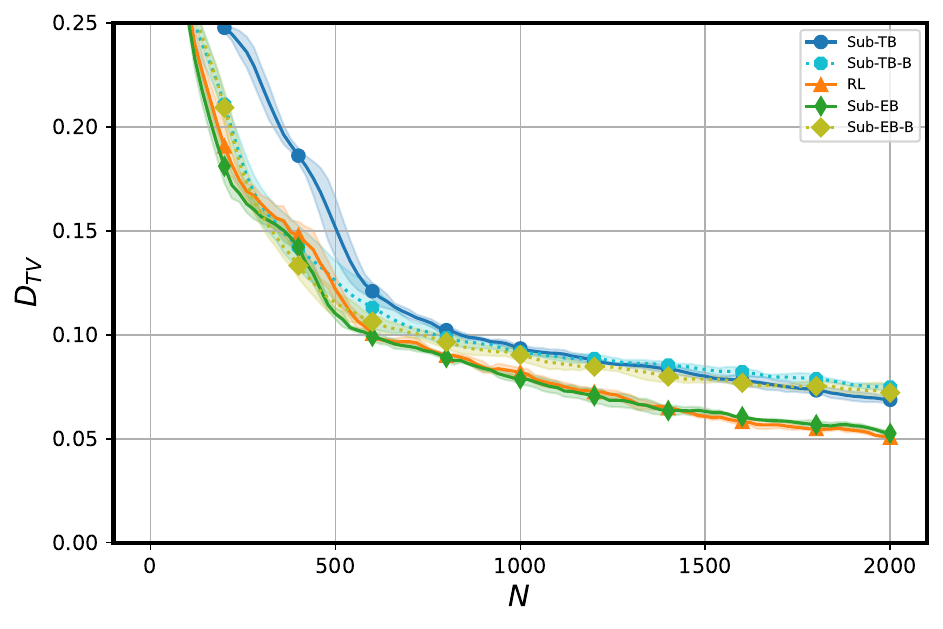}
 \end{minipage}\hspace{-1mm}
  \begin{minipage}[t]{0.33\linewidth}
  \centering
\includegraphics[width=1.0\textwidth]{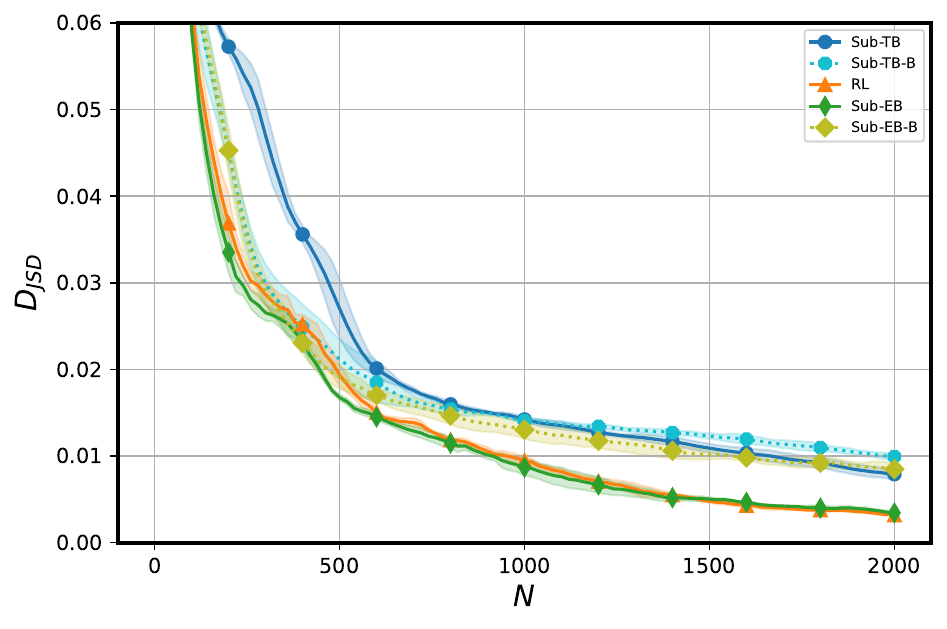}
 \end{minipage}\hspace{-1mm}
   \begin{minipage}[t]{0.33\linewidth}
  \centering
\includegraphics[width=1.0\textwidth]{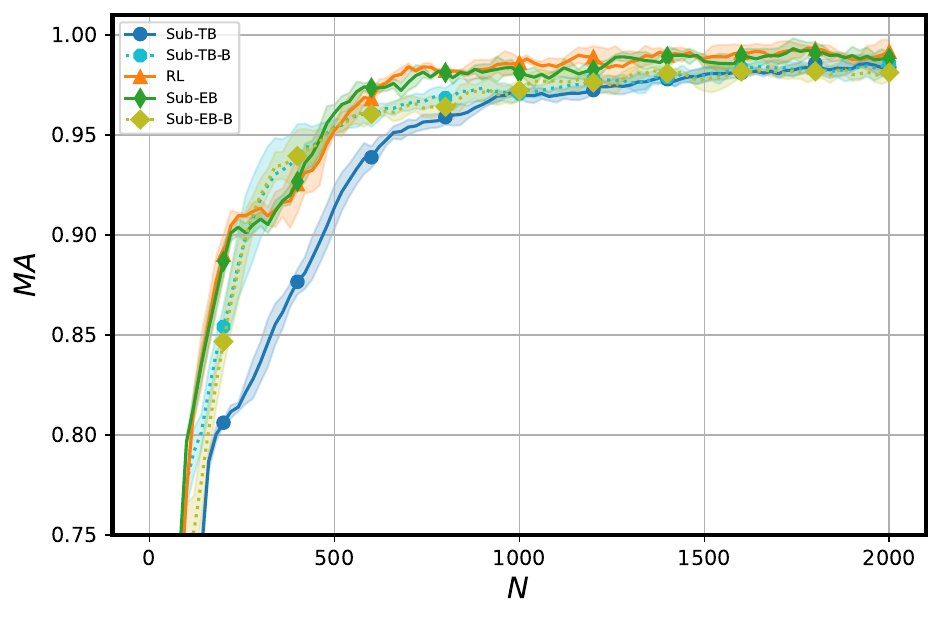}
 \end{minipage} 
\caption{Plots of the mean and standard deviation values (represented by the shaded area) of $D_{\mathrm{TV}}$ (left), $D_{\mathrm{JSD}}$ (middle), and MA (right) for the sEH dataset, based on five randomly started runs for each method.}\label{seh-tv-jsd}
 \end{figure*}

 In this set of experiments, we focus not only on distribution modeling but also on mode discovery, where the goal is to uncover high-reward terminating states. In addition to RL, Sub-EB, and Sub-TB methods, we consider Sub-TB-B and Sub-EB-B, which explicitly promote the exploration of high-reward states during trajectory sampling.

\paragraph{Distribution modeling Results} 
We compare different training methods by their performance measured by MA, $D_{\mathrm{TV}}$ and $D_{\mathrm{JSD}}$, as shown in Figs.~\ref{tf8-tv-jsd},~\ref{QM9-tv-jsd},~\ref{pho4-tv-jsd}, and ~\ref{seh-tv-jsd} for SIX, QM9, PHO4, and sEH datasets respectively. It can be seen that Sub-EB performs slightly better than RL. This can be ascribed to the sufficient stability of RL in these experiments, rendering the advantages brought by the Sub-TB objective less obvious.
Nevertheless, Sub-EB and RL outperform Sub-TB in terms of both convergence rate and final performance. 
For the offline variants, it can be observed that Sub-EB-B performs slightly better than Sub-TB-B, but performs worse than Sub-EB for all datasets except PHO4. The experiment results validate that the proposed Sub-EB objective can be applied to the policy-based training workflow for learning the valuation function $V$ effectively.

\paragraph{Mode discovery Results} 
 We measure performance using Mode Number (MN), defined as the number of unique high-reward terminating states discovered during training. A terminating state is regarded as highly rewarded if its reward falls within the top $0.5\%$ for the SIX6, QM9, and PHO4 datasets, and the top $0.01\%$ for the sEH dataset. As depicted in Fig.~\ref{tf-ph-qm-seh-mode}, Sub-EB-B and Sub-TB-B can find a fixed number of unique modes faster and discover more unique modes within a fixed number of optimization iterations, despite a decline in distribution modeling performance on all datasets except PHO4. These results validate our offline policy-based training workflow and support our claim that the proposed Sub-EB objective enables the integration of offline sampling techniques.

Combined with the mode discovery results in Fig.~\ref{tf-ph-qm-seh-mode}, this indicates that offline techniques that encourage high-reward terminating states are helpful for mode discovery, while they may hinder accurate distribution modeling.

\subsection{BN structure learning}\label{experment-detail-bn}

A Bayesian Network is a probabilistic model, representing the joint distribution of $N$ random variables, whose factorization is determined by the network structure $x$, which is a DAG graph. Accordingly, the distribution can be written as $P(y_1,\ldots,y_N)=\prod_{n=1}^N P(y_n|Pa_x(y_n))$, where $Pa_x(y_n)$ denotes the parent nodes of $y_n$ in graph $x$. Since any graph structure can be encoded as an adjacency matrix, the state space excluding the final state is defined as $\mathcal{S}\backslash \{s_f\}:=\left\{s|\mathcal{C}(s)=0, s\in\{0,1\}^{N\times N}\right\}$ where $\mathcal{C}$ corresponds to the acyclic graph constraint introduced by~\citet{deleu2022bayesian}. It should be noted that each BN DAG $x\in\mathcal{X}$ corresponds to a state $s\in\mathcal{S}$ in the GFlowNet DAG $\mathcal{G}$. The generative process begins from the initial state $s_0=0^{N\times N}$, representing a graph without edges, and ends at $s_f:=-1^{N\times N}$. For any $s\in \mathcal{S}\backslash s_f $, a transition $(s\rightarrow s')$ corresponds to adding an edge by flipping a zero entry in the adjacency matrix to one, provided that the resulting state $s'$ remains acyclic. Alternatively, the generative process is stopped by transitioning to $s_f$ and returning $s$ as the terminating graph structure. According to the
definition of the generative process, $\mathcal{G}$ is \textbf{not} \emph{graded} for this set of experiments. Given an observed sample set $\mathcal{D}_y$ of $y_{1:N}$, the goal of structure learning is to approximate the posterior distribution $P(x|\mathcal{D}_y)\propto P(\mathcal{D}_y|x)P(x)$. In the absence of prior knowledge about $x$, the prior distribution $P(x)$ is often assumed to be uniform, reducing the task to maximizing the likelihood, $ P(\mathcal{D}_y|x)(\propto P(x|\mathcal{D}_y))$. 

The ground-truth graph structure $x^\ast$ and the corresponding dataset $\mathcal{D}_{y}$ of size $|\mathcal{D}_{y}|=10^3$ are simulated from Erdős–Rényi model~\citep{deleu2022bayesian}. We use BGe score~\citep{kuipers2014addendum} to assess generated graph structures, and define the reward $R(x)=\left(\frac{\Delta(x;\mathcal{D}_y)}{C}\right)^{\beta}$, where $\Delta(x;\mathcal{D}_y)=\text{BGe}(x;\mathcal{D}_y)-\text{BGe}(s_0;\mathcal{D}_y)$, $\beta=10$ sharpens the reward function toward high-score structures, and $C:=\Delta(x^\ast;\mathcal{D}_y)$ normalizes the reward so that $R(x^\ast)=1$. The exact number of DAGs on $n$ nodes, denoted as $a(n)$ satisfies~\citep{robinson2006counting}:
\begin{align}
  a(n) \;=\; \sum_{k=1}^{n} (-1)^{k+1} \binom{n}{k} \, 2^{k(n-k)} \, a(n-k),  
\end{align}
with $a(0) = 1$. For the ease of accessing distribution modeling performance, ~\cite{malkin2022gflownets} set the number of nodes to $5$, resulting in about $2.92 \times 10^4$ possible DAGs. In this setup, the ground-truth DAG contains $5$ edges. In addition to this small-scale case, we also consider two much larger cases with 
$10$ and $15$ nodes, corresponding to about 
$4.18\times 10^{18}$ and $2.38\times 10^{35}$ possible DAGs, respectively. We set the ground-truth DAGs to contain $10$ and $15$ edges in the two respective cases.

  \begin{figure*}[t]
  \centering
  \begin{minipage}[t]{0.32\linewidth}
  \centering
\includegraphics[width=1.0\textwidth]{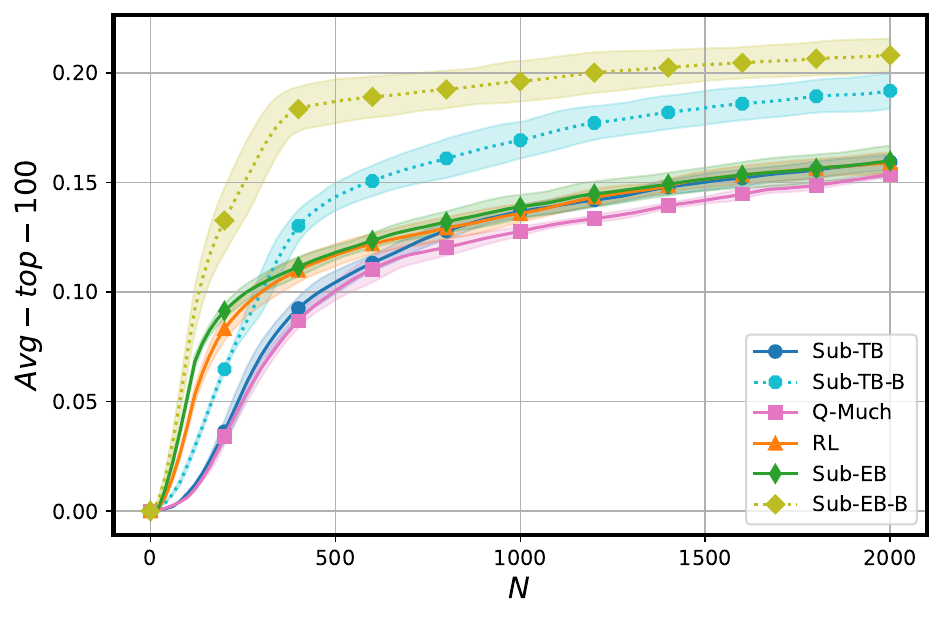}
 \end{minipage}
   \begin{minipage}[t]{0.32\linewidth}
  \centering
\includegraphics[width=1.0\textwidth]{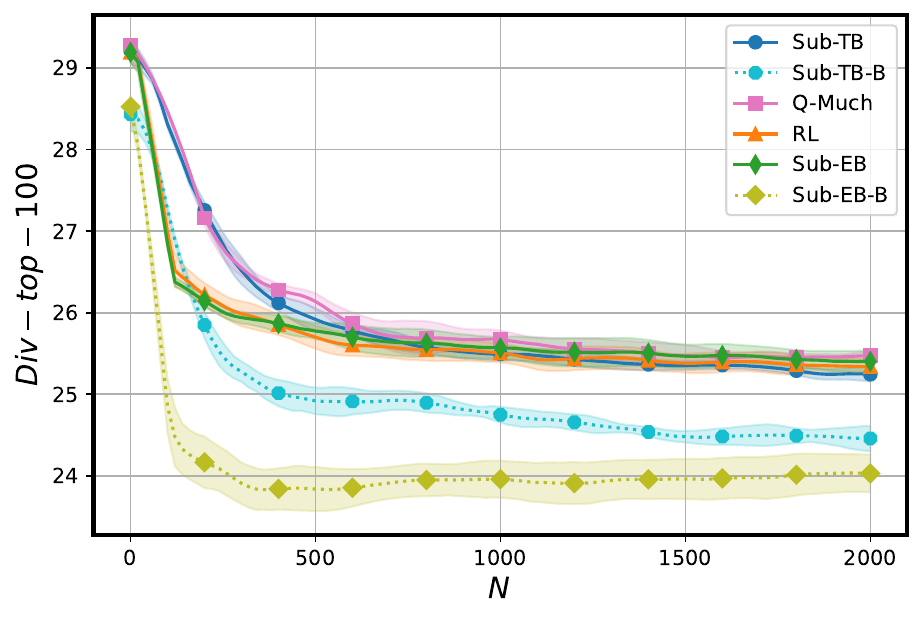}
 \end{minipage}
   \begin{minipage}[t]{0.32\linewidth}
  \centering
\includegraphics[width=1.0\textwidth]{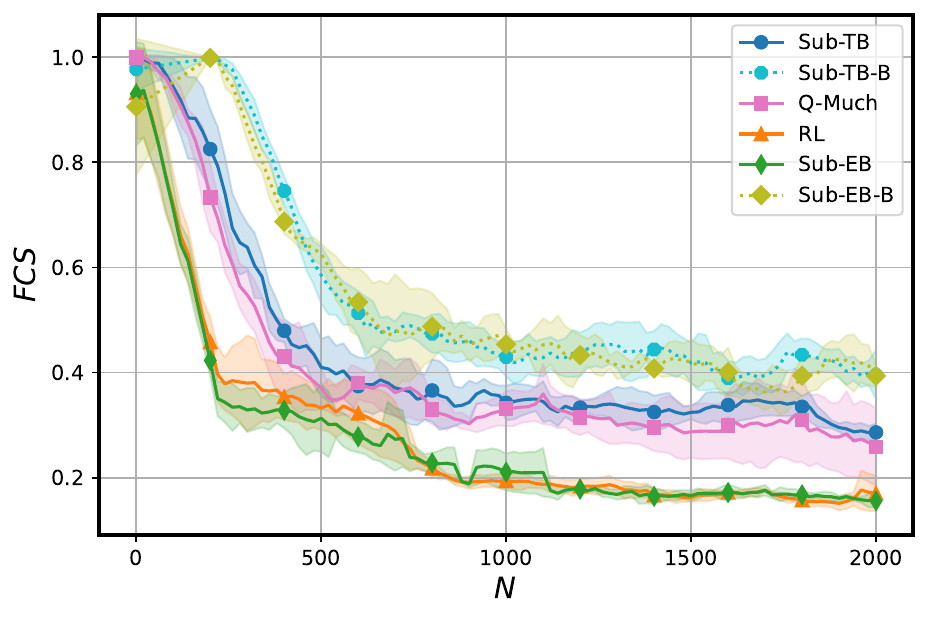}
 \end{minipage}
 \caption{Plots of the mean and standard deviation values (represented by the shaded area) of average reward (left),  diversity (middle) and FCS (right) of the top 100 unique candidate graphs over 15 nodes, based on five randomly started runs for each method.}\label{dag-15-mean-div}
 \end{figure*}
 
 \begin{figure*}[t]
  \centering
  \begin{minipage}[t]{0.33\linewidth}
  \centering
\includegraphics[width=1.0\textwidth]{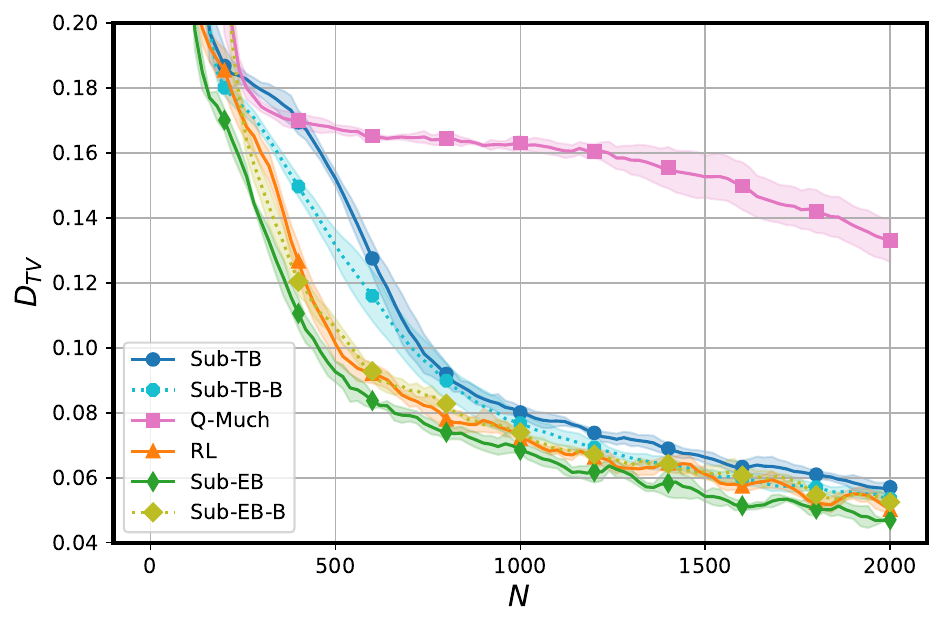}
 \end{minipage}\hspace{8mm}
   \begin{minipage}[t]{0.33\linewidth}
  \centering
\includegraphics[width=1.0\textwidth]{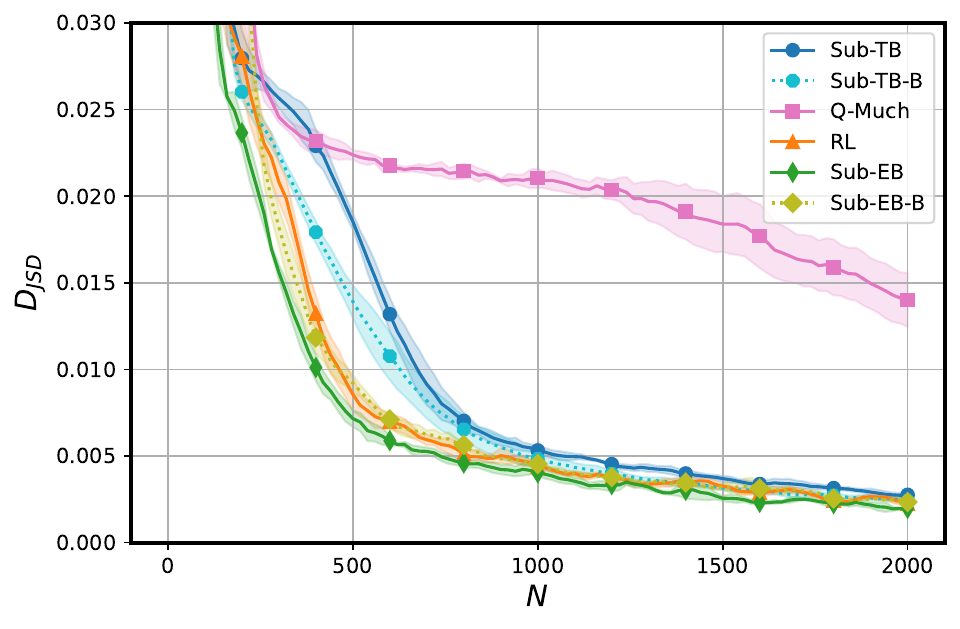}
 \end{minipage}
 \caption{Plots of the mean and standard deviation values (represented by the shaded area) of $D_{\mathrm{TV}}$ (left) and $D_{\mathrm{JSD}}$ (right) for the BN learning structure learning task over 5 nodes, based on five randomly started runs for each method. }\label{dag-tv-jsd}
 \end{figure*}

\paragraph{Additional experimental results} 
For the small-scale case, we use dynamic programming~\citep{malkin2022trajectory} to explicitly compute $P_F(x)$ for learned $\pi_F$, and compute the exact $D_{\mathrm{TV}}$ and $D_{\mathrm{JSD}}$ between $P_F(x)$ and $P^\ast(x)$. In~Fig.~\ref{dag-tv-jsd}, the mean and standard deviation values of $D_{\mathrm{TV}}$ and $D_{\mathrm{JSD}}$ are plotted for five runs of Sub-TB, RL, Sub-EB, and Q-Much, respectively. It can be seen that Sub-EB performs better than the RL, Sub-TB and Q-Much. These results confirm our conclusion that the Sub-EB objective enables more reliable learning of the evaluation function $V$ compared to the $\lambda$-TD objective, thereby improving the performance of policy-based methods. We also include results for Sub-EB-B and Sub-TB-B. It can be seen that they both perform worse than their original versions. These results further support our conclusion that explicitly encouraging exploration of high-reward regions may not be beneficial for distributional modeling performance.

\subsection{Molecular graph design}\label{experment-detail-mol}
Each molecule (graph) is a composition of at most 8 building blocks, selected from $N=105$ predefined molecular substructures provided by~\citet{bengio2021flow}. Each block (graph node) has a list that contains at most $M$ available atom indices for forming bonds (graph edges) with other blocks. Each block’s list contains exactly one target atom and $0$ to $M-1$ source atoms.  A bond is formed by connecting a source atom in one block to a target atom in another. A graphical illustration of 4 exemplary blocks is shown in Fig.~\ref{mol-block}. Following~\citet{bengio2021flow}, we restrict the maximum number of blocks in a molecular graph to be $H=8$. Each state $s\in \mathcal{S}$ is represented as a $H \times 2$ matrix. The generative process begins from the initial state $s_0=-1^{H\times 2}$,  representing an empty graph, and ends at $s_f=105^{H\times 2}$. For any state $s_h \in \mathcal{S}_h$:
\begin{enumerate}
    \item The first column contains $h$ integers drawn from $\{0, \ldots, N - 1\}$, representing the indices of the building blocks present in the molecule. 
    \item The second column contains $h - 1$ integers drawn from $\{0, \ldots, H \times M - 1\}$, representing the indices of bonding sites that encode the connectivity structures. Specifically, if an integer is equal to $ (i - 1) \times H + j$, this indicates that the target atom of the newly added block will be connected to the $j$-th source atom of the $i$-th block.
\end{enumerate}
For any $s\in \mathcal{S}\setminus (s_f,s_0)$, a transition $(s \rightarrow s')$ corresponds to a two-level action: 
\begin{enumerate}
    \item Selecting a building block to add, represented by an integer in $\{0, \ldots, N - 1\}$.
    \item Selecting a bonding site represented by an integer $k \in \{0, \ldots, H \times M - 1\}$. 
\end{enumerate} The generative process stops when we make the transition $(x\rightarrow s_f)$, when no available source atoms remain for forming bonds, or when the state reaches the limit of $H$ blocks. According to the definition of the generative process, the GFlowNet DAG $\mathcal{G}$ is \textbf{not} \emph{graded}, meaning that any state except $s_f$ can be a terminating state $x \in \mathcal{X}$. Since there may be multiple types of bonds between a given pair of blocks, the size of the terminating state space is greater than $\frac{105!}{(105-8)!}\approx 10^{16}$. The Octanol–Water Partition Coefficient (LogP) and c-Jun N-terminal Kinase 3 (JNK3) scores provided in the \emph{pyTDC} package are directly used as the reward functions $R(x)$. Since actions in this generative process involve two levels, we use separate neural networks to represent the policy for each action component. The log-probability of an action is computed as the sum of the log-probabilities produced by the corresponding networks for the two action levels. Finally, following~\citet{bengio2021flow}, we choose parameterized $\pi_B$ and use a batch size of $4$ to emulate scenarios where querying the real-world molecule oracle is computationally expensive.

    \begin{figure}[t]
  \centering
  \begin{minipage}[t]{0.3\linewidth}
  \centering
\includegraphics[width=1.0\textwidth]{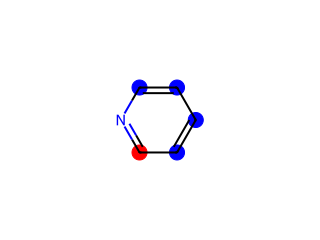}
 \end{minipage}\hspace{-11mm}
   \begin{minipage}[t]{0.3\linewidth}
  \centering
\includegraphics[width=1.0\textwidth]{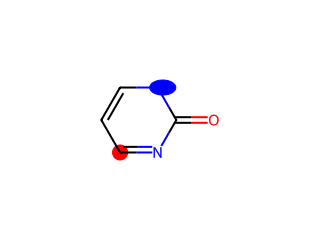}
 \end{minipage}\hspace{-11mm}
    \begin{minipage}[t]{0.3\linewidth}
  \centering
\includegraphics[width=1.0\textwidth]{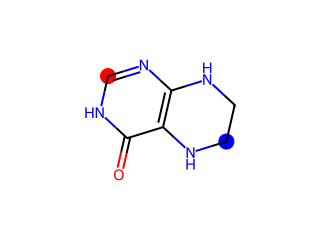}
 \end{minipage}\hspace{-11mm}
    \begin{minipage}[t]{0.3\linewidth}
  \centering
\includegraphics[width=1.0\textwidth]{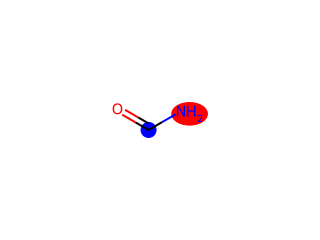}
 \end{minipage}\vspace{-0mm}
 \caption{Four exemplary building blocks. Red and blue dots indicate the available atoms for bonding, where each bonding edge originates from a red dot and terminates at a blue dot.}\label{mol-block}
 \end{figure}
    \begin{figure}[t]
  \centering
  \begin{minipage}[t]{0.33\linewidth}
  \centering
\includegraphics[width=1.0\textwidth]{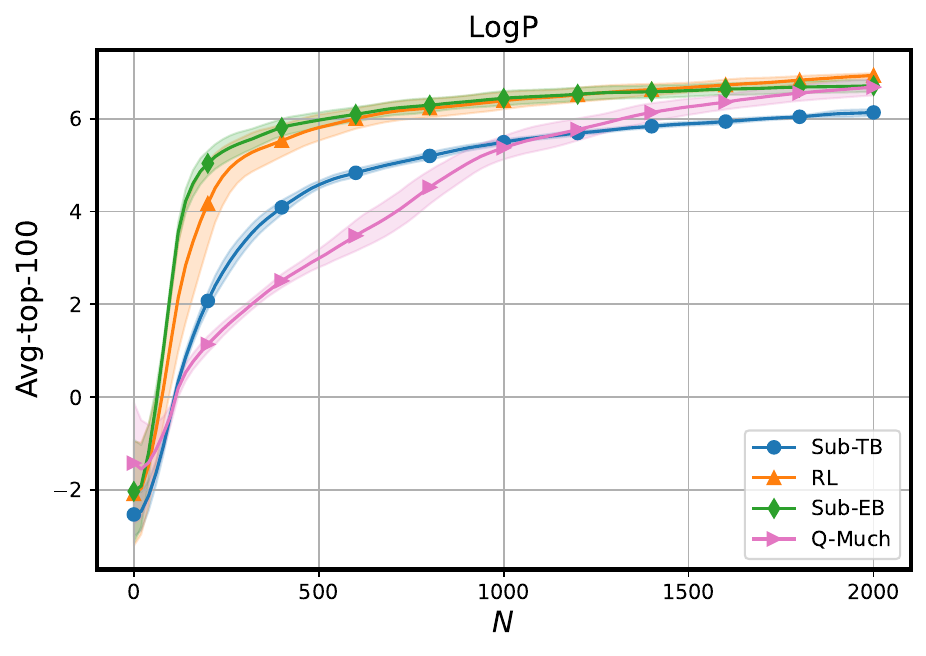}
 \end{minipage}\hspace{8mm}
   \begin{minipage}[t]{0.33\linewidth}
  \centering
\includegraphics[width=1.0\textwidth]{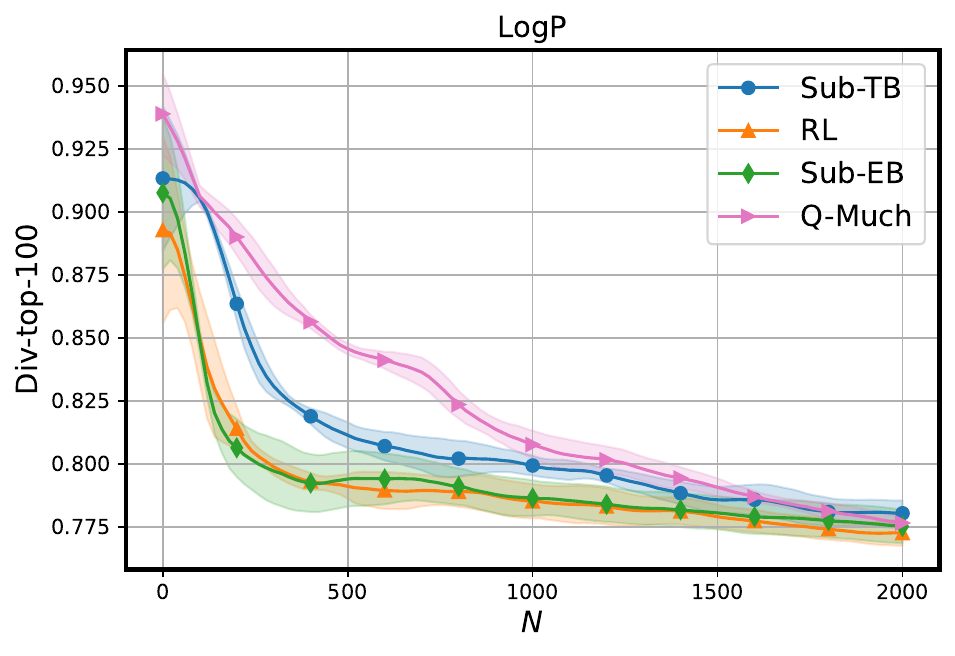}
 \end{minipage}
 \caption{Plots of the mean and standard deviation values of
average reward (left) and diversity (right,  Tanimoto diversity of molecular Morgan fingerprints) of top 100 unique molecules for the LogP task, based on five
randomly started runs for each method.}\label{mol-logp}
 \end{figure}
          \begin{figure}[t!]
  \centering
  \begin{minipage}[t]{0.33\linewidth}
  \centering
\includegraphics[width=1.0\textwidth]{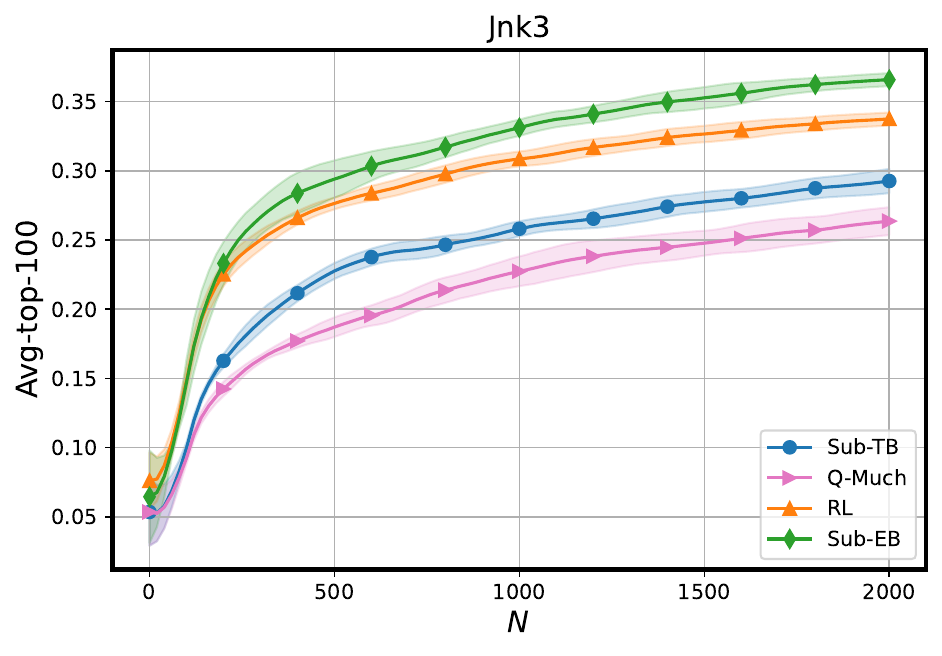}
 \end{minipage}\hspace{8mm}
   \begin{minipage}[t]{0.33\linewidth}
  \centering
\includegraphics[width=1.0\textwidth]{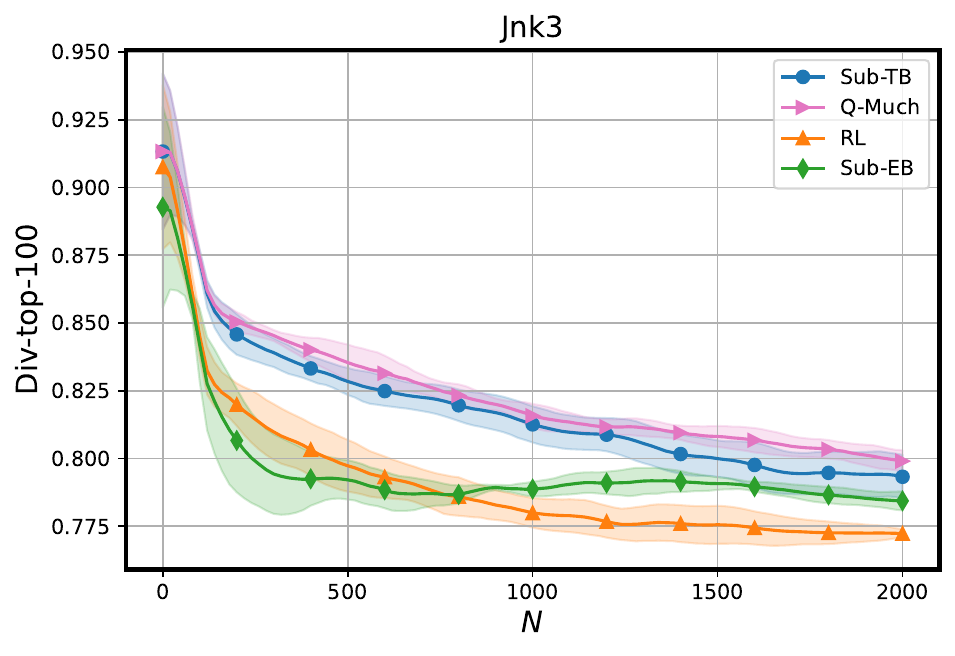}
 \end{minipage}
 \caption{Plots of the mean and standard deviation values of
average reward (left) and diversity (right,  Tanimoto diversity of molecular Morgan fingerprints) of top 100 unique molecules for the JNK3 task, based on five
randomly started runs for each method.}\label{mol-jnk3}
 \end{figure}
 
The experimental results are presented in Figs.~\ref{mol-logp} and~\ref{mol-jnk3}. We report
the average reward and diversity of the top 100 unique graphs discovered during the training process. The diversity of a set of molecules is computed as the average pairwise dissimilarity based on Tanimoto distance of Morgan fingerprints via the \emph{pyTDC} package. For the LogP task, RL, Sub-EB, and Q-Much achieve similar average rewards, all outperforming Sub-TB. Moreover, RL and Sub-EB converge much faster than Sub-TB and Q-Much. All four methods exhibit comparable diversity. For the JNK3 task, Sub-EB achieves the highest average reward and demonstrates the fastest convergence among all methods. Its diversity is comparable to that of Sub-TB and higher than that of RL. Although Q-Much achieves the highest diversity, its average reward is almost the lowest. Overall, these results indicate that Sub-EB achieves the best performance in terms of average reward and convergence rate while maintaining reasonable diversity, confirming its effectiveness for large-scale tasks.

\subsection{Runtime Report}~\label{time-table}
\begin{table*}[h]
\centering
\begin{minipage}{0.30\textwidth}
\centering
\begin{tabular}{lcc}
\hline
Method & Mean & Std \\
\hline
Sub-TB & 1h 08m & 1.09 \\
Sub-TB-P & 1h 14m & 10.36 \\
\textbf{RL}     & \textbf{1h 03m} & 0.92 \\
RL-P     & 1h 27m & 0.57 \\
RL-MLE     & 1h 09m & 1.78 \\
Sub-EB & 1h 17m & 2.31 \\
Sub-EB-P & 1h 17m & 1.49 \\
\hline
\end{tabular}
\end{minipage}
\hspace{0.03\textwidth}
\begin{minipage}{0.30\textwidth}
\centering
\begin{tabular}{lcc}
\hline
Method & Mean & Std  \\
\hline
Sub-TB & 0h 52m & 0.27 \\
Sub-TB-P & 0h 58m & 7.92 \\
RL     & \textbf{0h 44m} & 1.43 \\
RL-P     & 1h 06m & 2.21 \\
RL-MLE     & 0h 58m & 3.27 \\
Sub-EB & 0h 54m & 2.66 \\
Sub-EB-P & 1h 11m & 0.82 \\
\hline
\end{tabular}
\end{minipage}
\hspace{0.03\textwidth}
\begin{minipage}{0.30\textwidth}
\centering
\begin{tabular}{lcc}
\hline
Method & Mean & Std \\
\hline
Sub-TB & 1h 59m & 1.53 \\
Sub-TB-P & 2h 19m & 4.40 \\
RL     & \textbf{1h 27m} & 7.37 \\
RL-P     & 2h 03m & 10.68 \\
RL-MLE     & 2h 28m & 9.58 \\
Sub-EB & 2h 04m & 4.11 \\
Sub-EB-P & 2h 41m & 15.46 \\
\hline
\end{tabular}
\end{minipage}
\caption{The mean and standard deviation (std) of the total runtimes for each method on the $64\times 64\times 64$ (left), $128\times 128\times 128$ (middle), and $256\times 256\times 256$ (right) grids. Here, ‘h’ denotes hours and ‘m’ denotes minutes, and all std values are reported in minutes.}
\end{table*}

\begin{table*}[h]
\centering
\begin{minipage}{0.45\textwidth}
\centering
\begin{tabular}{lcc}
\hline
Method & Mean & Std \\
\hline
Sub-TB   & 0h 05m          & 0.21 \\
Sub-TB-B   & 0h 16m          & 0.67 \\
RL       & \textbf{0h 04m} & 0.06 \\
Sub-EB   & \textbf{0h 04m} & 0.34 \\
Sub-EB-B   & 0h 17m          & 1.63 \\
\hline
\end{tabular}
\end{minipage}
\hfill
\begin{minipage}{0.45\textwidth}
\centering
\begin{tabular}{lcc}
\hline
Method & Mean & Std \\
\hline
Sub-TB   & \textbf{0h 04m} & 1.10 \\
Sub-TB-B   & 0h 12m          & 0.57 \\
RL       & \textbf{0h 04m} & 0.12 \\
Sub-EB   & \textbf{0h 04m} & 0.06 \\
Sub-EB-B   & 0h 13m          & 0.71 \\
\hline
\end{tabular}
\end{minipage}
\caption{The mean and standard deviation (std) of the total runtimes for each method on the SIX6 and QM9 datasets. Here, ‘h’ denotes hours and ‘m’ denotes minutes, and all std values are reported in minutes.}
\end{table*}

\begin{table*}[h]
\centering
\begin{minipage}{0.45\textwidth}
\centering
\begin{tabular}{lcc}
\hline
Method & Mean Time & Std (min) \\
\hline
Sub-TB   & 0h 19m          & 2.05 \\
Sub-TB-B   & 0h 27m          & 0.42 \\
\textbf{RL}       & \textbf{0h 18m} & 0.10 \\
\textbf{Sub-EB}   & \textbf{0h 18m} & 0.15 \\
Sub-EB-B   & 0h 30m          & 1.09 \\
\hline
\end{tabular}
\end{minipage}
\hfill
\begin{minipage}{0.45\textwidth}
\centering
\begin{tabular}{lcc}
\hline
Method & Mean Time & Std (min) \\
\hline
\textbf{Sub-TB}   & \textbf{3h 30m} & 10.23 \\
Sub-TB-B   & 3h 46m          & 11.59 \\
RL       & 3h 42m          & 13.78 \\
Sub-EB   & 3h 41m          & 13.97 \\
Sub-EB-B   & 3h 58m          & 7.57 \\
\hline
\end{tabular}
\end{minipage}
\caption{The mean and standard deviation (std) of the total runtimes for each method on the PHO4 and sEH datasets. Here, ‘h’ denotes hours and ‘m’ denotes minutes, and all std values are reported in minutes.}
\end{table*}

\begin{table*}[h]
\centering
\begin{minipage}{0.32\textwidth}
\centering
\begin{tabular}{lcc}
\hline
Method &  Mean & Std \\
\hline
Sub-TB   & 0h 57m          & 1.11 \\
Sub-TB-B   & 1h 23m          & 4.27 \\
RL       & 0h 56m          & 0.46 \\
\textbf{Sub-EB} & \textbf{0h 49m} & 0.87 \\
Sub-EB-B   & 1h 07m          & 1.43 \\
Q-Much & 0h 59m  &3.57 \\
\hline
\end{tabular}
\end{minipage}
\hfill
\begin{minipage}{0.32\textwidth}
\centering
\begin{tabular}{lcc}
\hline
Method & Mean & Std \\
\hline
\textbf{Sub-TB} & \textbf{0h 10m} & 0.80 \\
Sub-TB-B   & 0h 36m          & 1.82 \\
RL       & 0h 11m          & 3.58 \\
\textbf{Sub-EB} & \textbf{0h 10m} & 0.91 \\
Sub-EB-B   & 0h 31m          & 2.27 \\
Q-Much & 0h 12m  & 1.34\\
\hline
\end{tabular}
\end{minipage}
\hfill
\begin{minipage}{0.32\textwidth}
\centering
\begin{tabular}{lcc}
\hline
Method & Mean & Std \\
\hline
Sub-TB   & 0h 16m          & 1.28 \\
Sub-TB-B   & 0h 42m          & 3.82 \\
\textbf{RL}       & \textbf{0h 12m} & 1.79 \\
Sub-EB   & 0h 14m          & 0.28 \\
Sub-EB-B   & 0h 39m          & 1.13 \\
Q-Much  & 0h 19 m&  0.62\\ 

\hline
\end{tabular}
\end{minipage}
\caption{The mean and standard deviation (std) of the total runtimes for each method on the 5-node, 10-node and 15-node BN tasks. Here, ‘h’ denotes hours and ‘m’ denotes minutes, and all std values are reported in minutes. The runtimes for the 5-node cases are the largest due to the explicit computation of $D_{TV}$ for performance comparison during training.}
\end{table*}

\begin{table*}[h]
\centering
\begin{minipage}{0.45\textwidth}
\centering
\begin{tabular}{lcc}
\hline
Method & Mean Time & Std (min) \\
\hline
\textbf{RL}        & \textbf{0h 9m}  & 0.36 \\
RLEval    & 0h 13m & 0.17 \\
Sub-TB    & 0h 13m & 0.93 \\
Q-Much   & 0h 19 m & 3.50 \\
\hline
\end{tabular}
\end{minipage}
\hfill
\begin{minipage}{0.45\textwidth}
\centering
\begin{tabular}{lcc}
\hline
Method & Mean Time & Std (min) \\
\hline
RL        & 0h 12m & 1.33 \\
\textbf{RLEval}    & \textbf{0h 9m}  & 0.08 \\
\textbf{Sub-TB}  & \textbf{0h 9m}  & 0.29 \\
Q-Much & 0 h 10 m & 0.26\\
\hline
\end{tabular}
\end{minipage}
\caption{The mean and standard deviation (std) of the total runtimes for each method on the LogP and JNK3 tasks. Here, ‘h’ denotes hours and ‘m’ denotes minutes, and all std values are reported in minutes.}
\end{table*}

\end{document}